\documentclass{article}

\usepackage[preprint]{neurips_2026}

\usepackage[utf8]{inputenc} 
\usepackage[T1]{fontenc}    
\usepackage{hyperref}       
\usepackage{url}            
\usepackage{booktabs}       
\usepackage{amsfonts}       
\usepackage{nicefrac}       
\usepackage{microtype}      
\usepackage[table,dvipsnames]{xcolor}         

\usepackage{mathrsfs}
\usepackage{hyperref}
\usepackage{url}
\usepackage{amsthm}
\usepackage{algorithm}
\usepackage{algpseudocode}

\usepackage{graphicx}  

\usepackage{booktabs}  

\usepackage{multirow}  
\usepackage{array}     
\usepackage{pifont}    
\usepackage{caption}   
\usepackage{adjustbox}
\usepackage{subcaption}
\usepackage{makecell}
\usepackage{hyperref}
\usepackage{booktabs}
\usepackage{tabularx}
\usepackage{enumerate}
\usepackage{amssymb,mathtools}

\newtheorem{theorem}{Theorem}
\newtheorem{lemma}{Lemma}
\newtheorem{proposition}{Proposition}
\newtheorem{corollary}{Corollary}
\theoremstyle{definition}
\newtheorem{assumption}{Assumption}

\let\vec=\boldsymbol

\usepackage{tikz}
\usetikzlibrary{matrix,calc,positioning,fit,backgrounds,shadows.blur,arrows.meta}
\usepackage{lipsum}
\usetikzlibrary{matrix,calc,positioning,fit,backgrounds,shadows.blur,arrows.meta}
\usepackage{amsmath}
\usepackage[capitalize,noabbrev]{cleveref}

\definecolor{sensorcolor}{RGB}{65,105,225}
\definecolor{framecolor}{RGB}{220,20,60}
\definecolor{shapecolor}{RGB}{34,139,34}
\definecolor{navcolor}{RGB}{255,140,0}
\definecolor{band}{RGB}{246,246,246}
\definecolor{Crimson}{rgb}{0.86, 0.08, 0.24}

\tikzset{
  mod/.style   ={rectangle,rounded corners=2pt,draw,very thick,align=center,
                 minimum width=4.2cm,minimum height=2.0cm,fill=#1!14},
  nav/.style   ={rectangle,rounded corners=2pt,draw,very thick,align=center,
                 minimum width=14.0cm,minimum height=2.0cm,fill=#1!14},
  io/.style    ={circle,draw,thick,inner sep=0pt,minimum size=2.8mm,fill=white},
  qarr/.style  ={thick,dashed,-{Stealth[length=2.2mm]}},
  rarr/.style  ={thick,-{Stealth[length=2.2mm]}},
  chip/.style  ={rectangle,rounded corners=1pt,draw,thin,inner sep=1pt,font=\tiny\bfseries,fill=white},
  dsicon/.style={rectangle,draw,thin,fill=white,minimum width=4.5mm,minimum height=4.5mm},
  badge/.style ={circle,draw,thin,fill=white,minimum size=5mm,font=\tiny\bfseries},
  tiny/.style  ={font=\scriptsize},
  xtiny/.style ={font=\tiny}
}

\newcommand{\Ham}{\mathcal{H}}
\newcommand{\Pos}{q}
\newcommand{\Mom}{p}
\newcommand{\Mass}{\mathrm{M}}
\newcommand{\cvar}[1]{\mathrm{CVaR}_{#1}}

\title{Learning Material-Aware Hamiltonian Risk Fields for Safe Navigation}

\author{%
Aditya Sai Ellendula$^{1}$, Yi Wang$^{2}$, Chandrajit Bajaj$^{1,2}$ \\
$^{1}$ Department of Computer Science, University of Texas at Austin, Austin, TX 78712, USA \\
$^{2}$ Oden Institute, University of Texas at Austin, Austin, TX 78712, USA \\
\texttt{\{adityase,panzer.wy\}@utexas.edu, bajaj@cs.utexas.edu} \\
}

\begin{document}

\maketitle

\begin{abstract}
Risk-aware navigation should be selective: a policy should expose evasive degrees of freedom only when the local scene admits a lower-risk feasible maneuver, and suppress them when no safer alternative exists. We show that adding one context-energy term to a port-Hamiltonian navigation policy produces a learned force channel with exactly this falsifiable signature. When the local risk field contains a feasible lower-risk direction, the induced context force activates toward it; when the apparent escape is blocked or not yet available, a route-aware gate suppresses lateral force rather than hallucinating an unsafe maneuver. A CVaR tail-risk objective focuses gradient updates on rare but consequential risk transitions. We validate the selectivity signature across four settings. In the primary delayed-required-escape benchmark, route-aware CVaR reduces premature force activation from 0.950 to 0.180 versus DWA while raising success from 0.480 to 0.810 with zero replans. On real off-road terrain (RELLIS-3D), route-aware enrichment achieves correct activation rate 0.837 and false activation rate 0.114, compared to 0.378/0.752 for scalar risk gradients. On static semantic maps (DFC2018), enrichment reduces catastrophic failure from 0.60 to 0.10 and oscillation by 90.7\% while preserving path efficiency. In highway traffic, collisions drop from 100\% to 0\% when a lane escape is feasible; when no escape exists, the policy suppresses the lateral maneuver. The selectivity property follows from the gradient structure of the context energy rather than from training-time tuning.
\end{abstract}

\section{Introduction}
\label{sec:intro}

Risk-aware navigation has a selectivity gap. When a lower-risk feasible
maneuver exists, the robot should expose the evasive degree of freedom and
take it; when the apparent escape is blocked, corrupted, or not yet open, it
should suppress that same degree of freedom rather than hallucinate a detour.
Classical planners~\citep{hart1968formal,karaman2011sampling},
MPC/MPPI-style samplers~\citep{williams2017mppi}, reactive controllers, and
learned policies~\citep{schulman2017proximalpolicyoptimizationalgorithms,haarnoja2018softactorcriticoffpolicymaximum}
all address parts of this problem, but none make this
activation/suppression asymmetry an explicit structural property: risk costs
tend either to pull everywhere a gradient exists, causing false maneuvers, or
to be damped enough that the system misses real escape opportunities.

The issue is acute in off-road terrain and local traffic. A patch that looks
clear from geometry alone may be wet clay; a corridor that looks low-risk in
the semantic map may be physically blocked; a lane change may be useful behind
a slow leader but unsafe when the adjacent lane is boxed in. The decision
problem is therefore not just to reweight a scalar cost. The agent must decide
which force channels should be active under the current local context, and
which should remain latent.

Our key observation is that a single additive Hamiltonian energy term can make
this selectivity structural. Adding $\Ham_{\rm ctx}$ to a geometry-only
port-Hamiltonian policy simultaneously creates (i) a context force channel
in the momentum update, (ii) a parameter subspace and sensitivity stream for
learning when that channel matters, and (iii) a route by which CVaR gradients
from rare failures act directly on the closed-loop field.
Proposition~\ref{prop:enrichment} formalizes this coupling: enrichment is not merely a
new cost term, but a coordinated enlargement of the force field, learning
pathway, and update timescale.

The resulting policy remains a zero-replan local field. New sensory
information changes the cotangent update, not an external planner or critic.
A local affordance gate tests only the currently sensed patch: when a feasible
lower-risk primitive exists, the soft-risk force bends the field toward it;
when the candidate is blocked or insufficiently better, the soft channel is
suppressed and the rollout stays near the geometry-only policy.

\begin{figure*}[t]
    \centering
    \includegraphics[width=0.98\textwidth]{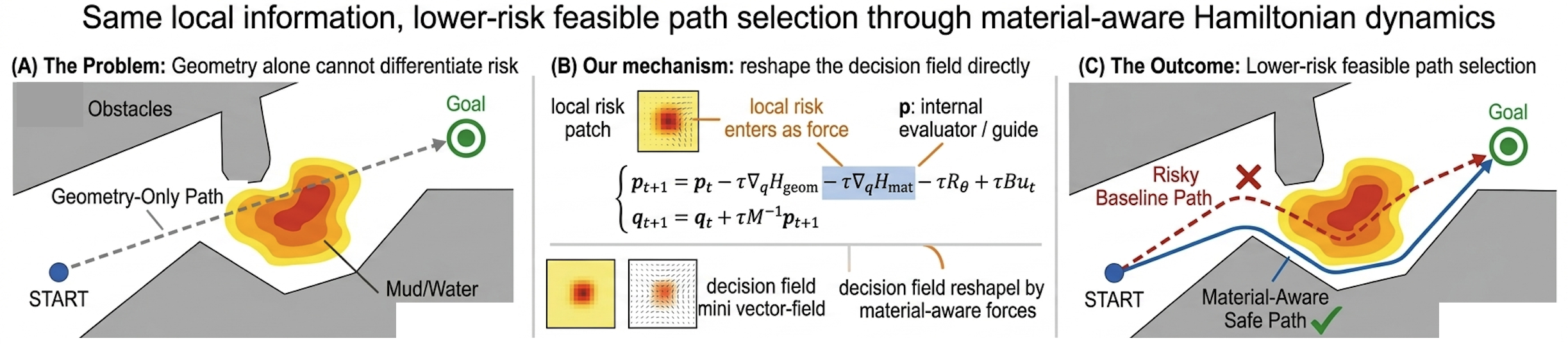}
    \caption{
    \textbf{Selective reshaping of the decision field.}
    \textbf{(A)} Geometrically feasible maneuvers can differ in material
    risk. \textbf{(B)} Adding
    \( -\tau \nabla_{\Pos} \Ham_{\mathrm{ctx}} \) to the cotangent update
    creates a context-force channel. \textbf{(C)} The channel bends toward
    a safer lane when one is feasible, but remains negligible when escape is
    boxed in. Sec.~\ref{sec:experiments} measures this activation/suppression
    signature directly.
    }
    \label{fig:teaser}
\end{figure*}

Prior Hamiltonian and port-Hamiltonian learners~\citep{greydanus2019hamiltonian,desai2021porthamiltonian,ellendula2025grlsnam}
provide structured vector fields, but their geometry-only policy is blind to
material/context risk. We extend that policy with soft-risk and hard-hazard
force channels and train with CVaR to emphasize rare consequential
transitions~\citep{chow2015risk}. The primary head-to-head benchmark is a
delayed local escape: the lateral maneuver is blocked early, necessary later,
and must be timed without global replanning.

Our contributions are:
\begin{itemize}
  \item \textbf{Selective risk-field enrichment.} One additive
        context-energy term creates a force channel that activates
        when the local scene admits a lower-risk feasible maneuver and
        suppresses when it does not. This turns risk adaptation into a
        context-conditioned change in the closed-loop field, rather
        than a uniform reweighting of a scalar cost.

  \item \textbf{A learnable energy enrichment principle.} We prove
        (Proposition~\ref{prop:enrichment}) that, within the class of
        port-Hamiltonian learners considered here, a new additive
        energy term induces a new force channel, parameter subspace,
        sensitivity stream, and update timescale. The context-enriched
        enlargement is structurally consistent rather than ad hoc.

  \item \textbf{Falsifiable evaluation across four settings.} We lead with a
        delayed-required-escape benchmark where the method reduces premature
        activation while increasing success with zero replans, then measure
        spatial selectivity (CAR/FAR/SR), temporal selectivity (false
        pre-activation, suppress rate, reaction delay), and empirical
        tail-risk reduction (violation CVaR) across RELLIS-3D, RELLIS-Dyn,
        DFC2018, and highway-env.
\end{itemize}

The central message of the paper is not that the learner becomes
risk-aware. It is that the learner becomes risk-aware
\emph{selectively}: reshaping the decision field exactly when the
geometry of the situation permits a safer alternative, exposing only the
useful degrees of freedom, and leaving the geometry-only behavior unchanged otherwise.

\section{Related Work}
\label{sec:related}

\paragraph{Structured dynamics and Hamiltonian learning.}
Hamiltonian and port-Hamiltonian learning provide the closest mechanistic
precedent for our view of policy as a structured phase-space flow.
Hamiltonian neural networks learn a scalar energy whose symplectic
gradient defines the vector field~\citep{greydanus2019hamiltonian};
SymODEN extends this idea to controlled systems~\citep{zhong2020symplectic};
Lagrangian networks and port-Hamiltonian neural networks add coordinate
or dissipation structure for physical system identification and
control~\citep{cranmer2020lagrangian,desai2021porthamiltonian}.
Classical port-Hamiltonian theory and energy-shaping control likewise
show how damping, interconnection, and external ports can be used to
shape closed-loop behavior~\citep{vanderschaft2014port}.
These works motivate our parameterization, but their usual target is
dynamics identification or stabilizing control from full-state data.
In contrast, we use the Hamiltonian structure as a navigation policy
class and add material-risk terms that act directly as learned force
channels under local sensing.

\paragraph{Safe and risk-sensitive reinforcement learning.}
Safe RL typically introduces risk as an objective- or constraint-level
quantity. CVaR optimization provides the variational foundation for
tail-risk objectives~\citep{rockafellar2000optimization}; CVaR MDPs and
risk-constrained actor-critic methods optimize tail or percentile risk
over trajectories~\citep{chow2015risk,chow2017risk}; and constrained
policy optimization enforces CMDP-style cost constraints during policy
search~\citep{achiam2017constrained}. Distributional RL instead models
the return distribution and can apply risk distortions to that
distribution~\citep{bellemare2017distributional,dabney2018implicit}.
Our objective uses the same tail-risk lineage, but the object being
updated is different: the CVaR signal trains coefficients of a structured
decision field, so risk changes the generated forces rather than only a
generic policy parameter vector, Lagrange multiplier, or return statistic.

\paragraph{Navigation, local planning, and safety filters.}
Classical navigation methods adapt paths or local actions through
search, sampling, or hand-designed potentials. A* and sampling-based
planners provide strong geometric references when map access is
available~\citep{hart1968formal,karaman2011sampling}; artificial
potential fields, the Dynamic Window Approach, MPPI, and CBF-QP filters
produce reactive local behavior under limited information
~\citep{khatib1986real,fox1997dynamic,williams2017mppi,ames2016control}.
These systems are important baselines because they also operate through
local costs or forces, but their risk terms are usually hand-designed,
externally scored, or enforced as filters. Our method keeps the local
closed-loop character while learning how semantic/material risk enters
the phase-space update.

\paragraph{Material-aware perception and traversability.}
Terrain and traversability methods often learn semantic classes,
freespace, or cost maps that are then consumed by a planner or
controller. Off-road datasets such as RUGD and RELLIS-3D support this
line of work~\citep{rugd2019,rellis2020}, and systems such as TerraPN
learn terrain costs for downstream local planning~\citep{sathyamoorthy2022terrapn}.
The DFC2018 Houston data provide a complementary remote-sensing setting
with rich land-cover cues~\citep{dfc2018}. Our DFC experiments use these
material labels to construct risk fields, but the contribution is not a
new semantic mapper: it is a mechanism for turning material risk into
internal energy and force terms that reshape the decision field.

\paragraph{Continual and multi-timescale adaptation.}
A final related thread studies adaptation over multiple timescales:
actor-critic methods separate critic and actor updates, continual
learning protects or reuses knowledge across tasks, and rapid motor
adaptation conditions policies on recent history
~\citep{kirkpatrick2017ewc,rusu2016progressive,kumar2021rma}.
Our three loops have a similar motivation--fast local correction,
episodic tail-risk updates, and slower curriculum advancement--but the
state being adapted is explicitly structured: local corrections and
episodic gradients modify Hamiltonian coefficients and material force
channels rather than an unconstrained black-box policy.

Prior work therefore covers each ingredient in isolation: structured
energy-based dynamics, risk-sensitive objectives, local planners and
safety filters, traversability maps, and multi-timescale adaptation. Our
contribution is their combination at the point where risk enters the
closed loop: a material-energy enrichment that reshapes the Hamiltonian
decision field itself.

\section{Method}
\label{sec:method}

We use two model names to avoid temporal-stage ambiguity. The
\emph{geometry-only policy} is the port-Hamiltonian navigator trained
without context risk;
the \emph{context-enriched field} is the same update after adding
one context-energy term and, when used, the local affordance gate.
Thus the enrichment is an architectural and training extension, not
the second temporal stage of an online rollout. The added term introduces
a learned force channel whose gradient reshapes the closed-loop decision
field only when the local scene admits a lower-risk feasible maneuver.  At each step the agent observes goal
displacement $\vec c_g-\vec c_t$, obstacle features within radius $\hat d$,
and a $P\times P$ local context patch $P_t$ containing a smoothed soft-risk
field $\tilde r(\cdot)\in[0,1]$ and a signed-distance field $\phi(\cdot)$ to
hard hazards.  These fields are bilinearly resampled at the current position
during integration.  All baselines and ablations use the same local-information
contract.

\subsection{Shared Hamiltonian update}
\label{sec:method:skeleton}

The geometry-only policy and the context-enriched field share the same phase-space update; the only dynamical
addition is the boldfaced context-force term in the momentum equation:
\begin{equation}
\renewcommand{\arraystretch}{1.5}
\begin{array}{ll}
\textbf{Geometry-only policy} & \textbf{Ctx-enriched} \\[2pt]
\Mom_{t+1} = \Mom_t
  - \tau\nabla_\Pos\Ham_{\rm geom}
  - \tau R_\theta(\Mom_t)
&
\Mom_{t+1} = \Mom_t
  - \tau\nabla_\Pos\Ham_{\rm geom}
  \;\boldsymbol{-\tau\nabla_\Pos\Ham_{\rm ctx}}
  - \tau R_\theta(\Mom_t) \\[4pt]
\multicolumn{2}{l}{\Pos_{t+1} = \Pos_t + \tau\Mass^{-1}\Mom_{t+1}
  \quad\text{(identical in both fields).}}
\end{array}
\label{eq:method:skeleton}
\end{equation}
New sensory information therefore enters through the cotangent variable
$\Mom$, not through a separate critic or global replanner.  The decision field
is the policy: changing the stored energy changes the vector field that
generates the next motion.

\subsection{Context energy and route-aware force channel}
\label{sec:method:energy}

The context term is additive, so the context-enriched field preserves the geometry-only policy
when its context coefficients vanish:
\begin{equation}
\Ham_\theta(\Pos,\Mom)
=
\underbrace{
  \tfrac{1}{2}\Mom^\top\Mass^{-1}\Mom
  + \beta D_{\mathcal{G}}(\Pos,\vec c_g)
  + \textstyle\sum_j\alpha_j b_{\rm IPC}(d_j)
}_{\Ham_{\rm kin}+\Ham_{\rm geom}\;\text{(geometry-only)}}
+
\underbrace{
  \lambda_s\tilde r(\vec c)
  + \lambda_h b_{\rm sp}(\phi(\vec c))
}_{\Ham_{\rm ctx}\;\text{(Ctx-enriched only)}}.
\label{eq:method:energy}
\end{equation}
Here $b_{\rm IPC}$~\citep{li2020ipc} is the incremental-potential contact
barrier and $b_{\rm sp}(\phi)=k^{-1}\log(1+e^{k(\hat d_\phi-\phi)})$ is a
softplus-relaxed inverse SDF.  Differentiating $\Ham_{\rm ctx}$ yields two
force channels:
\begin{equation}
F_{\rm ctx}(\vec c)
=
\underbrace{-\lambda_s\nabla\tilde r(\vec c)}_{F_{\rm soft}:\;\text{lateral risk deflection}}
+
\underbrace{-\lambda_h b_{\rm sp}'(\phi(\vec c))\nabla\phi(\vec c)}_{F_{\rm hard}:\;\text{hazard repulsion}},
\qquad
b_{\rm sp}'(\phi)=-\sigma(k(\hat d_\phi-\phi)).
\label{eq:method:force}
\end{equation}
$F_{\rm soft}$ expresses preferences among feasible maneuvers, while
$F_{\rm hard}$ is a differentiable penalty against boundary contact. It is not
a control-barrier certificate; throughout the paper, violation metrics should
be read as empirical tail-risk reduction under finite rollouts. Risk is
therefore not a post-hoc cost, but a force that reshapes the local closed-loop
dynamics.

\begin{figure}[t]
  \centering
  \includegraphics[width=0.95\linewidth]{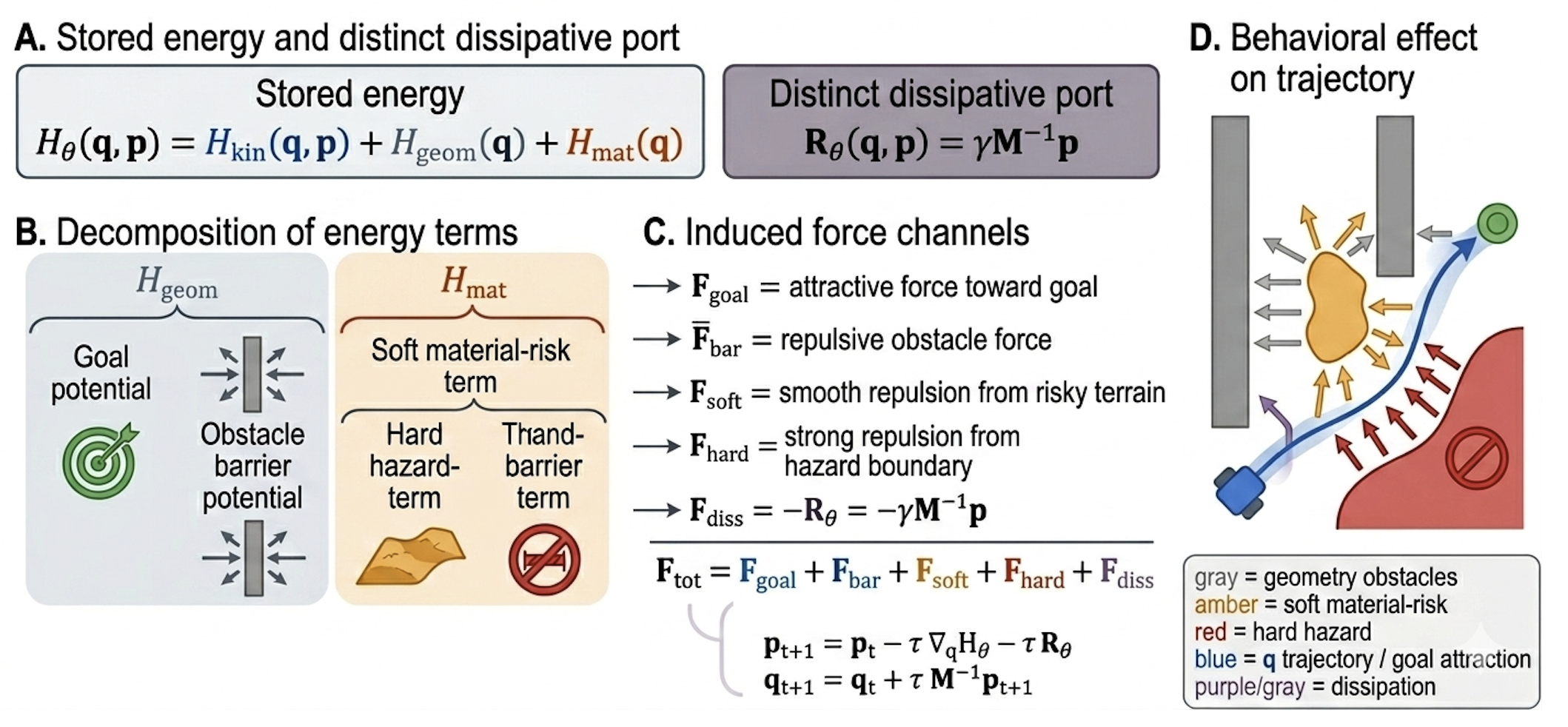}
  \caption{\textbf{Factored stored energy and induced force channels.}
  $\Ham_\theta$ separates kinetic, geometric, dissipative, and context
  terms. The context term creates a soft-risk deflection channel and a
  hard-hazard repulsion channel. The route-aware gate lets the soft channel
  shift the field only when a feasible lower-risk maneuver exists; otherwise
  the rollout stays near the geometry-only policy.}
  \label{fig:factored}
\end{figure}

The scalar soft-risk force can overreact when a low-risk-looking region is
blocked by a fence, berm, vehicle, or other non-traversable boundary.  We
therefore gate only the soft-risk channel using a local affordance test:
\begin{equation}
F_{\rm ctx}^{\rm ra}(\vec c;P_t)
=
m_{\rm feas}(\vec c,z_t)\bigl(-\lambda_s\nabla\tilde r(\vec c)\bigr)
-
\lambda_h b_{\rm sp}'(\phi(\vec c))\nabla\phi(\vec c),
\label{eq:method:route_aware_force}
\end{equation}
where $z_t$ denotes the finite set of $K$ short-horizon primitives sampled
inside the current BEV patch.  Let $k^\star$ be the feasible primitive with
lowest soft-risk exposure, and let $k_{\rm geo}$ be the primitive closest to
the geometry-only policy direction.  The gate is
\begin{equation}
m_{\rm feas}(\vec c,z_t)
=
\sigma\!\left(\kappa_R[(R_{k_{\rm geo}}-R_{k^\star})-\rho_R]\right)
\sigma\!\left(\kappa_C(C_{k^\star}-\delta_\phi)\right)
I_{k^\star}.
\label{eq:method:route_gate}
\end{equation}
Here $R_k$ is the primitive's soft-risk integral, $C_k$ its minimum SDF
clearance, and $I_k$ its local traversability indicator.  Thus
$F_{\rm soft}$ activates only when the current patch contains a feasible
candidate that improves risk by margin $\rho_R$; otherwise it is suppressed
without weakening $F_{\rm hard}$.  The gate is not a global replanner: it only
tests local affordance inside the same sensed patch available to all
local-sensing methods. Under semantic or risk-map corruption, the soft channel
therefore has a conservative failure mode: a noisy low-risk patch is not enough
to activate lateral deflection unless a short-horizon primitive also clears the
SDF and improves risk by margin $\rho_R$. This is not a perception guarantee,
but it tends to turn uncertain or inconsistent risk evidence into suppression
rather than an unconstrained evasive force.  Figure~\ref{fig:route_gate_main}
summarizes the gate used in all experiments.  Coefficients $(\lambda_s,\lambda_h)$ are predicted
by sigmoid-bounded CNN heads from $P_t$ fused with the goal token; geometric
heads $(\alpha,\beta,\gamma)$ are frozen from the geometry-only policy, so these coefficients
are the only route by which context alters the field
(Appendix~\ref{app:coeff_pred}).

\begin{figure}[t]
\centering
\fbox{\begin{minipage}{0.94\linewidth}
\small
\textbf{Route-aware soft-channel gate.}
Given the current BEV patch, sample short-horizon primitives $z_t$ and compute
each primitive's soft-risk integral $R_k$, minimum SDF clearance $C_k$, and
traversability flag $I_k$. Choose the lowest-risk feasible primitive $k^\star$
and compare it with the primitive $k_{\rm geo}$ closest to the geometry-only
policy direction. The soft channel opens only if
\[
R_{k_{\rm geo}}-R_{k^\star}>\rho_R,\qquad
C_{k^\star}>\delta_\phi,\qquad I_{k^\star}=1.
\]
If any test fails, $m_{\rm feas}\approx0$: $F_{\rm soft}$ is suppressed while
$F_{\rm hard}$ remains active. The test is local, differentiable through the
sigmoids in Eq.~\eqref{eq:method:route_gate}, and does not call a global planner.
\end{minipage}}
\caption{\textbf{Gate specification in the main method.}
The gate converts a risk-map cue into force activation only when the current
local patch contains a cleared, traversable primitive that improves soft risk
by margin $\rho_R$.}
\label{fig:route_gate_main}
\end{figure}

\subsection{Tail-risk objective}
\label{sec:method:cvar}

Each rollout accumulates
\begin{equation}
J(\theta) =
w_g\|\Pos_T-\Pos_g\|^2
+ w_\ell\textstyle\sum_t\|\Pos_{t+1}-\Pos_t\|
+ w_r\textstyle\sum_t\tilde r(\Pos_t)\|\Pos_{t+1}-\Pos_t\|
+ w_h\textstyle\sum_t\mathbf{1}[\phi(\Pos_t)<\epsilon].
\label{eq:method:J}
\end{equation}
Because the relevant failures are rare, expected-cost training can be dominated
by typical rollouts.  We therefore optimize the empirical
Rockafellar--Uryasev CVaR objective~\citep{rockafellar2000optimization}:
\begin{equation}
\widehat{\cvar{\alpha}}(\theta)
=
\hat\eta
+\frac{1}{(1-\alpha)B}
\sum_{i=1}^{B}\bigl(J^{(i)}(\theta)-\hat\eta\bigr)_+,
\qquad
\hat\eta=\widehat Q_\alpha\!\bigl(\{J^{(i)}\}_{i=1}^B\bigr)\;\text{detached}.
\label{eq:method:cvar_emp}
\end{equation}
With $\alpha=0.95$ and $B=64$, gradient flows through the worst rollouts,
giving a consistent estimator~\citep{hong2014monte}.  The resulting gradient
acts through the sensitivity
$\partial \Pos_{t+1}/\partial(\lambda_s,\lambda_h)$ exactly on rollouts where
the context force can change the outcome.

\subsection{Training hierarchy}
\label{sec:method:loops}

The context-enriched field uses three computational roles.  First, the geometry-only checkpoint is
loaded with $\lambda_s,\lambda_h\equiv0$, so Eq.~\eqref{eq:method:skeleton}
initially reproduces the geometry-only policy.  Second, the episode loop
backpropagates the CVaR loss in Eq.~\eqref{eq:method:cvar_emp} through the
differentiable rollout to update the context encoder and coefficient heads.
Third, a lightweight segment-correction loop updates active geometry/context
coefficients from realized local residuals, while a curriculum loop advances
to harder patches only after validation violation rates fall below a threshold.
The full loop diagram, initialization protocol, and pseudocode are given in
Appendix~\ref{app:ctx_loops_hierarchy}; the main text keeps the algorithmic
contract to the force law, loss, and update hierarchy above.
For the logged $500$-epoch highway runs, the geometry-only policy trained for
$1.49$ hours and the context-enriched policy for $3.27$ hours on one NVIDIA
A100 GPU, for $4.8$ total GPU-hours.

\subsection{Selectivity consequences of context-energy enrichment}
\label{sec:method:enrichment}

The additions above are coupled: adding $\Ham_{\rm ctx}$ creates the force
channel $F_{\rm ctx}=-\nabla_\Pos\Ham_{\rm ctx}$, the coefficient subspace
$(\lambda_s,\lambda_h)$, and the rollout-sensitivity pathway through which
CVaR training acts.  The same structure yields the selectivity consequences
measured in Sec.~\ref{sec:experiments}.

\begin{proposition}[Selectivity consequences of context-energy enrichment]
\label{prop:enrichment}
Let the geometry-only policy and the context-enriched field share the same geometric Hamiltonian, dissipative
port $R_\theta$, mass matrix $\Mass$, and semi-implicit integration rule.  Let
$\Ham_{\rm enr}=\Ham_{\rm geo}+\Ham_{\rm ctx}$ with
$F_{\rm ctx}=-\nabla_\Pos\Ham_{\rm ctx}$.  Under the boundedness and local
Lipschitz assumptions in Appendix~\ref{app:theory}, the following hold over
any finite horizon:
\begin{enumerate}
  \item[\textbf{C1.}] \emph{Geometry-only preservation.} If
  $\sup_t\|F_{\rm ctx}(q_t)\|\le\varepsilon$, then the context-enriched rollout remains
  within $O(\varepsilon)$ of the geometry-only rollout.
  \item[\textbf{C2.}] \emph{No hallucinated escape.} If
  $\sup_t\|P_\perp F_{\rm ctx}(q_t)\|\le\varepsilon_\perp$, then lateral
  deviation from the geometry-only rollout is bounded by $O(\varepsilon_\perp)$.
  \item[\textbf{C3.}] \emph{Selective risk deflection.} If a feasible lateral
  direction has projected risk-gradient margin $\Delta$ and projected
  hard-barrier forces cancel the soft-risk force by at most $\chi$, then
  $\|P_\perp F_{\rm ctx}\|\ge\lambda_s\Delta-\chi$ and one semi-implicit step
  decreases local soft risk relative to the geometry-only policy by
  $O(\tau^2\lambda_s\Delta^2)$ whenever $\lambda_s\Delta>\chi$.
\end{enumerate}
\end{proposition}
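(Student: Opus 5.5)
The plan is to treat the geometry-only and context-enriched rollouts as two runs of the same discrete map, one of them perturbed, and to control their difference with a discrete Grönwall argument. Write the geometry-only step as $(\Pos_{t+1},\Mom_{t+1})=\Phi(\Pos_t,\Mom_t)$, with
\[
\Mom_{t+1}=\Mom_t-\tau\nabla_\Pos\Ham_{\rm geom}(\Pos_t)-\tau R_\theta(\Mom_t),\qquad
\Pos_{t+1}=\Pos_t+\tau\Mass^{-1}\Mom_{t+1}.
\]
The enriched step is $\Phi$ plus the extra momentum increment $\tau F_{\rm ctx}(\Pos_t)$. Set $e_t=(\Pos^{\rm enr}_t-\Pos^{\rm geo}_t,\;\Mom^{\rm enr}_t-\Mom^{\rm geo}_t)$, both rollouts starting from the same state. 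The assumptions of Appendix~\ref{app:theory} let me take $L_g$ as the Lipschitz constant of $\nabla_\Pos\Ham_{\rm geom}$ on the bounded region visited, and $L_R$ as that of $R_\theta$. One step then gives
\[
\|\delta\Mom_{t+1}\|\le(1+\tau L_R)\|\delta\Mom_t\|+\tau L_g\|\delta\Pos_t\|+\tau\|F_{\rm ctx}(\Pos^{\rm enr}_t)\|,
\qquad
\|\delta\Pos_{t+1}\|\le\|\delta\Pos_t\|+\tau\|\Mass^{-1}\|\,\|\delta\Mom_{t+1}\|.
\]
Combining these, $\|e_{t+1}\|\le(1+\tau L)\|e_t\|+c\tau\varepsilon$ for a constant $L$ built from $L_g,L_R,\|\Mass^{-1}\|$. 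Iterating over the finite horizon $T$ yields $\|e_t\|\le c\,\varepsilon\,(e^{L\tau T}-1)/L=O(\varepsilon)$, which is C1.

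For C2 I will repeat the argument after splitting the perturbation as $F_{\rm ctx}=P_\parallel F_{\rm ctx}+P_\perp F_{\rm ctx}$. The main obstacle is that the geometric dynamics couple the two components, so a large parallel perturbation can leak into the lateral error. I will first try to use an assumption I expect the appendix to supply: the linearized geometry-only flow approximately preserves the lateral/parallel splitting, with cross-coupling constant $\eta$, meaning $\|P_\perp\,D\Phi\,P_\parallel\|\le\tau\eta$. Under this assumption I would track the pair $(\|P_\perp e_t\|,\|P_\parallel e_t\|)$ as a two-dimensional linear recursion. That gives $\|P_\perp e_t\|\le C_1\varepsilon_\perp+C_2\eta\,\sup_s\|P_\parallel e_s\|$. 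If no such structural assumption is available, the fallback is to read ``lateral deviation'' relative to the parallel progress, or to assume the parallel component of $F_{\rm ctx}$ is itself small. In that case the bound becomes $O(\varepsilon_\perp)+O(\eta\,\varepsilon_\parallel)$, and I would state this dependence explicitly.

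C3 is a direct computation rather than a stability estimate. Project the force:
\[
P_\perp F_{\rm ctx}=-\lambda_s P_\perp\nabla\tilde r+P_\perp F_{\rm hard}.
\]
The margin hypothesis says $\|\lambda_s P_\perp\nabla\tilde r\|\ge\lambda_s\Delta$. The cancellation hypothesis bounds the opposing hard component by $\chi$. The reverse triangle inequality then gives $\|P_\perp F_{\rm ctx}\|\ge\lambda_s\Delta-\chi$.

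For the risk decrease, take one semi-implicit step from a common state. The position difference is $\Pos^{\rm enr}_{t+1}-\Pos^{\rm geo}_{t+1}=\tau^2\Mass^{-1}F_{\rm ctx}(\Pos_t)$. Taylor-expand $\tilde r$, which is $C^{1,1}$ with constant $L_r$ by the smoothing assumption:
\[
\tilde r(\Pos^{\rm enr}_{t+1})-\tilde r(\Pos^{\rm geo}_{t+1})
=\tau^2\nabla\tilde r^\top\Mass^{-1}F_{\rm ctx}+O(\tau^4).
\]
Restrict to the lateral component, taking $\Mass$ isotropic in the lateral block or using its smallest eigenvalue. The soft part contributes $-\tau^2\lambda_s\nabla\tilde r^\top\Mass^{-1}P_\perp\nabla\tilde r\le-\tau^2\lambda_s m^{-1}\Delta^2$, and the hard part contributes at most $\tau^2\|\nabla\tilde r\|\chi$. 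So the net decrease is of order $\tau^2\Delta(\lambda_s\Delta-\chi)$. This is $O(\tau^2\lambda_s\Delta^2)$ whenever $\lambda_s\Delta>\chi$, at least up to the constant factor for $\chi$ bounded away from $\lambda_s\Delta$. The $O(\tau^4)$ remainder is absorbed for $\tau$ small. Finally, I will note that the gate $m_{\rm feas}$ of Eq.~\eqref{eq:method:route_gate} only rescales $\lambda_s$, so C1--C3 apply verbatim to $F^{\rm ra}_{\rm ctx}$ with $\lambda_s$ replaced by $m_{\rm feas}\lambda_s$.
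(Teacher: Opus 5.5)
Your plan is sound and follows the paper's proof closely. C1 is the same discrete Gr\"onwall argument: the paper postulates the amplification $1+\tau L_\Phi$ as Assumption~\ref{ass:lipschitz_rollout}, while you derive it from the Lipschitz constants of $\nabla_\Pos\Ham_{\rm geom}$ and $R_\theta$. The activation half of C3 is the same reverse triangle inequality.

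The decoupling you hoped for in C2 is supplied by Corollary~\ref{cor:no_hallucination}, which assumes baseline-aligned coordinates in which longitudinal context forces produce no lateral motion to first order. In your notation this does two things. It kills the cross term $\tau^2 P_\perp\Mass^{-1}P_\parallel F_{\rm ctx}$ in the injected position kick; a condition on $D\Phi$ alone does not cover this, so add it. It also implicitly sets $\eta=0$, since the paper then reuses the full-state Lipschitz recursion for the lateral error alone.

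Your two-component recursion is the better bookkeeping, because it carries the lateral momentum kick $\tau P_\perp F_{\rm ctx}$. The paper propagates only the $\tau^2$ position kick, so its explicit $C_{\perp,N}$ tends to $0$ as $\tau\to0$ at fixed horizon $N\tau$. Yet a free particle pushed laterally by a constant force of size $\varepsilon_\perp$ drifts by about $\tfrac12(N\tau)^2\varepsilon_\perp$. The $O(\varepsilon_\perp)$ claim itself is unaffected. Your closing remark on $m_{\rm feas}$ agrees with the appendix, which freezes the gate as a coefficient within each substep.

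For the risk decrease in C3 the routes genuinely differ. The paper isolates $F_{\rm soft}$ and applies $\Mass^{-1}\succeq m_M I$ to the full gradient. It then uses the step-size condition $\tau^2\lambda_s\|\Mass^{-1}\|\le 1/L_r$ to get $-\tfrac12\tau^2\lambda_s m_M\Delta^2$, with the shared terms contributing $O(\tau^3)$. In that argument $\chi$ enters only the activation bound, and no block structure on $\Mass$ is needed.

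You instead keep the lateral hard force and spend $\chi$ on it, giving a decrease of at least $\tau^2\Delta(\lambda_s\Delta-\chi)/m$. This accounts for more of the force, but it degenerates as $\chi\uparrow\lambda_s\Delta$. For an anisotropic lateral mass block it also requires $\lambda_s\Delta$ to exceed $\chi$ times the condition number of $\Mass$.

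Two fixes are needed:
\begin{enumerate}
  \item Bound the hard term by $\tau^2\|P_\perp\nabla\tilde r\|\chi/m$, not with $\|\nabla\tilde r\|$; otherwise the factorization does not follow.
  \item The remainder is $O(\tau^3)$, not $O(\tau^4)$. The margin is assumed at $\Pos_t$ but you expand at $\Pos^{\rm geo}_{t+1}$, and moving $\nabla\tilde r$ between them costs $O(\tau)$ in the gradient. This is the $O(\tau^3)$ the paper attributes to the shared terms.
\end{enumerate}

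Finally, neither your lateral restriction nor the paper's soft-only isolation controls $P_\parallel F_{\rm hard}$, which the hypotheses leave unbounded. Both arguments therefore establish the decrease for part of $F_{\rm ctx}$ rather than for the full context force.
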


Proofs: C1 $\Rightarrow$ Theorem~\ref{thm:scaffold_preservation};
C2 $\Rightarrow$ Corollary~\ref{cor:no_hallucination};
C3 $\Rightarrow$ Theorem~\ref{thm:risk_deflection}.  Selectivity is not
trained as a separate label: C1--C3 follow from the gradient structure of
$\Ham_{\rm ctx}$ and the affordance gate, not from post-hoc tuning. Appendix
Table~\ref{tab:app_theory_checks} makes C1--C3 checkable by mapping each
theoretical consequence to measured rollout statistics.

\begin{table}[t]
\centering
\caption{\textbf{Main-text check of the theory predictions.}
We test C1--C3 as finite-rollout predictions, not formal safety certificates.
``Below-$\delta$'' means lateral deviation stays below the reaction threshold
used for false pre-activation.}
\label{tab:main_theory_checks}
\small
\setlength{\tabcolsep}{4pt}
\renewcommand{\arraystretch}{1.06}
\begin{tabular}{lccc}
\toprule
Prediction & Rollout condition & Below-$\delta \uparrow$ & Failure metric$\downarrow$ \\
\midrule
C1 & geometry-only delayed escape
& $1.000$ & false pre-act $0.000$ \\
C2 & route-aware context, static blocked R2
& $0.886$ & FAR $0.114$ \\
C3 & route-aware CVaR, pre-escape delayed
& $0.820$ & false pre-act $0.180$ \\
\bottomrule
\end{tabular}
\end{table}

\section{Experiments: Selective Hamiltonian Risk Adaptation}
\label{sec:experiments}

The experiments ask one question across four settings: does Hamiltonian
context enrichment activate risk forces when a safer feasible alternative
exists and suppress them when it does not? We lead with the head-to-head
domain where this mechanism is decisive: delayed required escape, where
the policy must suppress a tempting lateral maneuver before it is feasible
and activate it after the escape opens, all with zero replans.

\subsection{Setup, Metrics, and Domains}
\label{sec:exp_setup}

\paragraph{Research questions.}
RQ1--RQ2 (spatial): does the context-enriched field activate toward a feasible lower-risk
direction and suppress when blocked?
RQ3--RQ5 (temporal): does the context-enriched field activate faster than replanning
baselines, avoid pursuing closed or unavailable escapes, and
suppress-then-activate within one episode?
RQ6: must risk enter the force channel, or is loss reweighting sufficient?
RQ7: does CVaR training reduce tail violations, not only mean risk?

\paragraph{Regimes.}
All domains share \textbf{R1} (feasible lower-risk alternative exists),
\textbf{R2} (lower-risk-looking direction blocked or dynamically unsafe),
and \textbf{R3} (risk-neutral; the context-enriched field should preserve geometry-only behavior).
RELLIS-Dyn extends R1/R2 from space to time via eight event types spanning
soft-risk, hard-boundary, dynamic-obstacle, and compound/delayed scenarios;
full definitions are in Appendix~\ref{app:rellis_dyn_details}. The primary
head-to-head benchmark is \emph{delayed required escape}: blocked before
$t_{\rm escape}$, necessary after.

\paragraph{Metrics.}
Spatial selectivity:
\begin{equation}
\mathrm{CAR}=\Pr[\langle F_{\rm ctx},d_{\rm safe}\rangle{>}\epsilon\mid R1],\;\;
\mathrm{FAR}=\Pr[|P_\perp F_{\rm ctx}|{>}\epsilon\mid R2],\;\;
\mathrm{SR}=\frac{\mathbb{E}[|P_\perp F_{\rm ctx}|\mid R1]}
                 {\mathbb{E}[|P_\perp F_{\rm ctx}|\mid R2]},
\end{equation}
and AUPRC (threshold-independent). Temporal: \emph{reaction delay}
($\delta$-displacement from the pre-event geometry-only trajectory; sensitivity in
Appendix Table~\ref{tab:app_rellis_dyn_delay_sensitivity}),
\emph{stale exposure} (violation before reaction),
\emph{false pre-activation} (lateral force before $t_{\rm escape}$),
\emph{suppress rate} ($1-$false pre-activation).
Post-event \emph{violation CVaR} (hard contact weighted separately) is
the primary tail metric.

\subsection{Primary Head-to-Head: Delayed Required Escape}
\label{sec:exp_primary_delayed}

All methods receive the same updated BEV patch at every step; only the
decision mechanism differs. Table~\ref{tab:delayed_escape} is the main
head-to-head result: the route-aware CVaR field reduces DWA and semantic
MPPI false pre-activation from $0.950$ to $0.180$ while raising success
from $0.480$/$0.240$ to $0.810$, and does so without replans. The table also isolates the
three ingredients: Hamiltonian force structure, CVaR timing, and
context-conditioned coefficients.

\begin{table}[t]
\centering
\caption{\textbf{Delayed-required-escape} benchmark.
Escape is blocked before $t_{\rm escape}$ and necessary afterward. The full
model best suppresses early false activation while still succeeding after
the escape opens. Main rows use $100$ paired episodes. $^\dagger$Low CVaR
with success ${<}0.21$ indicates low exposure from getting stuck; CIs are
in Appendix Table~\ref{tab:app_delayed_escape_ci}.}
\label{tab:delayed_escape}
\small
\setlength{\tabcolsep}{4pt}
\renewcommand{\arraystretch}{1.1}
\begin{tabular}{lcccc}
\toprule
Method & False pre-act$\downarrow$ & Suppress$\uparrow$
       & Success$\uparrow$ & Viol.\ CVaR$\downarrow$ \\
\midrule
Geometry-only policy  & $0.000$ & $1.000$ & $0.030$ & $1.894$ \\
Risk-loss-only          & $0.420$ & $0.580$ & $0.040$ & $1.793$ \\
Fixed-coeff context field     & $0.990$ & $0.010$ & $0.030$ & $0.463^\dagger$ \\
Black-box CVaR policy   & $0.920$ & $0.080$ & $0.200$ & $0.503^\dagger$ \\
DWA semantic            & $0.950$ & $0.050$ & $0.480$ & $0.695$ \\
MPPI semantic           & $0.950$ & $0.050$ & $0.240$ & $0.831$ \\
Ctx-enriched, expected cost  & $0.370$ & $0.630$ & $0.620$ & $0.855$ \\
\textbf{Route-aware Ctx CVaR}
  & $\mathbf{0.180}$ & $\mathbf{0.820}$
  & $\mathbf{0.810}$ & $\mathbf{0.740}$ \\
\bottomrule
\end{tabular}
\end{table}

\begin{figure*}[t]
\centering
\includegraphics[width=\textwidth]{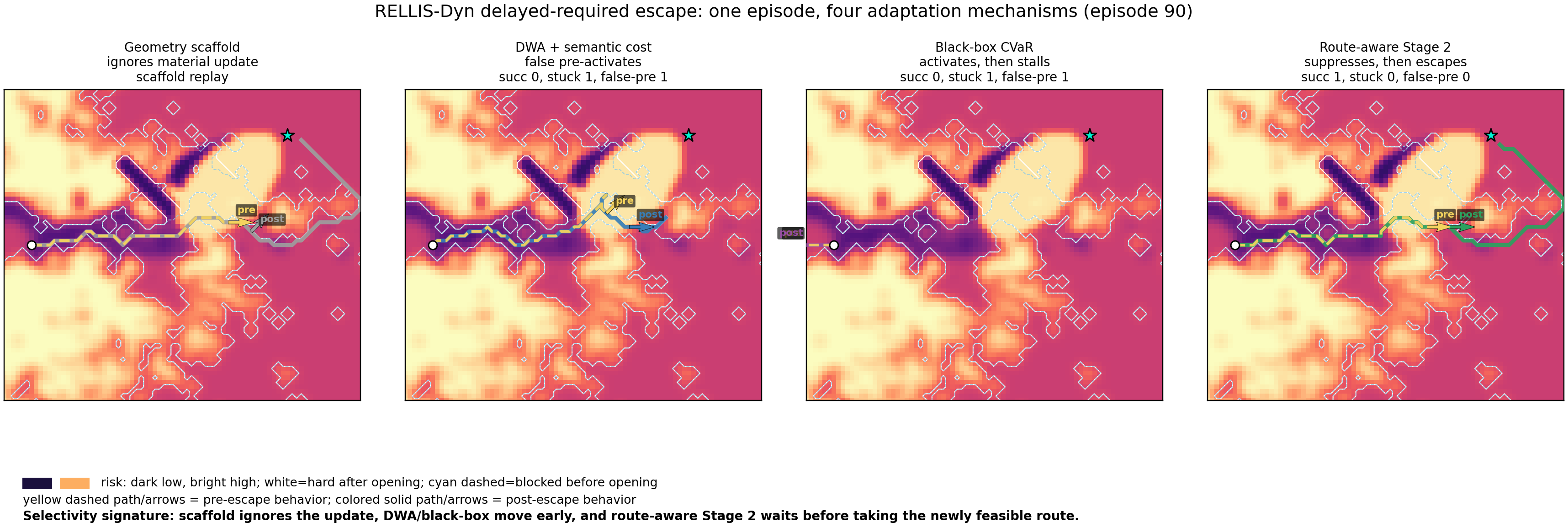}
\caption{\textbf{Qualitative temporal selectivity in one delayed-required
escape episode.} Yellow dashed trajectories/arrows show behavior before the
escape is available; solid colored trajectories/arrows show behavior after it
opens. The geometry-only policy ignores the material update, DWA and black-box CVaR move
before the escape is feasible and then stall, while route-aware context
enrichment suppresses before $t_{\rm escape}$ and takes the newly feasible
route after it opens.}
\label{fig:rellis_dyn_spotlight}
\end{figure*}

\emph{Hamiltonian structure}: black-box CVaR uses the same CVaR objective
and BEV patch but no Hamiltonian force channel; false pre-activation
$0.920$, success $0.200$ vs.\ $0.810$. Static CAR $0.884$ does not
transfer to temporal suppression.
\emph{CVaR training}: same architecture, different objective; CVaR halves
false pre-activation ($0.370{\to}0.180$), raises suppression
($0.630{\to}0.820$), improves success by $30\%$, reduces violation CVaR
($0.855{\to}0.740$).
\emph{Context-conditioning}: fixed vs.\ CNN-predicted $(\lambda_s,\lambda_h)$;
success $0.030$ vs.\ $0.810$. Without adaptive coefficients the method
suppresses nothing.

\subsection{RELLIS-3D: Spatial Selectivity}
\label{sec:exp_rellis_static}

Table~\ref{tab:rellis_selectivity} reports six variants on 2{,}250 BEV
episodes (leave-one-sequence-out; CIs in Appendix
Table~\ref{tab:app_rellis_full}). The SR gap between black-box CVaR ($1.292$) and route-aware context-enriched field
($2.358$) is the spatial analog of the temporal gap in
Table~\ref{tab:delayed_escape}: the black-box learns \emph{when} risk
gradients exist but not \emph{whether} following them is safe.
The route-aware gate produces the selectivity; Section~\ref{sec:exp_primary_delayed}
shows the same architecture without the gate (black-box) collapses to
$0.200$ navigation success on the strictest temporal test.

\begin{table}[t]
\centering
\caption{\textbf{RELLIS-3D spatial selectivity.}
Route-aware context enrichment achieves the best SR and AUPRC, indicating
selective activation rather than uniform risk-gradient following. Fixed
coefficients fail to suppress in blocked scenes (FAR~$0.888$).}
\label{tab:rellis_selectivity}
\small
\setlength{\tabcolsep}{5pt}
\renewcommand{\arraystretch}{1.1}
\begin{tabular}{lcccc}
\toprule
Method & CAR$\uparrow$ & FAR$\downarrow$ & SR$\uparrow$ & AUPRC$\uparrow$ \\
\midrule
Geometry-only policy    & $0.000$ & $0.000$ & $0.000$ & $0.000$ \\
Scalar context field            & $0.378$ & $0.752$ & $1.031$ & $0.008$ \\
Fixed-coeff context field    & $0.566$ & $0.888$ & $1.123$ & $0.008$ \\
Non-route directional Ctx  & $0.135$ & $0.178$ & $1.165$ & $0.206$ \\
Black-box CVaR policy     & $0.884$ & $0.129$ & $1.292$ & $0.206$ \\
\textbf{Route-aware Ctx} & $\mathbf{0.837}$ & $\mathbf{0.114}$
                           & $\mathbf{2.358}$ & $\mathbf{0.289}$ \\
\bottomrule
\end{tabular}
\end{table} 

\subsection{RELLIS-Dyn: Temporal Selectivity}
\label{sec:exp_rellis_dyn}

\paragraph{Material-event aggregate.}
Table~\ref{tab:rellis_dyn_agg} reports the three-event material subset
(mud onset, corridor closes, corridor opens) at 100 episodes per event.
Black-box CVaR success $0.273$ on the same task where it achieves
CAR~$0.884$ confirms that static selectivity metrics do not predict
navigation success.
Risk-loss-only hard exposure $0.694$ vs.\ Route-aware Ctx $0.027$
supports RQ6: the risk signal must enter the force channel.
The route-aware context-enriched field is the strongest zero-replan learned-field method,
matching planning baselines that use $9$--$68$ explicit replans.
Corridor-opens trajectory detail is in Appendix
Figure~\ref{fig:app_rellis_dyn_corridor}; the eight-event breakdown
(including moving-obstacle events where DWA is architecturally
stronger) is in Appendix Table~\ref{tab:app_rellis_dyn_8event_groups}.

\begin{table}[t]
\centering
\caption{\textbf{RELLIS-Dyn 3-event material aggregate.}
All methods receive the same updated BEV patch. The gate uses only
$K{=}8$ local primitives, not graph search or trajectory optimization.
$^\dagger$Oracle has global map access. $^{\ddagger}$Delay includes
gate-suppressed R2 episodes; see Table~\ref{tab:delayed_escape}.}
\label{tab:rellis_dyn_agg}
\small
\setlength{\tabcolsep}{3pt}
\renewcommand{\arraystretch}{1.1}
\resizebox{0.9\linewidth}{!}{%
\begin{tabular}{llccccccc}
\toprule
Method & Family
  & Succ.$\uparrow$ & Soft CVaR$\downarrow$
  & Delay$\downarrow$ & Stuck$\downarrow$
  & Replans$\downarrow$ & ms/step$\downarrow$ \\
\midrule
Geometry-only policy      & geom.-only     & $1.000$ & $0.712$ & $16.2$ & $0.04$ & $0$    & $0.8$  \\
Risk-loss-only     & learned loss   & $1.000$ & $0.673$ & $13.4$ & $0.03$ & $0$    & $0.9$  \\
Fixed-coeff Ctx     & learned field  & $0.120$ & $0.459$ & $1.47$ & $0.74$ & $0$    & $1.4$  \\
Black-box CVaR     & learned field  & $0.273$ & $0.482$ & $3.61$ & $0.61$ & $0$    & $4.2$  \\
Ctx-enriched, exp.cost  & learned field  & $0.983$ & $0.465$ & $1.30$ & $0.02$ & $0$    & $2.1$  \\
DWA semantic       & reactive       & $0.567$ & $0.621$ & $2.8$  & $0.43$ & $0$    & $0.6$  \\
CBF-QP             & safety filter  & $0.890$ & $0.651$ & $7.4$  & $0.09$ & $0$    & $0.9$  \\
Local A* (budget)  & planner        & $0.983$ & $0.629$ & $4.6$  & $0.01$ & $9.8$  & $8.4$  \\
MPC (budget)       & planner        & $0.967$ & $0.624$ & $3.8$  & $0.03$ & $11.1$ & $42.2$ \\
\textbf{Route-aware Ctx}& learned field & $\mathbf{0.980}$ & $\mathbf{0.638}$
  & $\mathbf{1.8}^{\ddagger}$ & $\mathbf{0.06}$ & $\mathbf{0}$ & $\mathbf{2.3}$ \\
Oracle replanner   & planner$^\dagger$  & $1.000$ & $0.601$ & $1.9$  & $0.00$ & $68.3$ & $97.5$ \\
\bottomrule
\end{tabular}}
\end{table}

\paragraph{Within-episode temporal selectivity.}
Figure~\ref{fig:rellis_dyn_spotlight} shows the delayed-required-escape
mechanism qualitatively.
Route-aware CVaR suppresses lateral activation throughout the blocked
window and activates within two steps of $t_{\rm escape}$
(false pre-activation $0.180$).
The expected-cost variant activates earlier ($0.370$), directly
demonstrating CVaR's tail-gradient contribution.
DWA false pre-activates on $95\%$ of episodes, typically $3$--$5$ steps
before $t_{\rm escape}$, because partial boundary changes trigger
its reactive response before the escape is fully open.

\subsection{DFC2018: Geometry-Only Repair}
\label{sec:exp_dfc}

Table~\ref{tab:dfc_geom_ctx} shows enrichment repairs a geometry-only
policy on a static semantic material map: success $0.867{\to}1.000$,
catastrophic failure $0.600{\to}0.100$, hard-hazard traversal $-96.9\%$,
cumulative risk $-59.2\%$, oscillation $-90.7\%$, path-length ratio
unchanged.
Full planner comparison in Appendix~\ref{app:dfc_full}; full-map
planners with global access obtain lower raw risk but the context-enriched field produces
smoother trajectories with zero oscillation.

\begin{table}[t]
\centering
\caption{\textbf{DFC2018 outcome shift} (300 paired episodes).}
\label{tab:dfc_geom_ctx}
\scriptsize\setlength{\tabcolsep}{3pt}\renewcommand{\arraystretch}{1.08}
\resizebox{0.7\linewidth}{!}{%
\begin{tabular}{lrrrrrr}
\toprule
Method & Succ.$\uparrow$ & Cat.fail$\downarrow$ & Hard len.$\downarrow$
       & Risk$\downarrow$ & Len.ratio & Osc.$\downarrow$ \\
\midrule
Geometry-only policy & $0.867$ & $0.600$ & $21.106$ & $23.267$ & $1.044$ & $82.212$ \\
\textbf{Ctx-enriched} & $\mathbf{1.000}$ & $\mathbf{0.100}$ & $\mathbf{0.658}$
                 & $\mathbf{9.493}$ & $1.045$ & $\mathbf{7.629}$ \\
\bottomrule
\end{tabular}}
\end{table}

\subsection{Highway-env: Interaction Selectivity}
\label{sec:exp_highway}

The open slow-leader scenario is the interaction analog of corridor
opens (lateral channel should activate); the boxed scenario is the
analog of moving-obstacle-blocks-detour (channel should suppress).
Table~\ref{tab:highway_mechanism} shows the mechanism ablation: removing
the lateral channel causes collision with the slow leader on every
episode; adding it without the TTC channel causes off-road failure in
the boxed scenario; only the full system passes and suppresses correctly.
MOBIL-IDM and risk-aware MPC outperform the context-enriched field overall (Appendix
Table~\ref{tab:app_highway_baselines}); the context-enriched field is the strongest
learned method and the only one exhibiting the intended
activation/suppression split.

\begin{table}[t]
\centering
\caption{\textbf{Highway mechanism ablation} (200 paired seeds).
The lateral channel enables passing; the TTC channel prevents using it
when the adjacent maneuver is unsafe.}
\label{tab:highway_mechanism}
\scriptsize\setlength{\tabcolsep}{3pt}\renewcommand{\arraystretch}{1.08}
\resizebox{\linewidth}{!}{%
\begin{tabular}{lrrrrrr}
\toprule
Variant & Def.off-rd$\downarrow$ & Slow coll$\downarrow$
        & Slow succ$\uparrow$ & Boxed coll$\downarrow$
        & Boxed off-rd$\downarrow$ & Boxed spd$\uparrow$ \\
\midrule
No lateral context       & $0.00$ & $1.00$ & $0.00$ & $1.00$ & $0.00$ & $22.33$ \\
Lateral only             & $0.20$ & $0.00$ & $1.00$ & $1.00$ & $0.00$ & $22.33$ \\
\textbf{Lateral + TTC}   & $\mathbf{0.00}$ & $\mathbf{0.00}$ & $\mathbf{1.00}$
                         & $\mathbf{0.00}$ & $\mathbf{0.00}$ & $\mathbf{7.57}$ \\
\bottomrule
\end{tabular}}
\end{table}

\subsection{Interpretation and Limitations}
\label{sec:exp_interpretation}

\paragraph{Tradeoff positions.}
Reactive baselines (DWA, CBF-QP) handle hard-boundary events well but
false pre-activate on discovery events (DWA: $0.950$ on delayed escape).
Planning baselines reduce violation under replan budget but pay
$9$--$68\times$ more replans.
The route-aware context-enriched field occupies the zero-replan gradient-following frontier:
strongest on soft-risk and escape-discovery events, weaker on
moving-obstacle events where reactive boundary following suffices.
The eight-event Pareto (Appendix Figure~\ref{fig:app_rellis_dyn_pareto})
shows the full tradeoff surface.

\paragraph{Limitations.}
(i) $b_{\rm sp}(\phi)$ is a differentiable barrier, not a CBF
certificate; our claims are empirical tail-risk reductions, not
forward-invariance guarantees
(Appendix~\ref{app:theory}, \S A.9).
(ii) Performance depends on risk-map quality; at $20$--$30\%$ label
corruption, CAR/FAR degrade despite the gate's tendency to suppress
inconsistent soft-risk evidence (Appendix
Table~\ref{tab:app_rellis_perception}).
(iii) Gate overhead must be profiled on latency-constrained hardware
(ms/step in Table~\ref{tab:rellis_dyn_agg}).


\section{Conclusion}
\label{sec:conclusion}

One additive context-energy term gives a port-Hamiltonian policy a falsifiable selectivity signature:
activate when a safer feasible alternative exists and suppress otherwise. Ablations isolate Hamiltonian
structure, CVaR tail timing, and context-conditioned coefficients; the pattern holds across off-road,
semantic, dynamic, and highway settings without global replanning.

\paragraph{Social Impact Statement.}
This work uses public or simulated navigation data and does not rely on private user information. Its intended benefit is more reliable autonomous navigation under terrain and traffic uncertainty. The method is not a deployment safety certificate: real systems would require independent validation, fallback control, and monitoring.

\paragraph{Acknowledgement.} This research was supported in part by a grant from the Peter O’Donnell Foundation, the Michael J Fox Foundation, Jim Holland- Backcountry Foundation and in part from a grant from the Army Research Office accomplished under Cooperative Agreement Number W911NF-19-2-0333

\bibliographystyle{plainnat}
\bibliography{references}

\newpage
\appendix

%
%

\section{Theoretical Guarantees for Contextual Hamiltonian Enrichment}
\label{app:theory}

This appendix collects the formal properties used in Section~3.6.
The goal is not to claim unconditional safety or global optimality. Instead, we prove
that contextual Hamiltonian enrichment has five conditional properties aligned with
the central claims of the paper: (i) an added energy term induces a new force and
sensitivity channel; (ii) the enriched system preserves the port-Hamiltonian
dissipativity structure; (iii) the discrete rollout is practically dissipative for small
enough step size; (iv) when context gradients vanish, the context-enriched field remains close to the
geometry-only policy and cannot generate a spurious lateral escape; and (v) when a
projected risk-gradient margin exists, the contextual force activates and locally
reduces risk. We also justify the detached-quantile CVaR gradient used in the
episode loop and state the constrained risk-averse OCP interpretation used by the
upper-confidence update. The passivity arguments follow the standard port-Hamiltonian energy
balance~\citep{vanderschaft2014port,desai2021porthamiltonian}; the CVaR argument
uses the Rockafellar--Uryasev variational representation~\citep{rockafellar2000optimization}
and standard empirical quantile sensitivity arguments~\citep{hong2014monte}.

\subsection{Setup and notation}
\label{app:setup}

Let \(q \in \mathbb R^d\) denote the configuration coordinate and \(p\in\mathbb R^d\)
its cotangent variable. Let \(x=(q,p)\), let \(M\succ 0\) be the mass matrix, and
write
\[
    v = M^{-1}p .
\]
The geometry-only Hamiltonian is
\[
    H_{\rm scaf}(q,p)
    =
    \frac12 p^\top M^{-1}p + H_{\mathrm{geom}}(q),
\]
where \(H_{\mathrm{geom}}\) contains the goal and geometric barrier terms. The context-enriched field
adds the context Hamiltonian
\[
    H_{\mathrm{ctx}}(q;z)
    =
    \lambda_s \widetilde r(\mathbf c;z)
    +
    \lambda_h b_{\mathrm{sp}}(\phi(\mathbf c;z)),
\]
where \(\mathbf c\) is the workspace position component of \(q\), \(z\) denotes the
locally sensed context patch, \(\widetilde r\) is the smoothed soft-risk field, and
\(\phi\) is the signed-distance field for hard hazards. Thus
\[
    H_{\rm enr}(q,p;z)
    =
    H_{\rm scaf}(q,p)+H_{\mathrm{ctx}}(q;z).
\]
The corresponding context force is
\[
    F_{\mathrm{ctx}}(q;z)
    =
    -\nabla_q H_{\mathrm{ctx}}(q;z)
    =
    -\lambda_s \nabla_q \widetilde r(\mathbf c;z)
    -
    \lambda_h b_{\mathrm{sp}}'(\phi(\mathbf c;z))\nabla_q \phi(\mathbf c;z).
\]
This is exactly the soft-risk plus hard-hazard force decomposition used in the main
paper.

The continuous-time enriched port-Hamiltonian dynamics are
\begin{align}
    \dot q &= \nabla_p H_{\rm enr}(q,p;z)=M^{-1}p, \label{eq:ct_q}\\
    \dot p &=
    -\nabla_q H_{\rm enr}(q,p;z)
    -
    D(q,p;z)M^{-1}p
    +
    B(q,p;z)u , \label{eq:ct_p}
\end{align}
where \(D(q,p;z)\succeq 0\) is the dissipative port. The rollout used in the paper is
the semi-implicit update
\begin{align}
    p_{t+1}
    &=
    p_t
    -\tau\nabla_q H_{\mathrm{geom}}(q_t)
    -\tau\nabla_q H_{\mathrm{ctx}}(q_t;z_t)
    -\tau D_tM^{-1}p_t
    +\tau B_tu_t, \label{eq:disc_p}\\
    q_{t+1}
    &=
    q_t+\tau M^{-1}p_{t+1}. \label{eq:disc_q}
\end{align}
When \(D_tM^{-1}p_t\) is implemented as the scalar dissipative port
\(R_\theta(q_t,p_t)=\gamma M^{-1}p_t\), this reduces to the update in the main text.

\begin{assumption}[Rollout regularity]
\label{ass:regularity}
There exists a compact rollout tube \(\mathcal K\subset \mathbb R^{2d}\) such that
the geometry-only and context-enriched trajectories remain in \(\mathcal K\) for the horizon under
analysis. On \(\mathcal K\), \(H_{\mathrm{geom}}\) and \(H_{\mathrm{ctx}}\) are \(C^2\),
and their gradients are Lipschitz.
\end{assumption}

\begin{assumption}[Bounded context force]
\label{ass:bounded_ctx}
There exists \(G_{\mathrm{ctx}}<\infty\) such that
\[
    \|\nabla_q H_{\mathrm{ctx}}(q;z)\|\le G_{\mathrm{ctx}}
\]
for all \((q,p)\in\mathcal K\) and all context patches considered.
\end{assumption}

\begin{assumption}[Dissipative port]
\label{ass:diss}
There exists \(\rho\ge 0\) such that
\[
    D(q,p;z)\succeq \rho I
\]
on \(\mathcal K\). When \(\rho>0\), the system is strictly dissipative in velocity.
\end{assumption}

\begin{assumption}[Local Lipschitz rollout map]
\label{ass:lipschitz_rollout}
The shared geometry-only update map is locally Lipschitz on \(\mathcal K\): there exists
\(L_\Phi\ge 0\) such that one step of the shared dynamics amplifies state errors by at
most \(1+\tau L_\Phi\).
\end{assumption}


\subsection{Full local-risk interface and problem setup}
\label{app:setup_full}

At each time step $t$ the agent observes a goal displacement
$\vec c_g-\vec c_t$, locally-sensed obstacle or neighbor features
$\{(C_j,R_j,W_j)\}_{j=1}^{N(t)}$ within sensing radius $\hat d$, and a
$P\times P$ context patch $P_t$ centered at the current position.
The patch contains a smoothed risk field $\tilde r(\cdot)\in[0,1]$ and a
signed-distance field $\phi(\cdot)$ to hard hazards or infeasible regions.
Depending on the domain, $\tilde r$ may encode material risk, traffic
interaction risk, or a procedurally constructed local interaction field;
the method only requires that the field be locally queryable and
differentiable after smoothing.
The patch is bilinearly resampled at the agent's current position at
every integration step, so forces and costs reflect the actual path
traversed.
All comparisons use the same local-information regime; differences come
from the decision mechanism, not from privileged sensing.

Each observation defines a local optimization instance.
The goal term, clearance barriers, context-risk preference, hazard
barrier, and lateral-motion channel form a product space of potential
actions and penalties, but only some factors are active in a given
context.
We use \emph{action-space adaptation} in this restricted continuous
sense: the learner does not enumerate discrete actions, but changes which
force channels have non-negligible magnitude under the sensed context.

\paragraph{Static and dynamic risk.}
The same interface covers both static and dynamic settings.
In DFC and RELLIS, $\tilde r$ and $\phi$ are fixed semantic/material
fields, so risk exposure depends on the path taken through the map.
In highway-env, the local patch changes as neighboring vehicles move,
close gaps, or block lateral escape.
Static domains test context-enriched path selection; dynamic domains test
whether the learned field preserves clearance, time-to-collision, and
maneuvering room before violations become terminal.

\subsection{Route-aware gate: full specification}
\label{app:gate}

The main text uses the gate $m_{\rm feas}\in[0,1]$ in
Eq.~\eqref{eq:method:route_aware_force} without specifying its computation.
This subsection gives the full specification.

From the current state $q_t$ we roll out $K$ short-horizon motion
primitives $\{\xi_k\}_{k=1}^K$ inside the sensed BEV window $P_t$ and
compute for each candidate:
\begin{equation}
R_k = \int_{\xi_k}\tilde r(s)\,ds,
\qquad
C_k = \min_{\xi_k}\phi(s),
\qquad
I_k = \mathbf{1}\bigl[C_k>\delta_\phi
      \;\text{and}\;\xi_k\subset\mathcal{T}(P_t)\bigr],
\label{eq:gate_primitives}
\end{equation}
where $\mathcal{T}(P_t)$ is the traversable mask induced by the local
semantics.
Let $k^\star$ be the feasible candidate ($I_{k^\star}=1$) with lowest
soft-risk exposure, and let $k_{\rm scaf}$ be the candidate closest to
the geometry-only policy direction.
The affordance gate is
\begin{equation}
m_{\rm feas}(\vec c,z_t)
=
\sigma\!\bigl(\kappa_R[(R_{k_{\rm scaf}}-R_{k^\star})-\rho_R]\bigr)
\cdot
\sigma\!\bigl(\kappa_C(C_{k^\star}-\delta_\phi)\bigr)
\cdot
I_{k^\star}.
\label{eq:gate_full}
\end{equation}
Thus $m_{\rm feas}$ activates when: (i) a feasible candidate exists
($I_{k^\star}=1$), (ii) that candidate has clearance above threshold
($C_{k^\star}>\delta_\phi$), and (iii) its soft-risk integral is lower
than the geometry-only primitive by margin $\rho_R$.
When any condition fails, $m_{\rm feas}\approx0$ and $F_{\rm soft}$ is
suppressed without affecting $F_{\rm hard}$.

During one Hamiltonian integration substep $m_{\rm feas}$ is treated as
a locally measured context coefficient, exactly like the coefficients
predicted from the risk patch.
Equivalently, the substep uses local energy
$H_{\rm ctx}^{\rm ra}=m_{\rm feas}\lambda_s\tilde r+\lambda_h b_{\rm sp}(\phi)$,
with the gate coefficient frozen until the next BEV update.

\textbf{Implementation notes.}
The route-aware context-enriched field is not a full-map planner or a global-route oracle.
It uses only the local BEV patch, the local SDF, and finite-horizon
primitives within the same sensing radius available to all local-sensing
baselines.
The R1/R2/R3 regime labels are computed externally after the fact to
score whether activation was correct; they are not provided to the policy
at any time.
We use $K{=}8$, $\delta_\phi{=}0.15$\,m, $\rho_R{=}0.05$,
$\kappa_R{=}10$, $\kappa_C{=}20$ in all experiments.

\subsection{Coefficient prediction details}
\label{app:coeff_pred}

The context coefficients $(\lambda_s,\lambda_h)$ are predicted by
sigmoid-bounded heads from a CNN encoding of the risk patch $P_t$ fused
with the existing goal-context token:
\begin{equation}
z_r \leftarrow \psi_\omega(P_t),
\qquad
\lambda_s, \lambda_h \leftarrow \lambda^{\max}\,\sigma\!\bigl(\mathrm{MLP}(z_r,z_g)\bigr),
\end{equation}
where $z_g$ is the goal-context token from the geometry-only policy and $\lambda^{\max}$
is a fixed upper bound on each coefficient.
The geometric heads $(\alpha,\beta,\gamma)$ from the geometry-only policy are frozen,
making $(\lambda_s,\lambda_h)$ the only route by which context can alter
the decision field.
This isolation is what makes the constant-coefficient ablation
(Table~\ref{tab:rellis_selectivity}, constant-coeff row) interpretable:
setting $\lambda_s,\lambda_h$ to fixed scalars removes context-sensitivity
while keeping the force-channel structure, isolating the contribution of
adaptive coefficient prediction.

In highway-env we additionally expose a contextual lateral response
coefficient and a TTC-conditioned interaction channel; in boxed traffic
the TTC channel suppresses lateral activation while in the open
slow-leader case it allows activation.
A $\lambda$-entropy regularizer $L_\lambda$ prevents the sigmoid heads
from saturating.

\subsection{Balanced objective and evaluation metric mapping}
\label{app:jbal}

The full training objective combines the CVaR tail-risk term with a
balanced failure-profile cost:
\begin{equation}
\mathcal{J}_{\rm bal}(\pi)
=
\lambda_{\rm goal}C_{\rm goal}
+\lambda_{\rm risk}C_{\rm risk}
+\lambda_{\rm barrier}C_{\rm barrier}
+\lambda_{\rm control}C_{\rm control}
+\lambda_{\rm smooth}C_{\rm smooth}
+\lambda_{\rm dyn}C_{\rm dyn},
\label{eq:jbal}
\end{equation}
where the dynamic-risk term $C_{\rm dyn}$ is active for interaction
domains (highway-env) and zero otherwise.
The terms are not optimized independently: low risk by stopping, low
control by failing to evade, and short paths through hazards are all
failure modes.
The context-enriched field learns a context-conditioned compromise among these terms.

The one-to-one mapping from $\mathcal{J}_{\rm bal}$ terms to evaluation
metrics is deliberate: goal/progress $\to$ success, progress, path
length; soft risk $\to$ cumulative risk, soft CVaR; barriers $\to$
hard-hazard length, off-road events; control/smoothness $\to$ oscillation,
curvature, jerk; dynamic risk $\to$ clearance, TTC violation,
intervention window.
The metrics are not post-hoc decorations; they measure the same terms the
learned field is balancing.

\subsection{Why expected cost is insufficient}
\label{app:cvar_motivation}

Catastrophic contact, collision, or route failure is a tail event: the
bulk of rollouts may look acceptable while the failures that determine
deployment safety are rare.
Expected-cost training is then dominated by typical-case path quality
and allocates too little gradient to the rollouts where the new context
force channels must become active.

This is not merely a risk-aversion preference.
When training under expected cost, the gradient
$\partial\mathcal{L}/\partial\lambda_s$ receives contributions from all
rollouts, including the majority where $\nabla\tilde r$ is small or
where no risk-reduction opportunity exists.
Gradient signal from safe typical-case rollouts dilutes the signal from
the few rollouts where $-\tau\nabla\tilde r$ should drive a large
$\lambda_s$ update.
CVaR fixes this by restricting gradient to the worst $(1-\alpha)$
fraction: on those rollouts, $-\tau\nabla\tilde r$ contributes non-zero
$\partial q_{t+1}/\partial\lambda_s$ at every step, driving $\lambda_s$
toward values that bias the field away from high-risk regions exactly
where the safety signal lives.

The experimental evidence for this is in Table~\ref{tab:delayed_escape}:
context-enriched expected cost vs.\ route-aware CVaR.
Same architecture, same gate, different objective.
CVaR halves false pre-activation ($0.370\to0.180$), raises suppression
rate ($0.630\to0.820$), and improves navigation success by $30\%$
($0.620\to0.810$).
The R1-only subset (Table~\ref{tab:app_r1_only}) confirms the improvement
holds even when an escape genuinely exists, ruling out the explanation
that CVaR simply avoids acting.

\subsection{Context-enriched update loops and training algorithm}
\label{app:ctx_loops_hierarchy}

The context-enriched field's adaptation hierarchy comprises three concurrent loops that 
operate at different timescales, read different inputs, and update 
different objects.

\textbf{Loop 1 -- segment (within episode, deployment-time).}
A rank-1 secant Gauss--Newton corrector adjusts $(\beta,\gamma)$ and 
the top-$K$ nearest $\alpha_j$ between consecutive frames using realized 
clearance, goal distance, and speed. The Jacobian is maintained as 
$\hat J \leftarrow \mu\hat J + (1{-}\mu)\Delta y\Delta\zeta^\top/
\|\Delta\zeta\|^2$ and a damped least-squares step 
$\Delta\zeta^* = (\hat J^\top\hat J + \rho I)^{-1}\hat J^\top
(y_{\rm tgt}-y)$ is applied with per-coordinate rates and projection to 
$\mathbb{R}_{\ge0}$. This loop runs at every simulator step and uses 
no extra rollouts.

\textbf{Loop 2 -- episode (training-time, between rollouts).}
Backpropagation through the differentiable rollout integrator against 
Eq.~\eqref{eq:method:cvar_emp}, plus auxiliary losses: imitation to a risk-aware 
target ($L_{\rm traj}, L_{\rm vel}$), multi-start robustness 
($L_{\rm multi}$), clearance penalty ($L_{\rm clear}$), 
and $\lambda$-entropy regularizer ($L_\lambda$). Updates the full 
parameter vector including $\theta_R$ and risk encoder $\omega$.

\textbf{Loop 3 -- curriculum (training-time, between phases).}
Geometry-backbone weights are loaded from a geometry-only checkpoint with 
$\lambda_s, \lambda_h \equiv 0$ so $F_{\rm ctx} \equiv 0$ initially. 
Imitation weights are reduced to $0.3\times$ their geometry-only values. 
A passivity surrogate $\mathcal{L}_{\rm pass}$ on a held-out window 
gates phase advancement; the curriculum gradually exposes harder 
material patches, narrower corridors, stronger hazard penalties, and 
interaction scenarios.

\begin{figure}[h]
\centering
\includegraphics[width=\linewidth]{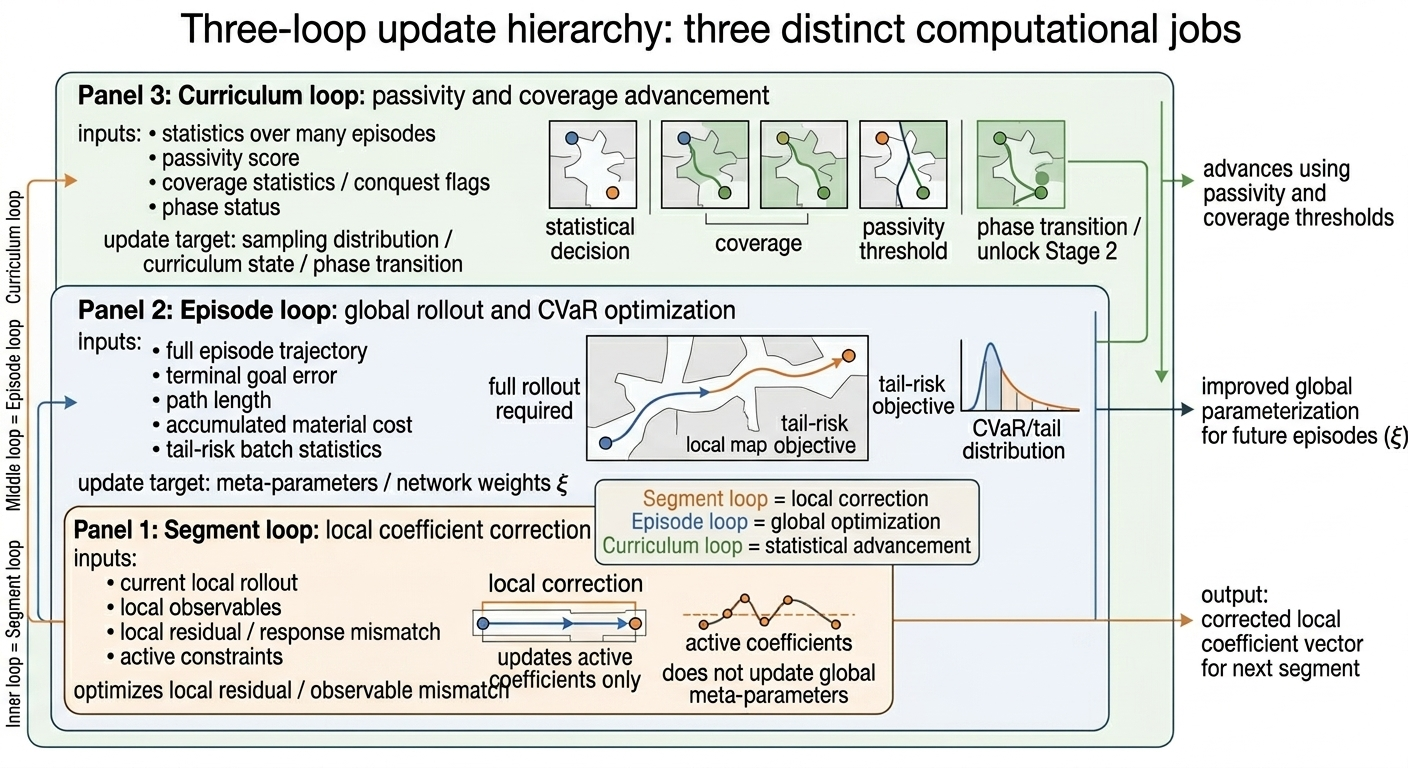}
\caption{\textbf{Three loops as three distinct computational jobs.}
The segment loop corrects active coefficients per step. The episode 
loop optimizes meta-parameters via CVaR. The curriculum loop 
advances training phases statistically.}
\label{fig:app_three_loops}
\end{figure}

\subsection{Context-enriched training algorithm}
\label{app:ctx_training_algorithm}

\begin{algorithm}[t]
\caption{Context-aware continual Hamiltonian learning (training; Loops 2 \& 3)}
\label{alg:ctx_train}
\begin{algorithmic}[1]
\Require geometry-only checkpoint $\theta_{\rm geo} = (\alpha,\beta,\gamma,\theta_R)$;
         dataset $\mathcal{D}$ of short rollouts with local context
         patches; CVaR level $\alpha_c{=}0.95$; batch size $B$;
         horizon $H$; step $\tau$.
\Statex \textbf{Loop 3 -- backward-compatible geometry-only $\to$ context-enriched initialization.}
\State $\theta_{\rm enr} \leftarrow \theta_{\rm scaf}$; initialize risk encoder
        $\omega$ and heads so $\lambda_s, \lambda_h \equiv 0$ at output
       \Comment{$F_{\mathrm{ctx}} \equiv 0$ initially}
\State Reduce imitation weights $(w_{\mathrm{traj}}, w_{\mathrm{vel}})
        \leftarrow 0.3\,(w_{\mathrm{traj}}, w_{\mathrm{vel}})$
\For{epoch $e = 1, \dots, E$}
  \Statex \hspace{0.7em}\textbf{Loop 2 -- episode loop (one minibatch).}
  \For{minibatch $\{(\Pos_0^{(i)}, \Mom_0^{(i)}, \Pos_g^{(i)},
                     \mathcal{O}^{(i)}, P^{(i)}, H^{(i)})\}_{i=1}^{B}
        \in \mathcal{D}$}
    \State $(\alpha^{(i)},\beta^{(i)},\gamma^{(i)})
            \leftarrow \phi_{\mathrm{geom}}(\mathcal{O}^{(i)},
                       \Pos_g^{(i)}{-}\Pos_0^{(i)})$
            \Comment{geometry transformer, unchanged from geometry-only policy}
    \State $z_r^{(i)} \leftarrow \psi_\omega(P^{(i)})$;\;
           $\lambda_s^{(i)},\lambda_h^{(i)} \leftarrow$
           $\lambda^{\max}\sigma(\text{MLP}(z_r^{(i)}, z_g^{(i)}))$
    \For{$t = 0, \dots, H^{(i)}{-}1$ (differentiable rollout)}
      \State $(\tilde r_t,\phi_t,\nabla\tilde r_t,\nabla\phi_t)
              \leftarrow \mathrm{BilinearSample}(P^{(i)},\Pos_t^{(i)})$
              \Comment{resample at \emph{current} $\Pos$, not $\Pos_0$}
      \State $m_{\mathrm{feas}}\leftarrow\mathrm{LocalGate}(P^{(i)},\Pos_t^{(i)},\Pos_g^{(i)})$
              \Comment{Eq.~\eqref{eq:method:route_gate}; set to $1$ for scalar context-enriched field}
      \State $F_{\mathrm{soft}} \leftarrow -m_{\mathrm{feas}}\lambda_s^{(i)} \nabla\tilde r_t$;\;
             $F_{\mathrm{hard}} \leftarrow -\lambda_h^{(i)}\,
                                          b_{\mathrm{sp}}'(\phi_t) \nabla\phi_t$
              \Comment{Eq.~\eqref{eq:method:route_aware_force}}
      \State $\Mom_{t+1}^{(i)} \leftarrow \Mom_t^{(i)}
              + \tau(F_{\mathrm{geom}} + F_{\mathrm{soft}}
                     + F_{\mathrm{hard}})
              - \tau\gamma^{(i)} \Mass^{-1}\Mom_t^{(i)}$
              \Comment{$R_\theta = \gamma\Mass^{-1}\Mom$;
                       $\Mass{=}I$ in implementation}
      \State $\Pos_{t+1}^{(i)} \leftarrow \Pos_t^{(i)} + \tau \Mass^{-1}\Mom_{t+1}^{(i)}$
      \State Accumulate $\mathrm{cum\_risk}, \mathrm{hard\_count}, \mathrm{arc}$
    \EndFor
    \State $J^{(i)} \leftarrow$ Eq.~\eqref{eq:method:J}
    \State $\hat\eta \leftarrow \widehat{Q}_{\alpha_c}(\{J^{(i)}\}_{i=1}^B)$
            \Comment{empirical quantile, \textbf{detach from graph}}
    \State $\widehat{\cvar{\alpha_c}} \leftarrow
           \hat\eta + \tfrac{1}{(1-\alpha_c)B}\sum_i (J^{(i)} - \hat\eta)_+$
            \Comment{Eq.~\eqref{eq:method:cvar_emp}; gradient flows
                     only through tail rollouts}
    \State $\mathcal{L} \leftarrow \widehat{\cvar{\alpha_c}}
                + \sum_{k} w_k L_k$
            \Comment{$L_k \in \{L_{\mathrm{traj}}, L_{\mathrm{vel}},
                                L_{\mathrm{fric}}, L_{\mathrm{clear}},
                                L_{\mathrm{multi}}, L_\lambda\}$}
    \State $\theta_{\rm enr} \leftarrow \theta_{\rm enr} - \mathrm{lr}\,
            \nabla_{\theta_{\rm enr}}\mathcal{L}$,
            $\;\|\nabla\|_2$ clipped to $5.0$
  \EndFor
  \Statex \hspace{0.7em}\textbf{Loop 3 -- curriculum advancement check.}
  \State Compute passivity surrogate $\mathcal{L}_{\mathrm{pass}}$ on
         validation window
  \If{violation rate $< \eta_{\mathrm{tol}}$ over last $W$ epochs}
    \State Mark phase as advanced; optionally widen risk-class
           distribution in $\mathcal{D}$
  \EndIf
\EndFor
\State \Return $\theta_{\rm enr}$
\end{algorithmic}
\end{algorithm}

\subsection{Energy enrichment induces force and sensitivity channels}

\begin{lemma}[Energy enrichment induces force, sensitivity, and excitation channels]
\label{lem:enrichment_channels}
Let
\[
    H_{\rm enr}(q,p;\theta,\vartheta)
    =
    H_{\rm scaf}(q,p;\theta)+\vartheta H_{\mathrm{new}}(q),
\]
where \(H_{\mathrm{new}}\in C^1\) and \(\vartheta\in\mathbb R\) is a learnable scalar.
Then:
\begin{enumerate}
    \item the enriched momentum equation contains the new force channel
    \[
        F_{\mathrm{new}}(q)=-\vartheta\nabla_q H_{\mathrm{new}}(q);
    \]
    \item differentiating a rollout-dependent loss \(J\) through the discrete update
    creates the direct sensitivity stream
    \[
        \frac{\partial J}{\partial \vartheta}
        =
        -\tau\sum_t
        \left\langle
        \frac{\partial J}{\partial p_{t+1}},
        \nabla_qH_{\mathrm{new}}(q_t)
        \right\rangle
        +
        \textnormal{recursive terms};
    \]
    \item local identifiability of \(\vartheta\) from force observations requires
    non-degenerate excitation of the new channel, i.e. the empirical Gram matrix
    over \(\{\nabla_qH_{\mathrm{new}}(q_t)\}_t\) must be nonsingular on the
    observed force subspace.
\end{enumerate}
\end{lemma}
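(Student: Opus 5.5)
The plan is to read all three items directly off the semi-implicit update \eqref{eq:disc_p}--\eqref{eq:disc_q}, with \(H_{\rm ctx}\) replaced by \(\vartheta H_{\rm new}\). No earlier result beyond the setup of Appendix~\ref{app:setup} is needed.

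\textbf{Item 1.} I will use linearity of the gradient:
\[
\nabla_q H_{\rm enr}=\nabla_q H_{\rm scaf}+\vartheta\nabla_q H_{\rm new}.
\]
Substituting this into the momentum update gives
\[
p_{t+1}=p_t-\tau\nabla_qH_{\rm geom}(q_t)+\tau F_{\rm new}(q_t)-\tau D_tM^{-1}p_t+\tau B_tu_t,
\]
where \(F_{\rm new}=-\vartheta\nabla_qH_{\rm new}\). Because \(H_{\rm new}\) depends only on \(q\), the configuration equation \(\dot q=\nabla_pH_{\rm enr}=M^{-1}p\) is unchanged. The new term therefore enters only the momentum equation, and it vanishes identically when \(\vartheta=0\). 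This recovers the geometry-only update.

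\textbf{Item 2.} I will write the rollout as a composition of step maps \(x_{t+1}=\Phi_t(x_t;\vartheta)\), with \(x_t=(q_t,p_t)\), and apply the chain rule in adjoint form.

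1. Define the costate \(\lambda_t=\partial J/\partial x_t\). It satisfies the backward recursion
\[
\lambda_t=\frac{\partial J}{\partial x_t}\Big|_{\rm direct}+\Big(\frac{\partial\Phi_t}{\partial x_t}\Big)^{\!\top}\lambda_{t+1}.
\]
2. The total derivative is then
\[
\frac{dJ}{d\vartheta}=\sum_t\Big\langle\lambda_{t+1},\frac{\partial\Phi_t}{\partial\vartheta}\Big\rangle.
\]
3. Holding \(x_t\) fixed, \(\partial p_{t+1}/\partial\vartheta=-\tau\nabla_qH_{\rm new}(q_t)\). The \(q\)-block of \(\partial\Phi_t/\partial\vartheta\) is \(\tau M^{-1}\) times this, since \(q_{t+1}\) depends on \(\vartheta\) only through \(p_{t+1}\).
4. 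I will route the \(q\)-dependence into the total \(\partial J/\partial p_{t+1}\). This means treating \(\partial J/\partial p_{t+1}\) as including the effect through \(q_{t+1}\), i.e. \(\lambda^p_{t+1}+\tau M^{-1}\lambda^q_{t+1}\).
5. With that convention, the explicit sum is
\[
-\tau\sum_t\big\langle\partial J/\partial p_{t+1},\nabla_qH_{\rm new}(q_t)\big\rangle.
\]
6. The remaining contributions come from \(\vartheta\)-dependence of later \(q_s\) feeding back through \(\nabla_qH_{\rm geom}\), \(D\), and \(\vartheta\nabla_qH_{\rm new}\) itself. These are exactly the backpropagated Jacobian products, which are collected as the ``recursive terms''.

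The one point of care is bookkeeping: I must make clear which partial derivative is meant so that the displayed term is exactly the direct contribution and nothing is double-counted.

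\textbf{Item 3.} I will treat the observed force as
\[
f_t=-\vartheta g_t+\text{known terms},\qquad g_t=\nabla_qH_{\rm new}(q_t).
\]
Since this is linear in \(\vartheta\), the least-squares estimate has normal equation
\[
\Big(\sum_t\|g_t\|^2\Big)\vartheta=-\sum_t\langle g_t,f_t-\text{known}\rangle.
\]
For scalar \(\vartheta\), the Gram matrix is \(\sum_t g_tg_t^\top\), and the estimate is unique iff this matrix is nonzero on the observed force subspace. Equivalently, not all \(g_t\) vanish, so the Gram matrix is nonsingular there. For a vector of coefficients \(\vartheta_j H_j\), as with \((\lambda_s,\lambda_h)\), the same argument requires \(G=\sum_t \Gamma_t^\top\Gamma_t\) to be invertible, where \(\Gamma_t=[\nabla_qH_1,\dots]\).

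For the converse, if \(G\) is singular then some direction \(v\ne0\) satisfies \(\Gamma_tv=0\) for all \(t\). Perturbing \(\vartheta\) along \(v\) leaves every force unchanged, so \(\vartheta\) is not identifiable, and local identifiability therefore requires nonsingularity. The main obstacle is making ``on the observed force subspace'' precise, which I will do by restricting the quadratic form to the span of the \(\Gamma_t\).
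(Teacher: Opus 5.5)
Your proof is correct. Items 1 and 3 follow the paper. Linearity of $\nabla_q$ gives the force channel. Non-identifiability comes from the same observation the paper uses: two coefficients that yield identical observed forces differ by a vector annihilated by every $\nabla_qH_{\rm new}(q_t)$. You go further in two ways. You obtain sufficiency from the normal equations, and you make the scalar condition explicit (some $g_t\neq 0$), where the paper only says the observed gradients must span the relevant force subspace.

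The genuine difference is item 2. The paper works in forward mode. It differentiates the momentum update to get $\partial p_{t+1}/\partial\vartheta=\partial p_t/\partial\vartheta-\tau\nabla_qH_{\rm new}(q_t)-\tau\nabla_q^2H_{\rm enr}\,\partial q_t/\partial\vartheta+\cdots$, pairs this with explicit partials of $J$ in $p_{t+1}$ and $q_{t+1}$, and labels everything except the $-\tau\nabla_qH_{\rm new}$ piece as recursive. That displays the recursive terms concretely, as the statement's wording suggests. However, it implicitly lumps the direct contribution $-\tau^2M^{-1}\nabla_qH_{\rm new}(q_t)$, which reaches $J$ through $q_{t+1}$, into the remainder. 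Your reverse-mode version takes $\partial J/\partial p_{t+1}=\lambda^p_{t+1}+\tau M^{-1}\lambda^q_{t+1}$, the graph adjoint at the node $p_{t+1}$. This absorbs that term and makes the displayed sum exact.

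There are two bookkeeping points to tighten.

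First, your $\lambda_{t+1}$ already solves the full backward recursion. So all feedback through later $\nabla_qH_{\rm geom}$, $D$, and $\vartheta\nabla_qH_{\rm new}$ is already inside the costate. In your convention the ``recursive terms'' are therefore zero as separate summands. Step 6 should say this rather than add them on top, which would double count.

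Second, when you make ``observed force subspace'' precise, put the observation projection into the parameter-space Gram matrix, $\sum_t\Gamma_t^\top P_{\rm obs}^\top P_{\rm obs}\Gamma_t$. Do not restrict a force-space Gram such as $\sum_t\Gamma_t\Gamma_t^\top$ to its own range. That restriction is always nonsingular, so it would make the condition vacuous once there is more than one coefficient, for example when $\nabla_qH_1(q_t)=\nabla_qH_2(q_t)$ for all $t$.
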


\begin{proof}
Since
\[
    \nabla_q H_{\rm enr}
    =
    \nabla_q H_{\rm scaf}+\vartheta\nabla_qH_{\mathrm{new}},
\]
Hamilton's momentum equation contains
\[
    -\nabla_qH_{\rm enr}
    =
    -\nabla_qH_{\rm scaf}-\vartheta\nabla_qH_{\mathrm{new}}.
\]
Thus the added term contributes the force channel
\(F_{\mathrm{new}}=-\vartheta\nabla_qH_{\mathrm{new}}\).

For the sensitivity statement, consider the discrete momentum update
\[
    p_{t+1}
    =
    p_t-\tau\nabla_qH_{\rm scaf}(q_t,p_t)
    -\tau\vartheta\nabla_qH_{\mathrm{new}}(q_t)+\cdots .
\]
Differentiating with respect to \(\vartheta\) gives
\[
    \frac{\partial p_{t+1}}{\partial\vartheta}
    =
    \frac{\partial p_t}{\partial\vartheta}
    -\tau\nabla_qH_{\mathrm{new}}(q_t)
    -\tau\nabla_q^2H_{\rm enr}(q_t,p_t)
        \frac{\partial q_t}{\partial\vartheta}
    +\cdots .
\]
Applying the chain rule to any rollout loss \(J\) yields
\[
    \frac{\partial J}{\partial\vartheta}
    =
    \sum_t
    \left\langle
    \frac{\partial J}{\partial p_{t+1}},
    \frac{\partial p_{t+1}}{\partial\vartheta}
    \right\rangle
    +
    \sum_t
    \left\langle
    \frac{\partial J}{\partial q_{t+1}},
    \frac{\partial q_{t+1}}{\partial\vartheta}
    \right\rangle .
\]
The displayed direct stream is the term produced by
\(-\tau\nabla_qH_{\mathrm{new}}(q_t)\); the remaining terms are recursive
contributions through \(\partial q_t/\partial\vartheta\) and
\(\partial p_t/\partial\vartheta\).

For identifiability, suppose two coefficients \(\vartheta,\vartheta'\) induce the same
observed force on the rollout. Then
\[
    (\vartheta-\vartheta')\nabla_qH_{\mathrm{new}}(q_t)=0
\]
in all observed force directions. A nonzero coefficient difference is impossible only
when the observed gradients span the relevant force subspace. Equivalently, the
corresponding empirical Gram matrix is nonsingular on that subspace.
\end{proof}

\subsection{Force-dictionary separation}
\label{app:force_dictionary_separation}

\begin{lemma}[Context forces cannot be obtained by geometric reweighting]
Let the geometry-only force dictionary at \(q\) be
\[
    \mathcal D_{\mathrm{geom}}(q)
    =
    \operatorname{span}\{\nabla_q D_{\mathcal G}(q),
    \nabla_q b_{\mathrm{IPC}}(d_1(q)),\ldots,
    \nabla_q b_{\mathrm{IPC}}(d_N(q))\}.
\]
If \(P_{\mathcal D^\perp}\nabla_q\widetilde r(q)\neq0\), then there is no
choice of geometry-only coefficient reweighting that reproduces
\(F_{\mathrm{soft}}=-\lambda_s\nabla_q\widetilde r(q)\) for
\(\lambda_s\neq0\).
\end{lemma}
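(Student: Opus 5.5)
The argument is a short linear-algebra contradiction. The point is that any geometric reweighting can only produce forces lying in the span $\mathcal D_{\mathrm{geom}}(q)$, whereas $F_{\mathrm{soft}}$ has a nonzero component orthogonal to that span.

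First I make precise what ``geometry-only coefficient reweighting'' means. Holding the geometric Hamiltonian's structure fixed and changing only $(\beta,\alpha_1,\ldots,\alpha_N)$ to new values $(\beta',\alpha_1',\ldots,\alpha_N')$ changes the geometric force at $q$ by
\[
    \Delta F(q) = -(\beta'-\beta)\nabla_q D_{\mathcal G}(q) - \sum_{j=1}^N(\alpha_j'-\alpha_j)\nabla_q b_{\mathrm{IPC}}(d_j(q)).
\]
Indeed $\nabla_q\Ham_{\rm geom}$ is linear in these coefficients, exactly as in the first item of Lemma~\ref{lem:enrichment_channels}. So $\Delta F(q)\in\mathcal D_{\mathrm{geom}}(q)$ for every choice of coefficients, and the same holds for the full reweighted geometric force. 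The dissipative port $R_\theta$ depends on $p$, not on the configuration gradient, so it does not contribute to the conservative force at a fixed $(q,p)$ configuration. I will state this restriction explicitly, since $\gamma$ is not in the dictionary.

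Next, suppose for contradiction that some reweighting reproduces $F_{\mathrm{soft}}$, i.e.\ $\Delta F(q) = -\lambda_s\nabla_q\widetilde r(q)$ with $\lambda_s\neq0$. Let $P_{\mathcal D^\perp}$ be the orthogonal projector onto $\mathcal D_{\mathrm{geom}}(q)^\perp$. It annihilates every element of $\mathcal D_{\mathrm{geom}}(q)$, so applying it to both sides gives
\[
    0 = P_{\mathcal D^\perp}\Delta F(q) = -\lambda_s P_{\mathcal D^\perp}\nabla_q\widetilde r(q).
\]
Dividing by $\lambda_s\neq0$ forces $P_{\mathcal D^\perp}\nabla_q\widetilde r(q)=0$, contradicting the hypothesis.

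The main subtlety is not the algebra but fixing the class of admissible reweightings. If reweightings were allowed to depend on the context patch and to vary with $q$, the conclusion would still be pointwise at the given $q$. At that point the coefficients are just numbers, so the argument above goes through unchanged. I would note this explicitly so the lemma reads as a pointwise statement. I would also remark that the same projection argument applies to $F_{\mathrm{hard}}$ whenever $P_{\mathcal D^\perp}\nabla_q\phi\neq0$ and $b_{\mathrm{sp}}'\neq0$.
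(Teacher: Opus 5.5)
Your proposal is correct and takes the same route as the paper's proof. Any geometric reweighting yields a force in $\mathcal D_{\mathrm{geom}}(q)$, while $-\lambda_s\nabla_q\widetilde r(q)$ has a nonzero component in $\mathcal D_{\mathrm{geom}}(q)^\perp$, so the two cannot coincide. Your version simply makes explicit what the paper leaves implicit: the class of admissible reweightings, the exclusion of the dissipative coefficient, and the projection-and-divide-by-$\lambda_s$ step.
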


\begin{proof}
Any geometry-only coefficient reweighting remains in
\(\mathcal D_{\mathrm{geom}}(q)\). But
\(F_{\mathrm{soft}}\) has nonzero projection onto
\(\mathcal D_{\mathrm{geom}}(q)^\perp\) whenever
\(\lambda_sP_{\mathcal D^\perp}\nabla_q\widetilde r(q)\neq0\), so it cannot equal
any vector in the geometry-only span.
\end{proof}

\subsection{Passivity and practical discrete dissipativity}

\begin{theorem}[Contextual Hamiltonian enrichment preserves dissipativity]
\label{thm:continuous_passivity}
Under Assumptions~\ref{ass:regularity}--\ref{ass:diss}, if the context \(z\) is
fixed and \(u=0\), then along solutions of \eqref{eq:ct_q}--\eqref{eq:ct_p},
\[
    \frac{d}{dt}H_{\rm enr}(q(t),p(t);z)
    =
    -v(t)^\top D(q(t),p(t);z)v(t)
    \le
    -\rho\|v(t)\|^2
    \le 0 .
\]
Thus the addition of \(H_{\mathrm{ctx}}\) preserves the port-Hamiltonian dissipativity
structure. If \(z=z(t)\) and \(u\neq0\), then
\[
    \frac{d}{dt}H_{\rm enr}
    =
    -v^\top Dv
    +
    \left\langle \nabla_zH_{\mathrm{ctx}}(q;z),\dot z \right\rangle
    +
    \left\langle \nabla_pH_{\rm enr},Bu \right\rangle .
\]
Hence the enriched system is passive with storage \(H_{\rm enr}\) and supply rate
\[
    s(t)
    =
    \left\langle \nabla_zH_{\mathrm{ctx}}(q;z),\dot z \right\rangle
    +
    \left\langle \nabla_pH_{\rm enr},Bu \right\rangle .
\]
\end{theorem}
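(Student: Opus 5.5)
The plan is a direct energy-balance computation. Differentiate $H_{\rm enr}$ along a trajectory of \eqref{eq:ct_q}--\eqref{eq:ct_p}, then substitute the dynamics. Assumption~\ref{ass:regularity} makes $H_{\rm geom}$ and $H_{\rm ctx}$ $C^2$ on the tube $\mathcal K$, so $t\mapsto H_{\rm enr}(q(t),p(t);z(t))$ is differentiable (for $z(t)$ we additionally assume $H_{\rm ctx}$ is $C^1$ in $z$ and $z$ is $C^1$ in time). The chain rule then gives
\[
\tfrac{d}{dt}H_{\rm enr}=\langle\nabla_q H_{\rm enr},\dot q\rangle+\langle\nabla_p H_{\rm enr},\dot p\rangle+\langle\nabla_z H_{\rm ctx},\dot z\rangle .
\]
Only $H_{\rm ctx}$ depends on $z$, which explains why only that term appears.

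\textbf{Step 1 (fixed context, $u=0$).} Substitute $\dot q=M^{-1}p=v$ and $\dot p=-\nabla_q H_{\rm enr}-Dv$. Since $\nabla_p H_{\rm enr}=M^{-1}p=v$, the two Hamiltonian terms give $\langle\nabla_q H_{\rm enr},v\rangle-\langle v,\nabla_q H_{\rm enr}\rangle=0$. This is the skew-symmetric, energy-conserving part of the interconnection. The key point is that the cancellation holds for \emph{any} potential added to $q$, so $H_{\rm ctx}$ enters symmetrically and cancels exactly like $H_{\rm geom}$. What remains is $-v^\top D v$. Assumption~\ref{ass:diss} then gives $D\succeq\rho I$, hence $-v^\top Dv\le-\rho\|v\|^2\le0$, which is the first display.

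\textbf{Step 2 (time-varying context, $u\neq0$).} Repeat the computation with the extra term $Bu$ in $\dot p$. This adds $\langle\nabla_p H_{\rm enr},Bu\rangle$. The explicit time dependence through $z(t)$ adds $\langle\nabla_z H_{\rm ctx},\dot z\rangle$. Collecting terms yields the stated identity.

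\textbf{Step 3 (passivity).} Integrate over $[0,T]$ and drop the nonpositive term $-v^\top Dv$. This gives
\[
H_{\rm enr}(T)-H_{\rm enr}(0)\le\int_0^T s(t)\,dt,
\]
which is the dissipation inequality with storage $H_{\rm enr}$ and supply rate $s$. For this to be a genuine storage function, $H_{\rm enr}$ should be bounded below on $\mathcal K$. This follows from continuity on the compact set $\mathcal K$ (Assumption~\ref{ass:regularity}); alternatively, one can note that the kinetic term is nonnegative and that $\tilde r\ge0$ and $b_{\rm sp}\ge0$ with $\lambda_s,\lambda_h\ge0$ (sigmoid heads).

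\textbf{Expected obstacle.} The computation itself is routine. The only delicate point is regularity in the time-varying case: the patch $z_t$ is updated discretely, and the gate $m_{\rm feas}$ is frozen per substep. I would therefore state the $z(t)$ result for piecewise-$C^1$ context. Between updates the fixed-$z$ identity applies. At a switching time, $H_{\rm enr}$ jumps by $H_{\rm ctx}(q;z^+)-H_{\rm ctx}(q;z^-)$, and this jump is counted as the integrated supply $\int\langle\nabla_z H_{\rm ctx},\dot z\rangle$ in the limiting sense. I would add this as a remark rather than assume $z$ is smooth.
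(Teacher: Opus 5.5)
Your proposal is correct and follows the paper's proof essentially line for line: differentiate $H_{\rm enr}$ along trajectories, use $\nabla_p H_{\rm enr}=M^{-1}p=v$ to cancel the skew interconnection terms, bound $-v^\top Dv$ via $D\succeq\rho I$, and pick up the $\langle\nabla_z H_{\rm ctx},\dot z\rangle$ and $\langle\nabla_p H_{\rm enr},Bu\rangle$ terms by the chain rule. The integrated dissipation inequality, the lower bound on the storage over $\mathcal K$, and the piecewise-$C^1$ remark for discretely updated patches add rigor the paper leaves implicit, but they do not change the route.
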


\begin{proof}
For fixed \(z\), differentiate \(H_{\rm enr}\) along trajectories:
\[
    \dot H_{\rm enr}
    =
    \langle\nabla_qH_{\rm enr},\dot q\rangle
    +
    \langle\nabla_pH_{\rm enr},\dot p\rangle .
\]
Using \(\dot q=\nabla_pH_{\rm enr}\) and
\(\dot p=-\nabla_qH_{\rm enr}-D\nabla_pH_{\rm enr}\), we obtain
\[
    \dot H_{\rm enr}
    =
    \langle\nabla_qH_{\rm enr},\nabla_pH_{\rm enr}\rangle
    +
    \left\langle
    \nabla_pH_{\rm enr},
    -\nabla_qH_{\rm enr}-D\nabla_pH_{\rm enr}
    \right\rangle .
\]
The two interconnection terms cancel:
\[
    \langle\nabla_qH_{\rm enr},\nabla_pH_{\rm enr}\rangle
    -
    \langle\nabla_pH_{\rm enr},\nabla_qH_{\rm enr}\rangle=0.
\]
Therefore
\[
    \dot H_{\rm enr}
    =
    -\langle\nabla_pH_{\rm enr},D\nabla_pH_{\rm enr}\rangle.
\]
Since \(\nabla_pH_{\rm enr}=M^{-1}p=v\),
\[
    \dot H_{\rm enr}=-v^\top Dv.
\]
By Assumption~\ref{ass:diss}, \(D\succeq \rho I\), so
\[
    -v^\top Dv\le -\rho\|v\|^2\le 0.
\]
If \(z=z(t)\), the chain rule adds
\[
    \langle\nabla_zH_{\rm enr},\dot z\rangle
    =
    \langle\nabla_zH_{\mathrm{ctx}},\dot z\rangle .
\]
If \(u\neq 0\), the input term \(Bu\) in \(\dot p\) contributes
\[
    \langle\nabla_pH_{\rm enr},Bu\rangle .
\]
Combining these terms gives the claimed supply-rate identity.
\end{proof}

\begin{theorem}[Practical discrete dissipativity and step-size condition]
\label{thm:discrete_passivity}
Under Assumptions~\ref{ass:regularity}--\ref{ass:diss}, let \(u_t=0\).
For the semi-implicit context-enriched update \eqref{eq:disc_p}--\eqref{eq:disc_q}, there
exists a finite constant \(C_{\mathcal K}>0\) such that
\[
    H_{\rm enr}(x_{t+1};z_t)-H_{\rm enr}(x_t;z_t)
    \le
    -\tau v_t^\top D_tv_t+C_{\mathcal K}\tau^2 .
\]
Consequently, the discrete rollout is practically dissipative. If
\[
    \tau < \frac{v_t^\top D_tv_t}{C_{\mathcal K}},
\]
then \(H_{\rm enr}(x_{t+1};z_t)<H_{\rm enr}(x_t;z_t)\). If
\(v_t^\top D_tv_t\ge d_{\min}>0\) uniformly on a region of interest, then any
\[
    \tau < \frac{d_{\min}}{C_{\mathcal K}}
\]
guarantees one-step energy decrease on that region.
\end{theorem}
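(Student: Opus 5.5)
We treat one step of the semi-implicit update \eqref{eq:disc_p}--\eqref{eq:disc_q} with $u_t=0$ and the context $z_t$ frozen. The plan is to split the energy increment into a kinetic part and a potential part and Taylor-expand each to second order in $\tau$. The first-order terms should reproduce the continuous-time identity of Theorem~\ref{thm:continuous_passivity}, and every remainder should be absorbed into $C_{\mathcal K}\tau^2$ using compactness of $\mathcal K$ (Assumption~\ref{ass:regularity}).

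Write $V(q)=H_{\rm geom}(q)+H_{\rm ctx}(q;z_t)$ and $g_t=\nabla_q V(q_t)$. The momentum update then reads $p_{t+1}=p_t-\tau(g_t+D_tv_t)$.

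\emph{Kinetic part.} Expanding the quadratic form exactly gives
\[
\tfrac12 p_{t+1}^\top M^{-1}p_{t+1}-\tfrac12 p_t^\top M^{-1}p_t
=-\tau\langle v_t,g_t+D_tv_t\rangle+\tfrac{\tau^2}{2}(g_t+D_tv_t)^\top M^{-1}(g_t+D_tv_t).
\]

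\emph{Potential part.} We apply Taylor's theorem with the Lipschitz constant $L_V$ of $\nabla_qV$ on $\mathcal K$, noting that $q_{t+1}-q_t=\tau v_{t+1}$:
\[
V(q_{t+1})-V(q_t)\le \tau\langle g_t,v_{t+1}\rangle+\tfrac{L_V}{2}\tau^2\|v_{t+1}\|^2 .
\]
Since $v_{t+1}=v_t-\tau M^{-1}(g_t+D_tv_t)$, this becomes $\tau\langle g_t,v_t\rangle$ plus an $O(\tau^2)$ term.

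\emph{Combining.} Adding the two parts, the cross terms $-\tau\langle v_t,g_t\rangle$ and $+\tau\langle g_t,v_t\rangle$ cancel. This is the discrete analogue of the interconnection cancellation in Theorem~\ref{thm:continuous_passivity}, and it is exactly why the semi-implicit ordering (momentum first, then position) matters. What remains is $-\tau v_t^\top D_tv_t$ plus terms quadratic in $\tau$ involving $\|g_t\|$, $\|D_tv_t\|$, $\|v_t\|$, $\|v_{t+1}\|$, $\|M^{-1}\|$ and $L_V$.

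\emph{Uniform bound on the remainder.} On the compact tube $\mathcal K$, each of these quantities is bounded:
\begin{itemize}
\item $\|\nabla H_{\rm geom}\|$ is bounded by continuity;
\item $\|\nabla H_{\rm ctx}\|\le G_{\rm ctx}$ by Assumption~\ref{ass:bounded_ctx};
\item $\|p\|$ is bounded by compactness;
\item $D$ is bounded on $\mathcal K$, which I would take as implicit in Assumption~\ref{ass:diss} together with continuity.
\end{itemize}
Since both trajectories stay in $\mathcal K$, $v_{t+1}$ is also bounded. Collecting these into a single constant $C_{\mathcal K}$, valid for $\tau\le\tau_0$ with some fixed $\tau_0$, yields the main inequality.

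The two consequences follow by rearrangement. If $-\tau v_t^\top D_tv_t+C_{\mathcal K}\tau^2<0$, then dividing by $\tau>0$ gives the condition $\tau<v_t^\top D_tv_t/C_{\mathcal K}$. Substituting the uniform lower bound $d_{\min}$ then gives the region-wide step-size condition.

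The main obstacle is bookkeeping rather than conceptual. Uniformity of $C_{\mathcal K}$ needs boundedness of $D_t$ and of $v_{t+1}$, and possibly a cap $\tau\le\tau_0$ so that the cross term $\tau^3$ is absorbed into $\tau^2$. I would state these explicitly as consequences of compactness in Assumption~\ref{ass:regularity}.
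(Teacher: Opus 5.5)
Your proof is correct, but it takes a different route from the paper's. The paper never splits kinetic and potential energy. It writes the semi-implicit step as a first-order consistent discretization $x_{t+1}=x_t+\tau f_{z_t}(x_t)+e_t$ with $\|e_t\|\le C_I\tau^2$, and applies the smoothness inequality to $H_{\rm enr}$ jointly in $x=(q,p)$. It then imports $\langle\nabla H_{\rm enr},f_{z_t}\rangle=-v_t^\top D_tv_t$ from Theorem~\ref{thm:continuous_passivity} and lumps every remaining term into $C_{\mathcal K}\tau^2$. That argument is short and integrator-agnostic, since it works for any consistent first-order scheme, but it leaves the constant implicit.

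Your direct discrete computation is self-contained and makes the constant explicit. Carrying your algebra through with $w_t=g_t+D_tv_t$, the $\tau^2$ terms collapse exactly to
\[
\tau^2\Bigl[\tfrac12 (D_tv_t)^\top M^{-1}D_tv_t-\tfrac12 g_t^\top M^{-1}g_t+\tfrac{L_V}{2}\|v_{t+1}\|^2\Bigr].
\]
Since $\|v_{t+1}\|$ is bounded because $x_{t+1}\in\mathcal K$, no $\tau^3$ term appears and the cap $\tau\le\tau_0$ you anticipate is unnecessary.

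This computation also corrects one remark of yours. The $O(\tau)$ cancellation of the interconnection cross terms is not what the semi-implicit ordering buys. Explicit Euler ($q_{t+1}=q_t+\tau v_t$) produces the same cancellation, which is precisely why the paper's consistency argument suffices. What the ordering actually buys is the nonpositive sign of the $g_t^\top M^{-1}g_t$ contribution in the remainder. Explicit Euler instead leaves $+\tfrac12 g_t^\top M^{-1}g_t$ plus a cross term $g_t^\top M^{-1}D_tv_t$.

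You are right that an upper bound on $D$ over $\mathcal K$ is needed; Assumption~\ref{ass:diss} only gives a lower bound. This applies equally to the paper's proof, where it is hidden in the claim that $\|f_z\|$ is bounded. Both proofs also implicitly need the segment $[q_t,q_{t+1}]$ to stay where the Lipschitz bound on the potential gradient holds.
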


\begin{proof}
Let \(f_z(x)\) be the continuous-time vector field defined by
\eqref{eq:ct_q}--\eqref{eq:ct_p} with \(u=0\). Since the semi-implicit update is a
first-order consistent discretization of this vector field and all derivatives are bounded
on compact \(\mathcal K\), there exists \(C_I>0\) such that
\[
    x_{t+1}=x_t+\tau f_{z_t}(x_t)+e_t,
    \qquad
    \|e_t\|\le C_I\tau^2 .
\]
Because \(H_{\rm enr}\in C^2(\mathcal K)\), its gradient is Lipschitz. Let \(L_H\) be a
Lipschitz constant for \(\nabla H_{\rm enr}\). The smoothness inequality gives
\[
    H_{\rm enr}(x_{t+1})-H_{\rm enr}(x_t)
    \le
    \langle\nabla H_{\rm enr}(x_t),x_{t+1}-x_t\rangle
    +
    \frac{L_H}{2}\|x_{t+1}-x_t\|^2 .
\]
Substituting \(x_{t+1}-x_t=\tau f_{z_t}(x_t)+e_t\),
\begin{align*}
    H_{\rm enr}(x_{t+1})-H_{\rm enr}(x_t)
    &\le
    \tau\langle\nabla H_{\rm enr}(x_t),f_{z_t}(x_t)\rangle
    +
    \langle\nabla H_{\rm enr}(x_t),e_t\rangle  \\
    &\qquad
    +
    \frac{L_H}{2}\|\tau f_{z_t}(x_t)+e_t\|^2 .
\end{align*}
By Theorem~\ref{thm:continuous_passivity},
\[
    \langle\nabla H_{\rm enr}(x_t),f_{z_t}(x_t)\rangle
    =
    -v_t^\top D_tv_t .
\]
The remaining two terms are bounded by \(C_{\mathcal K}\tau^2\) for some finite
constant \(C_{\mathcal K}\), since \(\mathcal K\) is compact and
\(\|\nabla H_{\rm enr}\|\), \(\|f_z\|\), and \(\|e_t\|/\tau^2\) are bounded there. Hence
\[
    H_{\rm enr}(x_{t+1};z_t)-H_{\rm enr}(x_t;z_t)
    \le
    -\tau v_t^\top D_tv_t+C_{\mathcal K}\tau^2 .
\]
Strict decrease follows whenever
\(C_{\mathcal K}\tau^2<\tau v_t^\top D_tv_t\). Dividing by \(\tau>0\) yields the
step-size condition. The uniform-margin result follows by replacing
\(v_t^\top D_tv_t\) by \(d_{\min}\).
\end{proof}

\subsection{Geometry-only preservation and no-hallucination}

\begin{theorem}[Geometry-only preservation under small context force]
\label{thm:scaffold_preservation}
Let \(x_t^{\rm geo}=(q_t^{\rm geo},p_t^{\rm geo})\) be the geometry-only rollout and
\(x_t^{\rm enr}=(q_t^{\rm enr},p_t^{\rm enr})\) be the context-enriched rollout, initialized from the same
state, and suppose Assumptions~\ref{ass:regularity},
\ref{ass:bounded_ctx}, and \ref{ass:lipschitz_rollout} hold. If along the rollout tube
\[
    \|\nabla_qH_{\mathrm{ctx}}(q;z)\|\le \varepsilon .
\]
Then after \(N\) steps,
\[
    \|x_N^{\rm enr}-x_N^{\rm geo}\|
    \le
    \tau\sqrt{1+\tau^2\|M^{-1}\|^2}\,
    \varepsilon\,
    \frac{(1+\tau L_\Phi)^N-1}{\tau L_\Phi},
\]
with the last factor interpreted as \(N\) when \(L_\Phi=0\). In particular, if
\(\varepsilon=0\), the context-enriched rollout recovers the geometry-only rollout up to numerical
roundoff.
\end{theorem}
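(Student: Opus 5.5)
The plan is a discrete Grönwall argument on the error between the two rollouts. Set $e_t = x_t^{\rm enr}-x_t^{\rm geo}$, with $e_0=0$ since both rollouts start from the same state. Write one step of the shared geometry-only update \eqref{eq:disc_p}--\eqref{eq:disc_q} as a map $\Phi_\tau$, so that $x_{t+1}^{\rm geo}=\Phi_\tau(x_t^{\rm geo})$. The enriched step differs from $\Phi_\tau$ only through the extra momentum term $-\tau\nabla_qH_{\rm ctx}(q_t^{\rm enr};z_t)$. I will therefore write
\[
x_{t+1}^{\rm enr}=\Phi_\tau(x_t^{\rm enr})+\delta_t,
\]
where $\delta_t$ is the perturbation injected by the context force.

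\textbf{Step 1: compute $\delta_t$ exactly.} The update is semi-implicit, so the momentum perturbation $\Delta p=-\tau\nabla_qH_{\rm ctx}$ also enters the position update, as $\Delta q=\tau M^{-1}\Delta p$. This gives
\[
\delta_t=\bigl(-\tau g_t,\;-\tau^2M^{-1}g_t\bigr), \qquad g_t=\nabla_qH_{\rm ctx}(q_t^{\rm enr};z_t).
\]
Hence
\[
\|\delta_t\|\le \tau\|g_t\|\sqrt{1+\tau^2\|M^{-1}\|^2}\le \tau\sqrt{1+\tau^2\|M^{-1}\|^2}\,\varepsilon,
\]
using the hypothesis $\|\nabla_qH_{\rm ctx}\|\le\varepsilon$ on the tube. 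This step explains the prefactor in the statement.

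\textbf{Step 2: propagate the error.} By Assumption~\ref{ass:regularity}, both trajectories stay in $\mathcal K$. By Assumption~\ref{ass:lipschitz_rollout}, $\|\Phi_\tau(x)-\Phi_\tau(y)\|\le(1+\tau L_\Phi)\|x-y\|$ for $x,y\in\mathcal K$. Therefore
\[
\|e_{t+1}\|\le(1+\tau L_\Phi)\|e_t\|+c\,\varepsilon,\qquad c=\tau\sqrt{1+\tau^2\|M^{-1}\|^2}.
\]
Unrolling from $e_0=0$ gives
\[
\|e_N\|\le c\varepsilon\sum_{k=0}^{N-1}(1+\tau L_\Phi)^k=c\varepsilon\,\frac{(1+\tau L_\Phi)^N-1}{\tau L_\Phi}.
\]
When $L_\Phi=0$ the geometric sum equals $N$. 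When $\varepsilon=0$, $\delta_t\equiv0$ and $e_t\equiv0$, so the enriched rollout recovers the geometry-only rollout exactly, up to floating-point roundoff.

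\textbf{Main obstacle.} The delicate point is using the Lipschitz bound legitimately. Assumption~\ref{ass:lipschitz_rollout} is only local, on $\mathcal K$, so I need the intermediate point $\Phi_\tau(x_t^{\rm enr})$ to lie where the constant applies. I will handle this by reading the assumption as applying to pairs of states in $\mathcal K$, which both trajectories inhabit by Assumption~\ref{ass:regularity}. If needed, I will enlarge $\mathcal K$ slightly to a compact neighborhood, which is permissible because $\delta_t$ is $O(\tau\varepsilon)$.

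One further check: the perturbation $\delta_t$ must be evaluated along the enriched trajectory, so that the bound $\|g_t\|\le\varepsilon$ applies pointwise there. The dissipative and $z$-dependent terms of the update are shared by both rollouts and are absorbed into $\Phi_\tau$, so they contribute no extra perturbation.
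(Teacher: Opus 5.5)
Your proposal is correct and follows the paper's proof essentially step for step. It uses the same exact one-step perturbation $(-\tau g_t,-\tau^2M^{-1}g_t)$ to obtain the prefactor $\tau\sqrt{1+\tau^2\|M^{-1}\|^2}\,\varepsilon$, and the same Lipschitz recursion $e_{t+1}\le(1+\tau L_\Phi)e_t+\tau\sqrt{1+\tau^2\|M^{-1}\|^2}\,\varepsilon$ unrolled from $e_0=0$. The obstacle you flag is not really one: with $x_{t+1}^{\rm enr}=\Phi_\tau(x_t^{\rm enr})+\delta_t$, the Lipschitz estimate is only ever applied to the inputs $x_t^{\rm enr},x_t^{\rm geo}\in\mathcal K$, so no enlargement of $\mathcal K$ is needed.
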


\begin{proof}
The only difference between the geometry-only and context-enriched fields is the context term in the
momentum update. At one step, this perturbation is
\[
    \delta p_{t+1}
    =
    -\tau\nabla_qH_{\mathrm{ctx}}(q_t;z_t),
\]
and the induced position perturbation is
\[
    \delta q_{t+1}
    =
    \tau M^{-1}\delta p_{t+1}
    =
    -\tau^2M^{-1}\nabla_qH_{\mathrm{ctx}}(q_t;z_t).
\]
Thus
\[
    \|\delta p_{t+1}\|\le \tau\varepsilon,
    \qquad
    \|\delta q_{t+1}\|\le \tau^2\|M^{-1}\|\varepsilon.
\]
The one-step state perturbation is therefore bounded by
\[
    \|\delta x_{t+1}\|
    \le
    \tau\sqrt{1+\tau^2\|M^{-1}\|^2}\,\varepsilon .
\]
Let
\[
    e_t=\|x_t^{\rm enr}-x_t^{\rm geo}\|.
\]
By Assumption~\ref{ass:lipschitz_rollout},
\[
    e_{t+1}
    \le
    (1+\tau L_\Phi)e_t
    +
    \tau\sqrt{1+\tau^2\|M^{-1}\|^2}\,\varepsilon .
\]
Since \(e_0=0\), iterating gives
\[
    e_N
    \le
    \tau\sqrt{1+\tau^2\|M^{-1}\|^2}\,\varepsilon
    \sum_{j=0}^{N-1}(1+\tau L_\Phi)^j .
\]
Evaluating the geometric sum yields the result.
\end{proof}

\begin{corollary}[No hallucinated lateral escape]
\label{cor:no_hallucination}
Let \(P_\perp\) denote projection onto the lateral subspace. Suppose local
coordinates are baseline-aligned so that longitudinal context forces do not generate
lateral motion to first order. If
\[
    \|P_\perp\nabla_qH_{\mathrm{ctx}}(q;z)\|\le \varepsilon_\perp
\]
on the rollout tube, then
\[
    \|P_\perp(q_N^{\rm enr}-q_N^{\rm geo})\|
    \le
    C_{\perp,N}\varepsilon_\perp,
\]
where
\[
    C_{\perp,N}
    =
    \tau^2
    \|P_\perp M^{-1}P_\perp^\top\|
    \frac{(1+\tau L_\Phi)^N-1}{\tau L_\Phi}.
\]
When \(\varepsilon_\perp=0\), the context channel cannot produce a lateral escape.
\end{corollary}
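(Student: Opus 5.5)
The plan is to repeat the error-propagation argument of Theorem~\ref{thm:scaffold_preservation}, but to carry it out only on the lateral component. The key structural fact is that the geometry-only and context-enriched updates differ only by the momentum increment $-\tau\nabla_qH_{\mathrm{ctx}}(q_t;z_t)$. The position update then maps any momentum discrepancy into position through $\tau M^{-1}$.

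First, write the one-step lateral perturbation injected by the context channel. It is $\delta p_{t+1}^\perp=-\tau P_\perp\nabla_qH_{\mathrm{ctx}}(q_t;z_t)$, which has norm at most $\tau\varepsilon_\perp$. The induced lateral position perturbation is $P_\perp\tau M^{-1}\delta p_{t+1}$.

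This is where the baseline-alignment hypothesis enters. The longitudinal part of the context force produces no lateral displacement to first order, so only the lateral component of the force matters. Hence $P_\perp M^{-1}\delta p_{t+1}$ can be replaced by $P_\perp M^{-1}P_\perp^\top P_\perp\delta p_{t+1}$. This gives the one-step bound
\[
\|P_\perp\,\delta q_{t+1}\|\le \tau^2\|P_\perp M^{-1}P_\perp^\top\|\,\varepsilon_\perp .
\]

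Second, propagate this bound through the shared dynamics. Define the lateral error $e_t^\perp=\|P_\perp(q_t^{\rm enr}-q_t^{\rm geo})\|$. By Assumption~\ref{ass:lipschitz_rollout}, together with the same alignment hypothesis (the shared update does not mix accumulated longitudinal error into the lateral block to first order), I will aim for the recursion
\[
e_{t+1}^\perp\le(1+\tau L_\Phi)e_t^\perp+\tau^2\|P_\perp M^{-1}P_\perp^\top\|\varepsilon_\perp ,
\qquad e_0^\perp=0 .
\]
Iterating exactly as in the proof of Theorem~\ref{thm:scaffold_preservation} and summing the geometric series gives $e_N^\perp\le C_{\perp,N}\varepsilon_\perp$, where the last factor of $C_{\perp,N}$ is read as $N$ when $L_\Phi=0$. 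Setting $\varepsilon_\perp=0$ makes the bound vanish, which is the no-escape statement.

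The main obstacle is justifying that the lateral error recursion closes on itself. In general the Lipschitz map $\Phi$ couples the lateral components with the longitudinal and momentum components. Longitudinal context forces, which are not small by hypothesis, could then leak into the lateral error at later steps.

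I would handle this by reading the "baseline-aligned, first-order" hypothesis as block-triangularity of the linearized shared update in lateral versus longitudinal coordinates along the rollout tube. The lateral block then has Lipschitz constant at most $1+\tau L_\Phi$, and longitudinal error enters only at second order, which is absorbed into the first-order statement.

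If a fully rigorous version is wanted, a fallback is to bound the coupling explicitly. Theorem~\ref{thm:scaffold_preservation} controls the longitudinal error using the full force bound $G_{\mathrm{ctx}}$ from Assumption~\ref{ass:bounded_ctx}, and the coupling can be charged to a higher-order remainder. That remainder is what the corollary's "to first order" qualifier discards.
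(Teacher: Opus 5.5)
Your proposal follows the paper's proof step for step: project the one-step position perturbation, then feed it into the Lipschitz recursion of Theorem~\ref{thm:scaffold_preservation}. The recursion you aim for, \(e_{t+1}^\perp\le(1+\tau L_\Phi)e_t^\perp+\tau^2\|P_\perp M^{-1}P_\perp^\top\|\varepsilon_\perp\), is false, and the paper's proof has the same defect. The failure is not the longitudinal leakage you flag. The lateral position error is not a closed error variable. The lateral momentum kick \(-\tau P_\perp\nabla_qH_{\mathrm{ctx}}\), whose size \(\tau\varepsilon_\perp\) you compute and then discard, stays in \(p\) and displaces \(q\) again at every later step. This is a \(q^\perp\)--\(p^\perp\) coupling inside the lateral block, so block-triangularity does not remove it.

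Here is a concrete counterexample. Take \(M=I\), \(H_{\mathrm{geom}}\equiv0\), no dissipation, \(\lambda_h=0\), and \(\tilde r\) a linear lateral ramp on the tube. This gives a constant purely lateral force \(F\) with \(\|F\|=\varepsilon_\perp\), and alignment holds exactly. The shared map \((q,p)\mapsto(q+\tau p,p)\) has operator norm \(\tfrac12(\tau+\sqrt{\tau^2+4})\le 1+\tau\), so \(L_\Phi=1\) is admissible. Then \(p^{\rm enr}_t-p^{\rm geo}_t=t\tau F\) and \(q^{\rm enr}_N-q^{\rm geo}_N=\tfrac12N(N+1)\tau^2F\).
\begin{itemize}
\item At \(N=2\) the lateral deviation is \(3\tau^2\varepsilon_\perp\), which exceeds \(C_{\perp,2}\varepsilon_\perp=(2+\tau)\tau^2\varepsilon_\perp\) whenever \(\tau<1\).
\item At fixed horizon \(T=N\tau\), \(C_{\perp,N}\varepsilon_\perp=\tau((1+\tau)^N-1)\varepsilon_\perp\le\tau(e^{T}-1)\varepsilon_\perp\to0\) as \(\tau\to0\). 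The true deviation \(\tfrac12T(T+\tau)\varepsilon_\perp\) does not tend to zero.
\end{itemize}
So the stated constant, with its factor \(\tau^2\), cannot be reached by any argument.

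The repair is to recurse on the lateral \emph{state} error \((q^\perp,p^\perp)\), not on lateral position alone. Use your block-decoupling reading of the alignment hypothesis, including the absence of a lateral--longitudinal cross block in \(M^{-1}\). Then the lateral block of the shared map is itself \((1+\tau L_\Phi)\)-Lipschitz. With \(g_t=\nabla_qH_{\mathrm{ctx}}(q_t;z_t)\), the one-step injection satisfies \(\|(-\tau^2P_\perp M^{-1}g_t,\,-\tau P_\perp g_t)\|\le\tau\sqrt{1+\tau^2\|P_\perp M^{-1}P_\perp^\top\|^2}\,\varepsilon_\perp\). This gives the bound of Theorem~\ref{thm:scaffold_preservation} with \(\varepsilon\) replaced by \(\varepsilon_\perp\), so the leading factor is \(\tau\) rather than \(\tau^2\). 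The \(O(\varepsilon_\perp)\) claim C2 and the \(\varepsilon_\perp=0\) conclusion survive; only \(C_{\perp,N}\) must change.

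Separately, your fallback of charging the longitudinal coupling to a higher-order remainder does not work. A longitudinal error of size \(O(G_{\mathrm{ctx}})\) enters the lateral momentum update linearly through any nonzero mixed Hessian of \(H_{\mathrm{geom}}\). The decoupling therefore has to be assumed outright, not derived.
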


\begin{proof}
Project the context-induced position perturbation:
\[
    P_\perp \delta q_{t+1}
    =
    -\tau^2P_\perp M^{-1}\nabla_qH_{\mathrm{ctx}}(q_t;z_t).
\]
Under baseline-aligned local coordinates, the first-order longitudinal--lateral cross
terms vanish, so
\[
    \|P_\perp \delta q_{t+1}\|
    \le
    \tau^2
    \|P_\perp M^{-1}P_\perp^\top\|
    \varepsilon_\perp .
\]
Propagating this one-step projected perturbation through the same Lipschitz recursion
as in Theorem~\ref{thm:scaffold_preservation} gives the claimed bound.
\end{proof}

\subsection{Selective risk deflection and risk reduction}

\begin{theorem}[Selective risk deflection and local risk reduction]
\label{thm:risk_deflection}
Let \(P_\perp\) project onto a feasible lateral subspace. Suppose that at a state \(q\),
\[
    \|P_\perp\nabla_q\widetilde r(\mathbf c;z)\|\ge \Delta,
\]
and suppose the projected hard-hazard force does not cancel the soft-risk direction by
more than \(\chi\):
\[
    \left\|
    P_\perp
    \left[
    \lambda_h b_{\mathrm{sp}}'(\phi(\mathbf c;z))\nabla_q\phi(\mathbf c;z)
    \right]
    \right\|
    \le \chi.
\]
Then
\[
    \|P_\perp F_{\mathrm{ctx}}(q;z)\|
    \ge
    \lambda_s\Delta-\chi .
\]
Thus, whenever \(\lambda_s\Delta>\chi\), the context channel is necessarily active in
the lateral subspace.

Assume further that \(\widetilde r\) has \(L_r\)-Lipschitz gradient,
\(M^{-1}\succeq m_M I\), and
\[
    \tau^2\lambda_s\|M^{-1}\|\le \frac{1}{L_r}.
\]
Then, ignoring shared geometry and dissipation terms up to \(O(\tau^3)\), the
one-step context-enriched perturbation satisfies
\[
    \widetilde r(q_{t+1}^{\rm enr})
    -
    \widetilde r(q_{t+1}^{\rm scaf})
    \le
    -\frac12\tau^2\lambda_s m_M \Delta^2
    +
    O(\tau^3).
\]
Over any active window \(\mathcal A\) on which the same margin condition holds,
\[
    \sum_{t\in\mathcal A}
    \left[
    \widetilde r(q_{t+1}^{\rm enr})
    -
    \widetilde r(q_{t+1}^{\rm scaf})
    \right]
    \le
    -\frac12|\mathcal A|\tau^2\lambda_s m_M \Delta^2
    +
    O(|\mathcal A|\tau^3).
\]
\end{theorem}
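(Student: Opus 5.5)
The proof splits into two parts: a reverse triangle inequality for the force bound, and a one-step Taylor expansion of the soft-risk field for the risk reduction.

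\textbf{Force bound.} Write
\[
P_\perp F_{\mathrm{ctx}} = -\lambda_s P_\perp\nabla_q\widetilde r \;-\; P_\perp\bigl[\lambda_h b_{\mathrm{sp}}'(\phi)\nabla_q\phi\bigr],
\]
using the decomposition of $F_{\mathrm{ctx}}$ from the setup. Since $\lambda_s\ge 0$ (sigmoid-bounded heads), the reverse triangle inequality together with the two hypotheses gives
\[
\|P_\perp F_{\mathrm{ctx}}\| \;\ge\; \lambda_s\|P_\perp\nabla_q\widetilde r\| - \chi \;\ge\; \lambda_s\Delta-\chi .
\]
So whenever $\lambda_s\Delta>\chi$ the lateral channel is strictly nonzero.

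\textbf{One-step comparison.} Start both rollouts from the same $(q_t,p_t)$ and use the semi-implicit update \eqref{eq:disc_p}--\eqref{eq:disc_q}. They differ only by
\[
\delta p_{t+1}=-\tau\nabla_q H_{\mathrm{ctx}}(q_t), \qquad
\delta q_{t+1}=q^{\rm enr}_{t+1}-q^{\rm scaf}_{t+1}=-\tau^2 M^{-1}\nabla_q H_{\mathrm{ctx}}(q_t),
\]
exactly as in the proof of Theorem~\ref{thm:scaffold_preservation}. The shared geometry and dissipation terms cancel identically at this step.

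Apply the descent lemma to $\widetilde r$, which has $L_r$-Lipschitz gradient, around $q^{\rm scaf}_{t+1}$:
\[
\widetilde r(q^{\rm enr}_{t+1})-\widetilde r(q^{\rm scaf}_{t+1}) \;\le\; \langle\nabla\widetilde r(q^{\rm scaf}_{t+1}),\delta q\rangle+\tfrac{L_r}{2}\|\delta q\|^2 .
\]
Because $q^{\rm scaf}_{t+1}-q_t=O(\tau)$, replacing $\nabla\widetilde r(q^{\rm scaf}_{t+1})$ by $g=\nabla\widetilde r(q_t)$ costs $O(\tau)\cdot\|\delta q\|=O(\tau^3)$.

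\textbf{Soft part.} The soft component of $\delta q$ is $\delta q_s=-\tau^2\lambda_s M^{-1}g$. Its linear term is at most $-\tau^2\lambda_s m_M\|g\|^2$. Its quadratic term is
\[
\tfrac{L_r}{2}\tau^4\lambda_s^2\|M^{-1}\|^2\|g\|^2 \;\le\; \tfrac12\tau^2\lambda_s\|M^{-1}\|\,\|g\|^2
\]
by the step condition $\tau^2\lambda_s\|M^{-1}\|\le 1/L_r$.

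This is the delicate step. To keep half of the descent, the quadratic term must be absorbed against $m_M$, which needs $\|M^{-1}\|$ comparable to $m_M$. I would handle this by taking $M=I$ (as in the implementation), so that $m_M=\|M^{-1}\|=1$. Alternatively, I would read the step condition with the condition number absorbed into the constant. Either way the soft contribution is at most $-\tfrac12\tau^2\lambda_s m_M\|g\|^2$.

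\textbf{Hard part.} The hard-force contribution must be shown not to destroy this descent. Its projection onto the lateral subspace is at most $\chi$. I would state the margin regime $\lambda_s\Delta>\chi$ so that the cross term is dominated; strictly, one gets $-\tfrac12\tau^2 m_M(\lambda_s\Delta^2-2\chi\Delta)$. To reach the stated bound, I would treat $\chi$ as absorbed, i.e.\ assume the hard force is orthogonal to $g$ up to $O(\tau)$. I would flag this honestly as the main obstacle.

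\textbf{Finishing the step.} Bound $\|g\|^2\ge\|P_\perp g\|^2\ge\Delta^2$, which yields the claimed $-\tfrac12\tau^2\lambda_s m_M\Delta^2+O(\tau^3)$.

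\textbf{Active window.} Sum the one-step inequality over $t\in\mathcal A$. Each step is compared against the geometry-only step taken from the same current state, so no Lipschitz propagation of errors is needed. The $O(\tau^3)$ remainders have constants uniform on the compact tube $\mathcal K$ by Assumption~\ref{ass:regularity}, so they sum to $O(|\mathcal A|\tau^3)$.
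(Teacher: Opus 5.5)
\textbf{Same route as the paper, with one gap.} Your route is the paper's: the reverse triangle inequality gives the activation bound; the descent lemma on the exact one-step perturbation $\delta q=-\tau^2M^{-1}\nabla_qH_{\rm ctx}(q_t)$ gives the one-step decrease; summing over $\mathcal A$ gives the window bound. Your expansion around $q^{\rm scaf}_{t+1}$, with the $O(\tau)\cdot\|\delta q\|=O(\tau^3)$ cost of shifting the gradient to $q_t$, is if anything tidier than the paper's appeal to ``smoothness''.

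\textbf{The condition-number step is not a real obstacle.} Your fixes (set $M=I$, or fold the condition number into the step condition) prove a weaker statement than the one given. The loss comes from bounding the two terms separately: the linear term by $m_M\|g\|^2$ and the quadratic term by $\|M^{-1}\|^2\|g\|^2$. Keep both in terms of $g^\top M^{-1}g$ instead. Since $M^{-1}$ is symmetric positive definite, $M^{-2}\preceq\|M^{-1}\|\,M^{-1}$, so the stated step condition gives
\[
\tfrac{L_r}{2}\tau^4\lambda_s^2\|M^{-1}g\|^2\le\tfrac{L_r}{2}\tau^4\lambda_s^2\|M^{-1}\|\,g^\top M^{-1}g\le\tfrac12\tau^2\lambda_s\,g^\top M^{-1}g .
\]
The soft contribution is therefore at most $-\tfrac12\tau^2\lambda_s\,g^\top M^{-1}g\le-\tfrac12\tau^2\lambda_s m_M\Delta^2$ for every $M\succ0$. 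This is what the paper's one-line remark (the step-size condition lets the first-order decrease control the second-order term) amounts to. Separately, for bounded $\lambda_s\le\lambda^{\max}$ the quadratic term is $O(\tau^4)$, so in the asymptotic statement it could simply be absorbed into the remainder.

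\textbf{The hard-channel concern is real, and the paper does not resolve it either.} Its proof explicitly ``isolates the soft-risk force'' and expands only $\delta q=-\tau^2\lambda_sM^{-1}\nabla_q\widetilde r$. It uses the $\chi$ hypothesis only for the activation bound. The omitted term $\tau^2\langle g,M^{-1}F_{\rm hard}\rangle$ is genuinely $O(\tau^2)$ with no sign control. So the risk-reduction inequality, as proved, is a statement about the soft-force perturbation, which is effectively the extra assumption you propose.

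\textbf{Your ``strict'' fallback is not implied by the hypotheses either.} The bound $-\tfrac12\tau^2m_M(\lambda_s\Delta^2-2\chi\Delta)$ needs more than $\chi$. The hypothesis bounds only $P_\perp$ of the hard force. The longitudinal component of $F_{\rm hard}$, and any lateral--longitudinal coupling through $M^{-1}$, also enters $\langle g,M^{-1}F_{\rm hard}\rangle$ at order $\tau^2$. Even the controlled lateral cross term is bounded by $\tau^2\|M^{-1}\|\chi\|g\|$, which carries $\|M^{-1}\|$, not $m_M$. Any honest version must add a hypothesis on the full hard force, for example $\langle g,M^{-1}F_{\rm hard}\rangle\le O(\tau)$, or restrict the claim to the soft channel.
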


\begin{proof}
The context force is
\[
    F_{\mathrm{ctx}}
    =
    -\lambda_s\nabla_q\widetilde r
    -
    \lambda_h b_{\mathrm{sp}}'(\phi)\nabla_q\phi .
\]
Projecting onto the lateral subspace gives
\[
    P_\perp F_{\mathrm{ctx}}
    =
    -\lambda_sP_\perp\nabla_q\widetilde r
    -
    P_\perp\!\left[
    \lambda_h b_{\mathrm{sp}}'(\phi)\nabla_q\phi
    \right].
\]
By the reverse triangle inequality,
\[
    \|P_\perp F_{\mathrm{ctx}}\|
    \ge
    \lambda_s\|P_\perp\nabla_q\widetilde r\|
    -
    \left\|
    P_\perp\!\left[
    \lambda_h b_{\mathrm{sp}}'(\phi)\nabla_q\phi
    \right]
    \right\|.
\]
The assumptions imply
\[
    \|P_\perp F_{\mathrm{ctx}}\|\ge \lambda_s\Delta-\chi.
\]
This proves activation under \(\lambda_s\Delta>\chi\).

For the risk-reduction claim, isolate the soft-risk force
\[
    F_{\mathrm{soft}}=-\lambda_s\nabla_q\widetilde r(q).
\]
The position perturbation induced by this force in one semi-implicit step is
\[
    \delta q
    =
    -\tau^2\lambda_sM^{-1}\nabla_q\widetilde r(q).
\]
By \(L_r\)-smoothness,
\[
    \widetilde r(q+\delta q)
    \le
    \widetilde r(q)
    +
    \langle\nabla_q\widetilde r(q),\delta q\rangle
    +
    \frac{L_r}{2}\|\delta q\|^2 .
\]
The first-order term is
\[
    \langle\nabla_q\widetilde r(q),\delta q\rangle
    =
    -\tau^2\lambda_s
    \nabla_q\widetilde r(q)^\top
    M^{-1}
    \nabla_q\widetilde r(q).
\]
Since \(M^{-1}\succeq m_MI\) and
\(\|\nabla_q\widetilde r(q)\|\ge \Delta\) on the active projected direction,
\[
    \nabla_q\widetilde r(q)^\top M^{-1}\nabla_q\widetilde r(q)
    \ge
    m_M\Delta^2.
\]
Therefore
\[
    \langle\nabla_q\widetilde r(q),\delta q\rangle
    \le
    -\tau^2\lambda_s m_M\Delta^2.
\]
The second-order term is
\[
    \frac{L_r}{2}\|\delta q\|^2
    =
    \frac{L_r}{2}
    \tau^4\lambda_s^2
    \|M^{-1}\nabla_q\widetilde r(q)\|^2 .
\]
The step-size condition
\[
    \tau^2\lambda_s\|M^{-1}\|\le \frac{1}{L_r}
\]
ensures that this second-order term is controlled by the first-order decrease, yielding
\[
    \widetilde r(q+\delta q)-\widetilde r(q)
    \le
    -\frac12\tau^2\lambda_s m_M\Delta^2.
\]
The shared geometry and dissipative terms are identical in the geometry-only and context-enriched fields
local comparison; by smoothness, their contribution to the relative risk difference is
\(O(\tau^3)\). Summing the one-step inequality over \(\mathcal A\) proves the final
claim.
\end{proof}

\subsection{CVaR gradient consistency}

The episode cost used in the main text is
\[
    J(\theta)
    =
    w_g\|q_T-q_g\|^2
    +
    w_\ell\sum_t\|q_{t+1}-q_t\|
    +
    w_r\sum_t \widetilde r(q_t)\|q_{t+1}-q_t\|
    +
    w_h\sum_t \mathbf 1\{\phi(q_t)<\epsilon\}.
\]
The training objective uses the Rockafellar--Uryasev representation of CVaR:
\[
    \operatorname{CVaR}_\alpha(J_\theta)
    =
    \min_{\eta\in\mathbb R}
    \left[
    \eta+
    \frac{1}{1-\alpha}\mathbb E(J_\theta-\eta)_+
    \right].
\]

\begin{theorem}[Detached empirical CVaR gives a consistent tail-gradient estimator]
\label{thm:cvar_consistency}
Let \(J_\theta(\xi)\) be the rollout cost induced by parameter \(\theta\) and rollout
randomness \(\xi\). Fix \(\alpha\in(0,1)\). Assume:
\begin{enumerate}
    \item \(J_\theta(\xi)\) is differentiable in \(\theta\) almost surely;
    \item \(\|\nabla_\theta J_\theta(\xi)\|\le G\) almost surely;
    \item the distribution of \(J_\theta\) has a continuous density near its
    \(\alpha\)-quantile;
    \item \(\mathbb P[J_\theta=\eta_\alpha(\theta)]=0\), where
    \(\eta_\alpha(\theta)=\operatorname{VaR}_\alpha(J_\theta)\).
\end{enumerate}
Then
\[
    \nabla_\theta \operatorname{CVaR}_\alpha(J_\theta)
    =
    \frac{1}{1-\alpha}
    \mathbb E
    \left[
    \nabla_\theta J_\theta(\xi)\,
    \mathbf 1\{J_\theta(\xi)\ge \eta_\alpha(\theta)\}
    \right].
\]
Given \(B\) i.i.d. rollouts and the detached empirical quantile
\[
    \widehat \eta
    =
    \widehat Q_\alpha(\{J_\theta^{(i)}\}_{i=1}^B),
\]
define
\[
    \widehat g_B(\theta)
    =
    \frac{1}{(1-\alpha)B}
    \sum_{i=1}^B
    \nabla_\theta J_\theta^{(i)}
    \mathbf 1\{J_\theta^{(i)}\ge \widehat \eta\}.
\]
Then
\[
    \widehat g_B(\theta)
    \to
    \nabla_\theta \operatorname{CVaR}_\alpha(J_\theta)
\]
almost surely as \(B\to\infty\), and
\[
    \|\widehat g_B(\theta)
    -
    \nabla_\theta \operatorname{CVaR}_\alpha(J_\theta)\|
    =
    O_p(B^{-1/2}).
\]
If \(J_\theta\in[0,J_{\max}]\), then the empirical CVaR value obeys
\[
    |\widehat{\operatorname{CVaR}}_\alpha-\operatorname{CVaR}_\alpha|
    =
    O\!\left(
    \frac{J_{\max}}{1-\alpha}
    \sqrt{\frac{\log(1/\delta)}{B}}
    \right)
\]
with probability at least \(1-\delta\).
\end{theorem}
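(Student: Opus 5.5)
The proof has three parts: the population gradient formula, almost-sure convergence and the $O_p(B^{-1/2})$ rate of the plug-in estimator, and the concentration bound for the CVaR value.

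\textbf{Population gradient.} I would start from the Rockafellar--Uryasev representation $\operatorname{CVaR}_\alpha(J_\theta)=\min_\eta \Phi(\theta,\eta)$, where $\Phi(\theta,\eta)=\eta+\frac{1}{1-\alpha}\mathbb E(J_\theta-\eta)_+$. Assumption~3 (continuous density near the quantile) makes the minimizer $\eta_\alpha(\theta)$ unique. An envelope (Danskin) argument then gives $\nabla_\theta\operatorname{CVaR}_\alpha=\nabla_\theta\Phi(\theta,\eta)\big|_{\eta=\eta_\alpha(\theta)}$; the $\partial_\eta\Phi$ contribution vanishes by first-order optimality, $\mathbb P[J_\theta\ge\eta_\alpha]=1-\alpha$.

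To compute $\nabla_\theta\Phi$ at fixed $\eta$, I would interchange gradient and expectation by dominated convergence. The map $\theta\mapsto(J_\theta(\xi)-\eta)_+$ is Lipschitz with constant $G$ by assumption~2. It is differentiable a.s. except on $\{J_\theta=\eta\}$, which has probability zero by assumption~4. Its a.s. derivative is $\nabla_\theta J_\theta\,\mathbf 1\{J_\theta\ge\eta\}$, which yields the displayed formula. This is essentially the argument of \citet{hong2014monte}, which I would cite for the measure-theoretic details.

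\textbf{Estimator consistency and rate.} I would split the error as
\[
\widehat g_B-g=\bigl(\widehat g_B-\tilde g_B\bigr)+\bigl(\tilde g_B-g\bigr),
\]
where $\tilde g_B$ uses the true quantile $\eta_\alpha$ in place of $\widehat\eta$.
\begin{itemize}
\item The term $\tilde g_B-g$ is an i.i.d. average of vectors bounded by $G/(1-\alpha)$. The strong law gives a.s. convergence and Chebyshev gives $O_p(B^{-1/2})$.
\item The term $\widehat g_B-\tilde g_B$ is bounded in norm by $\frac{G}{(1-\alpha)B}\#\{i: J^{(i)}\text{ lies between }\widehat\eta\text{ and }\eta_\alpha\}$. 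Since the density is positive and continuous near $\eta_\alpha$, the empirical quantile satisfies $|\widehat\eta-\eta_\alpha|=O_p(B^{-1/2})$ and $\widehat\eta\to\eta_\alpha$ a.s. (Bahadur/Glivenko--Cantelli). Writing the fraction of samples in the interval as the empirical CDF difference, and using Glivenko--Cantelli uniformly plus the bounded density, this fraction is $O_p(B^{-1/2})$ and tends to zero a.s.
\end{itemize}

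This cross term is the main obstacle. The interval is random and depends on the same samples, so the count of samples inside it is not a simple binomial variable. I would handle it by the uniform DKW bound: $\sup_\eta|F_B(\eta)-F(\eta)|=O_p(B^{-1/2})$, and $F(\widehat\eta)-F(\eta_\alpha)=O_p(B^{-1/2})$ by the bounded density. Together these control the count without any independence argument.

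\textbf{Value bound.} For $J\in[0,J_{\max}]$, I would write
\[
\bigl|\widehat{\operatorname{CVaR}}_\alpha-\operatorname{CVaR}_\alpha\bigr|\le\sup_{\eta\in[0,J_{\max}]}\bigl|\widehat\Phi_B(\eta)-\Phi(\eta)\bigr|.
\]
This holds because both the empirical and population minimizers lie in $[0,J_{\max}]$, and a difference of minima is bounded by the supremum of the difference. Using $(x-\eta)_+=\int_\eta^{J_{\max}}\mathbf 1\{x>s\}ds$, the supremum is at most $\frac{J_{\max}}{1-\alpha}\sup_s|F_B(s)-F(s)|$. The DKW inequality then gives the stated $\frac{J_{\max}}{1-\alpha}\sqrt{\log(2/\delta)/(2B)}$ bound with probability at least $1-\delta$.

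One caveat: the estimator in the main text plugs in $\widehat\eta=\widehat Q_\alpha$, which is an empirical minimizer of $\widehat\Phi_B$, so the reduction above applies directly.
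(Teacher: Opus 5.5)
Your proposal is correct and follows the paper's proof essentially step for step: the Rockafellar--Uryasev representation plus the envelope theorem and a dominated-convergence interchange for the population gradient; the same split into a true-quantile i.i.d.\ average and a plug-in correction supported on samples lying between $\widehat\eta$ and $\eta_\alpha$; and a uniform DKW/tail-integral bound on the empirical objective over $[0,J_{\max}]$ for the value. Your uniform DKW/Glivenko--Cantelli control of the sample-dependent count is a more careful version of the step the paper handles by simply citing the interval's probability mass; note that both versions quietly assume a positive density at $\eta_\alpha$ to get $|\widehat\eta-\eta_\alpha|=O_p(B^{-1/2})$, which you could avoid by using $|F(\widehat\eta)-\alpha|\le\sup|F_B-F|+1/B$ directly.
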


\begin{proof}
The Rockafellar--Uryasev variational form gives
\[
    \operatorname{CVaR}_\alpha(J_\theta)
    =
    \min_{\eta\in\mathbb R}
    \left[
    \eta+
    \frac{1}{1-\alpha}\mathbb E(J_\theta-\eta)_+
    \right].
\]
At the minimizer,
\[
    \eta^\star=\eta_\alpha(\theta).
\]
By the envelope theorem, when differentiating the optimized value with respect to
\(\theta\), the derivative of the optimizer \(\eta^\star(\theta)\) does not appear. Hence
\[
    \nabla_\theta \operatorname{CVaR}_\alpha(J_\theta)
    =
    \frac{1}{1-\alpha}
    \nabla_\theta
    \mathbb E[(J_\theta-\eta^\star)_+].
\]
Since \(J_\theta\) is differentiable almost surely and the quantile boundary has
probability zero,
\[
    \nabla_\theta(J_\theta-\eta^\star)_+
    =
    \nabla_\theta J_\theta\,
    \mathbf 1\{J_\theta\ge \eta^\star\}
\]
almost surely. The bounded-gradient assumption permits interchanging derivative and
expectation, proving
\[
    \nabla_\theta \operatorname{CVaR}_\alpha(J_\theta)
    =
    \frac{1}{1-\alpha}
    \mathbb E
    \left[
    \nabla_\theta J_\theta
    \mathbf 1\{J_\theta\ge \eta_\alpha\}
    \right].
\]

For the empirical estimator, continuity of the distribution near \(\eta_\alpha\) implies
strong consistency of the empirical quantile:
\[
    \widehat \eta \to \eta_\alpha
\]
almost surely. Since there is no atom at the quantile,
\[
    \mathbf 1\{J_\theta^{(i)}\ge\widehat\eta\}
    \to
    \mathbf 1\{J_\theta^{(i)}\ge\eta_\alpha\}
\]
almost surely. The summands are bounded by \(G/(1-\alpha)\), so dominated
convergence and the strong law of large numbers give
\[
    \widehat g_B(\theta)
    \to
    \frac{1}{1-\alpha}
    \mathbb E
    \left[
    \nabla_\theta J_\theta
    \mathbf 1\{J_\theta\ge\eta_\alpha\}
    \right]
    =
    \nabla_\theta \operatorname{CVaR}_\alpha(J_\theta)
\]
almost surely.

For the rate, decompose
\[
    \widehat g_B-g
    =
    A_B+R_B,
\]
where
\[
    A_B
    =
    \frac{1}{(1-\alpha)B}
    \sum_{i=1}^B
    \left[
    \nabla J_i\mathbf 1\{J_i\ge\eta_\alpha\}
    -
    \mathbb E(\nabla J\mathbf 1\{J\ge\eta_\alpha\})
    \right]
\]
and
\[
    R_B
    =
    \frac{1}{(1-\alpha)B}
    \sum_{i=1}^B
    \nabla J_i
    \left[
    \mathbf 1\{J_i\ge\widehat\eta\}
    -
    \mathbf 1\{J_i\ge\eta_\alpha\}
    \right].
\]
The first term is a bounded empirical average, so \(A_B=O_p(B^{-1/2})\). The
second term is nonzero only when \(J_i\) lies between \(\widehat\eta\) and
\(\eta_\alpha\). The probability mass of this interval is
\(O_p(|\widehat\eta-\eta_\alpha|)\), and standard empirical quantile theory gives
\(|\widehat\eta-\eta_\alpha|=O_p(B^{-1/2})\). Hence \(R_B=O_p(B^{-1/2})\), proving
the gradient rate.

For bounded \(J_\theta\), the empirical Rockafellar--Uryasev objective is a bounded
empirical process indexed by \(\eta\). Uniform concentration over
\(\eta\in[0,J_{\max}]\), followed by comparison at the empirical and population
minimizers, yields the displayed
\[
    O\!\left(
    \frac{J_{\max}}{1-\alpha}
    \sqrt{\frac{\log(1/\delta)}{B}}
    \right)
\]
concentration rate.
\end{proof}

\subsection{Risk-averse constrained OCP interpretation}
\label{app:risk_constrained_ocp}

This subsection formalizes the statement that the contextual Hamiltonian should be
viewed as an amortized approximation to a \emph{family} of risk-averse constrained
optimal control problems, not as an unconditional online optimizer. The Hamiltonian
policy is a structured feedback parameterization, while the constrained OCP is the
reference problem whose local Lagrangian force field it aims to approximate.

Consider a finite-horizon controlled system
\begin{equation}
    x_{t+1}=\Phi(x_t,u_t,z_t),
    \qquad x_t=(q_t,p_t),
    \qquad t=0,\dots,H-1,
    \label{eq:app:discrete_ocp_dynamics}
\end{equation}
with local context process $z_t$ and rollout randomness $\xi$. Let $J_0(\tau)$ denote
the nominal goal, path-length, smoothness, and control-effort cost. Let
$C_j(\tau)$ denote the exposure accumulated by risk channel $j$, for example
soft-material exposure, hard-hazard proximity, deformation risk, or interaction risk.
The risk-averse constrained reference family is
\begin{equation}
\begin{aligned}
    \min_{\pi\in\Pi}\quad
    & \mathbb E[J_0(\tau_\pi)] \\
    \mathrm{s.t.}\quad
    & \operatorname{CVaR}_{\alpha_j}(C_j(\tau_\pi))
      \le \bar\rho_j,
      \qquad j=1,\dots,m .
\end{aligned}
\label{eq:app:risk_constrained_ocp}
\end{equation}
Equivalently, for nonnegative multipliers $\lambda\in\mathbb R_+^m$, its Lagrangian
risk objective is
\begin{equation}
    \mathcal J_\lambda(\pi)
    =
    \mathbb E[J_0(\tau_\pi)]
    +
    \sum_{j=1}^m
    \lambda_j
    \bigl(
      \operatorname{CVaR}_{\alpha_j}(C_j(\tau_\pi))-\bar\rho_j
    \bigr).
    \label{eq:app:risk_lagrangian_ocp}
\end{equation}
Under a local constraint qualification, a locally optimal constrained policy admits
KKT multipliers~\citep{ShapiroDentchevaRuszczynski2009} $\lambda^\star$; the corresponding first-order conditions are those of
\eqref{eq:app:risk_lagrangian_ocp} at $\lambda^\star$ together with complementary
slackness. This is the precise sense in which the coefficients of risk-aware energy
terms can be interpreted as local risk prices.

The learned Hamiltonian policy class is
\begin{equation}
    H_\theta(q,p;z)
    =
    \frac12p^\top M^{-1}p
    +U_\theta(q,z),
    \qquad
    u_{\rm raw}=\pi_\theta(x,z),
    \label{eq:app:hamiltonian_policy_class}
\end{equation}
where the induced closed-loop vector field has the form
\begin{equation}
    F_\theta(x,z)
    =
    (J-R_\theta)\nabla_xH_\theta(x,z)+G(x)u_{\rm raw}.
\end{equation}
The next theorem states the approximation claim used in the main text. It is a local
inverse-optimality statement: if the Hamiltonian force approximates the local optimal
Lagrangian closed-loop force, then the resulting trajectories and risk-Lagrangian
values are close.

\begin{theorem}[Hamiltonian approximation of the risk-averse Lagrangian OCP]
\label{thm:hamiltonian_approx_risk_ocp}
Fix a compact rollout tube $\mathcal K$ and a multiplier vector
$\lambda\in\mathbb R_+^m$. Suppose that a locally optimal feedback
$\pi_\lambda^\star$ for \eqref{eq:app:risk_lagrangian_ocp} induces a closed-loop update
map $\Phi_\lambda^\star$ on $\mathcal K$, and the Hamiltonian feedback
$\pi_\theta$ induces a closed-loop update map $\Phi_\theta$. Assume:
\begin{enumerate}
    \item $\Phi_\lambda^\star$ is $L_\Phi$-Lipschitz on $\mathcal K$;
    \item the one-step model mismatch is uniformly bounded,
    \begin{equation}
        \sup_{x,z\in\mathcal K}
        \|\Phi_\theta(x,z)-\Phi_\lambda^\star(x,z)\|
        \le \varepsilon_\Phi ;
        \label{eq:app:one_step_phi_mismatch}
    \end{equation}
    \item the rollout costs $J_0,C_1,\dots,C_m$ are $L_J,L_{C_1},\dots,L_{C_m}$
    Lipschitz functions of the finite trajectory under the sup norm.
\end{enumerate}
Then trajectories initialized at the same $x_0$ satisfy
\begin{equation}
    \max_{0\le t\le H}\|x_t^\theta-x_t^\star\|
    \le
    C_H\varepsilon_\Phi,
    \qquad
    C_H:=\sum_{k=0}^{H-1}(1+L_\Phi)^k .
    \label{eq:app:ocp_traj_close}
\end{equation}
Moreover,
\begin{equation}
\begin{aligned}
    \mathcal J_\lambda(\pi_\theta)-\mathcal J_\lambda(\pi_\lambda^\star)
    \le
    \left(L_J+\sum_{j=1}^m\lambda_jL_{C_j}\right)
    C_H\varepsilon_\Phi .
\end{aligned}
\label{eq:app:risk_lagrangian_close}
\end{equation}
If the constrained problem \eqref{eq:app:risk_constrained_ocp} satisfies a local
constraint qualification and $\lambda^\star$ is a KKT multiplier at a local constrained
solution, then \eqref{eq:app:risk_lagrangian_close} applies to the local Lagrangian
relaxation of the constrained risk-averse OCP at $\lambda^\star$.
\end{theorem}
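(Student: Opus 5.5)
The argument has two parts. First, a discrete Gronwall-type recursion comparing the two closed-loop rollouts step by step. Second, a Lipschitz transfer of the resulting trajectory bound to the costs, and through them to the CVaR terms in \eqref{eq:app:risk_lagrangian_ocp}.

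\textbf{Trajectory bound.} Set $e_t=\|x_t^\theta-x_t^\star\|$ with $e_0=0$, and take the context realization $z_t$ and the randomness $\xi$ to be shared (coupled) between the two rollouts. We will need both trajectories to stay in $\mathcal K$, as in Assumption~\ref{ass:regularity}, so that the Lipschitz constant and the uniform mismatch both apply. Insert and subtract $\Phi_\lambda^\star(x_t^\theta,z_t)$:
\[
e_{t+1}\le \|\Phi_\theta(x_t^\theta,z_t)-\Phi_\lambda^\star(x_t^\theta,z_t)\|+\|\Phi_\lambda^\star(x_t^\theta,z_t)-\Phi_\lambda^\star(x_t^\star,z_t)\|\le \varepsilon_\Phi+L_\Phi e_t .
\]
Because $L_\Phi\le 1+L_\Phi$, this gives $e_{t+1}\le(1+L_\Phi)e_t+\varepsilon_\Phi$. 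Unrolling from $e_0=0$ yields $e_t\le\varepsilon_\Phi\sum_{k=0}^{t-1}(1+L_\Phi)^k\le C_H\varepsilon_\Phi$ for every $t\le H$. Taking the maximum over $t$ gives \eqref{eq:app:ocp_traj_close}. This is the same recursion as in Theorem~\ref{thm:scaffold_preservation}, now with $\tau L_\Phi$ replaced by $L_\Phi$.

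\textbf{Cost transfer.} Under the coupling, the sup-norm trajectory distance is at most $C_H\varepsilon_\Phi$ pathwise. Assumption 3 then gives $|J_0(\tau_\theta)-J_0(\tau^\star)|\le L_JC_H\varepsilon_\Phi$ and $|C_j(\tau_\theta)-C_j(\tau^\star)|\le L_{C_j}C_H\varepsilon_\Phi$ almost surely. Taking expectations handles the $\mathbb E[J_0]$ term.

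For the CVaR terms I will use that $\operatorname{CVaR}_\alpha$ is monotone and translation-equivariant, hence $1$-Lipschitz with respect to the $L^\infty$ norm on random variables. The cleanest route is the Rockafellar--Uryasev formula. For each fixed $\eta$, the quantity $\eta+\frac{1}{1-\alpha}\mathbb E(X-\eta)_+$ shifts by at most $\frac{1}{1-\alpha}\|X-Y\|_\infty$. Minimizing over $\eta$ preserves this, but the constant $\frac{1}{1-\alpha}$ would be lossy. To get constant $1$, argue instead from monotonicity and translation equivariance: if $X\le Y+c$ a.s., then $\operatorname{CVaR}(X)\le\operatorname{CVaR}(Y)+c$.

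Since $\lambda_j\ge0$, multiplying each CVaR difference by $\lambda_j$ and summing gives \eqref{eq:app:risk_lagrangian_close}; the constants $\bar\rho_j$ cancel.

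\textbf{KKT clause.} With a local constraint qualification, KKT multipliers $\lambda^\star$ exist at the local constrained solution, as cited. Specializing $\lambda=\lambda^\star$ is then a direct instance of the above, so this clause needs no new argument.

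\textbf{Main obstacle.} The delicate point is the coupling and tube invariance. The bound requires both trajectories to be driven by the same $z_t$ and $\xi$, and to remain in $\mathcal K$, so that the uniform mismatch and the Lipschitz constant apply at every step. I would state this as an implicit standing hypothesis, in the same way Assumption~\ref{ass:regularity} is used elsewhere. A second caveat is the CVaR Lipschitz constant: using the monotonicity argument rather than the $(1-\alpha)^{-1}$ variational bound is what keeps the constant in the statement free of $\alpha_j$.
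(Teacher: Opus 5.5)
Your proposal is correct and follows the paper's proof essentially step for step: the add-and-subtract of $\Phi_\lambda^\star(x_t^\theta,z_t)$ with discrete Gronwall, the pathwise Lipschitz transfer to $J_0$ and $C_j$, the constant-$1$ CVaR bound from monotonicity and translation equivariance (with $\lambda_j\ge0$ and the $\bar\rho_j$ cancelling), and the KKT clause as the specialization $\lambda=\lambda^\star$. The paper's argument also uses implicitly the two standing hypotheses you flag, a shared context sequence $z_t$ for both rollouts and both trajectories remaining in $\mathcal K$, so stating them explicitly clarifies the argument without changing it.
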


\begin{proof}
Let $e_t=x_t^\theta-x_t^\star$. By adding and subtracting
$\Phi_\lambda^\star(x_t^\theta,z_t)$,
\[
\begin{aligned}
    \|e_{t+1}\|
    &\le
    \|\Phi_\theta(x_t^\theta,z_t)-\Phi_\lambda^\star(x_t^\theta,z_t)\|
    +
    \|\Phi_\lambda^\star(x_t^\theta,z_t)-\Phi_\lambda^\star(x_t^\star,z_t)\| \\
    &\le
    \varepsilon_\Phi+(1+L_\Phi)\|e_t\| .
\end{aligned}
\]
Since $e_0=0$, discrete Gronwall gives
\[
    \max_{0\le t\le H}\|e_t\|
    \le
    \varepsilon_\Phi\sum_{k=0}^{H-1}(1+L_\Phi)^k
    =C_H\varepsilon_\Phi .
\]
The Lipschitz assumptions on the trajectory functionals imply
\[
    |J_0(\tau_\theta)-J_0(\tau_\star)|
    \le L_JC_H\varepsilon_\Phi,
    \qquad
    |C_j(\tau_\theta)-C_j(\tau_\star)|
    \le L_{C_j}C_H\varepsilon_\Phi .
\]
CVaR is monotone and translation equivariant. Therefore, if two random variables
$X,Y$ obey $|X-Y|\le a$ almost surely, then
$|\operatorname{CVaR}_\alpha(X)-\operatorname{CVaR}_\alpha(Y)|\le a$. Applying this
with $X=C_j(\tau_\theta)$, $Y=C_j(\tau_\star)$ and
$a=L_{C_j}C_H\varepsilon_\Phi$ gives
\[
    \left|
    \operatorname{CVaR}_{\alpha_j}(C_j(\tau_\theta))
    -
    \operatorname{CVaR}_{\alpha_j}(C_j(\tau_\star))
    \right|
    \le L_{C_j}C_H\varepsilon_\Phi .
\]
Combining the nominal and risk-channel bounds yields
\eqref{eq:app:risk_lagrangian_close}. Finally, under the stated constraint
qualification, the KKT conditions identify the local constrained solution with a
stationary point of the Lagrangian relaxation at some nonnegative multiplier
$\lambda^\star$ satisfying complementary slackness; substituting
$\lambda=\lambda^\star$ gives the constrained-OCP interpretation.
\end{proof}

\subsection{Identifiability, generalization, and upper-confidence risk safety}
\label{app:identifiability_generalization_safety}

This subsection makes precise what can and cannot be identified from learning the
risk-aware Hamiltonian, and how the empirical CVaR update becomes a genuine
risk-constraint update once an upper-confidence radius is added. The argument uses
three standard ingredients: uniform convergence for bounded function classes via
Rademacher complexity~\citep{bartlett2002rademacher,ShalevShwartzBenDavid2014},
quantile/CVaR concentration via the Dvoretzky--Kiefer--Wolfowitz inequality with
Massart's constant~\citep{DvoretzkyKieferWolfowitz1956,Massart1990}, and
forward-invariance of barrier-filtered dynamics via Nagumo/CBF conditions
\citep{Nagumo1942,ames2016control}.

\paragraph{Structured risk-energy class.}
For a local context patch $z$, write the learned potential as
\begin{equation}
    U_\theta(q,z)
    =
    U_{\rm goal}(q,z)
    +
    \sum_{i=1}^{N(z)} \alpha_i(z) B_i(q,z)
    +
    U_{{\rm res},\theta}(q,z),
    \label{eq:app:structured_energy}
\end{equation}
where $B_i$ are fixed differentiable obstacle or risk-barrier atoms and
$U_{{\rm res},\theta}$ is the learned residual shaping potential. The induced force is
\begin{equation}
    F_\theta(q,z)
    =
    -\nabla_q U_\theta(q,z).
\end{equation}
The scalar potential is never identifiable beyond an additive constant; only its
gradient can be identified from local force or direction observations. Moreover, the
split between the structured barrier part and the residual is identifiable only under an
excitation and non-absorption condition.

\begin{assumption}[Barrier excitation and residual non-absorption]
\label{ass:barrier_excitation}
For each context patch $z$, let
\[
    G_B(q,z)
    =
    [\nabla_q B_1(q,z),\dots,\nabla_q B_{N(z)}(q,z)]\in\mathbb R^{d\times N(z)}.
\]
There exists $\kappa_B>0$ such that
\begin{equation}
    \lambda_{\min}
    \left(
    \mathbb E_{q\sim\mu_z}[G_B(q,z)^\top G_B(q,z)]
    \right)
    \ge \kappa_B .
    \label{eq:app:barrier_gram}
\end{equation}
The residual force is either constrained to be small,
$\mathbb E_{\mu_z}\|\nabla_qU_{{\rm res},\theta}\|^2\le \varepsilon_{\rm res}^2$, or
orthogonal to the barrier dictionary,
\begin{equation}
    \mathbb E_{q\sim\mu_z}
    \left[G_B(q,z)^\top \nabla_qU_{{\rm res},\theta}(q,z)\right]=0 .
\end{equation}
\end{assumption}

\begin{proposition}[Local identifiability of the structured risk weights]
\label{prop:local_identifiability}
Fix $z$ and suppose the teacher force admits the decomposition
\begin{equation}
    F^\star(q,z)
    =
    -\nabla_q U_{\rm goal}(q,z)
    -G_B(q,z)\alpha^\star(z)
    +\epsilon(q,z),
    \label{eq:app:teacher_force_decomp}
\end{equation}
with $\mathbb E\|\epsilon(q,z)\|^2\le \sigma_\epsilon^2$. Let
$\widehat\alpha(z)$ minimize the population least-squares force error over the
structured class with residual either removed or constrained as in
Assumption~\ref{ass:barrier_excitation}. Then
\begin{equation}
    \|\widehat\alpha(z)-\alpha^\star(z)\|
    \le
    \frac{1}{\kappa_B}
    \left(
    \left\|
    \mathbb E[G_B(q,z)^\top\epsilon(q,z)]
    \right\|
    +
    \varepsilon_{\rm res}
    \sqrt{\mathbb E\|G_B(q,z)\|_{\rm op}^2}
    \right).
    \label{eq:app:alpha_identifiability_bound}
\end{equation}
In particular, if the model is well specified, the residual is orthogonal to the
barrier dictionary, and $\mathbb E[G_B^\top\epsilon]=0$, then
$\widehat\alpha(z)=\alpha^\star(z)$. Without the Gram condition
\eqref{eq:app:barrier_gram}, the structured weights are not identifiable.
\end{proposition}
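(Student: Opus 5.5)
The statement is Proposition~\ref{prop:local_identifiability}. I would treat it as a perturbed normal-equations estimate for a linear least-squares problem in $\alpha$.

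\textbf{Step 1: reduce to a quadratic in $\alpha$.} Fix $z$ and drop it from the notation. The structured model force is $F_\alpha(q)=-\nabla_qU_{\rm goal}(q)-G_B(q)\alpha-\nabla_qU_{\rm res}(q)$. Subtracting the teacher decomposition~\eqref{eq:app:teacher_force_decomp} gives the residual
\[
F_\alpha-F^\star=-G_B(\alpha-\alpha^\star)-\nabla_qU_{\rm res}-\epsilon .
\]
The population objective $\mathbb E_{\mu_z}\|F_\alpha-F^\star\|^2$ is therefore a convex quadratic in $\alpha$. Its Hessian is $2\,\mathbb E[G_B^\top G_B]$, which is $\succeq 2\kappa_B I$ by the Gram condition~\eqref{eq:app:barrier_gram}. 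So the minimizer $\widehat\alpha$ exists, is unique, and is characterized by the first-order condition.

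\textbf{Step 2: solve the normal equations.} Setting the gradient in $\alpha$ to zero gives
\[
\mathbb E[G_B^\top G_B]\,(\widehat\alpha-\alpha^\star)=-\mathbb E[G_B^\top\epsilon]-\mathbb E[G_B^\top\nabla_qU_{\rm res}].
\]
Inverting with $\|\mathbb E[G_B^\top G_B]^{-1}\|_{\rm op}\le 1/\kappa_B$ and applying the triangle inequality leaves two terms.
\begin{itemize}
\item The noise term $\|\mathbb E[G_B^\top\epsilon]\|$ appears exactly as in~\eqref{eq:app:alpha_identifiability_bound}.
\item The residual term is handled by the two branches of Assumption~\ref{ass:barrier_excitation}:
  \begin{itemize}
  \item Under orthogonality it vanishes identically.
  \item Under the smallness branch, bound $\|\mathbb E[G_B^\top\nabla U_{\rm res}]\|\le\mathbb E[\|G_B\|_{\rm op}\|\nabla U_{\rm res}\|]$. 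Cauchy--Schwarz then gives at most $\sqrt{\mathbb E\|G_B\|_{\rm op}^2}\,\varepsilon_{\rm res}$.
  \end{itemize}
\end{itemize}
If the residual is removed entirely, the term is simply zero. This yields the displayed bound.

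One subtlety is that in the constrained branch $U_{\rm res}$ is itself optimized jointly with $\alpha$. I would handle this by fixing the optimal residual and then taking the first-order condition only in $\alpha$. This is valid because the constraint involves only $U_{\rm res}$, so $\alpha$ is unconstrained in a block-coordinate sense.

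\textbf{Step 3: the two corollaries.}
\begin{itemize}
\item \emph{Exact recovery.} Under well-specification, an orthogonal residual and $\mathbb E[G_B^\top\epsilon]=0$, the right-hand side of the bound is zero, so $\widehat\alpha=\alpha^\star$.
\item \emph{Non-identifiability without the Gram condition.} If $\mathbb E[G_B^\top G_B]$ is singular, pick $v\neq0$ in its kernel. Then $\mathbb E\|G_Bv\|^2=0$, so $G_Bv=0$ $\mu_z$-a.s. Hence $\alpha^\star+tv$ produces the same force a.s. for every $t$, and the minimizer is not unique. This mirrors the identifiability part of Lemma~\ref{lem:enrichment_channels}.
\end{itemize}

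\textbf{Main obstacle.} The hardest step is making the joint minimization over $(\alpha,U_{\rm res})$ rigorous in the smallness branch. The minimizing residual may partially absorb $G_B\alpha$ directions, so the $\alpha$-stationarity equation must be written at the joint optimum. I would also check that the constant in front of $\varepsilon_{\rm res}$ is not inflated by this absorption. If it is, the fallback is to interpret the proposition with $U_{\rm res}$ treated as fixed, which still gives the stated constant.
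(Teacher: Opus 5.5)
Your proposal is correct and follows the paper's proof essentially line by line. You subtract the goal force, write the normal equation $\mathbb E[G_B^\top G_B](\widehat\alpha-\alpha^\star)=-\mathbb E[G_B^\top\epsilon]-\mathbb E[G_B^\top r]$, invert using $\|\mathbb E[G_B^\top G_B]^{-1}\|_{\rm op}\le 1/\kappa_B$, handle the residual by orthogonality or Cauchy--Schwarz, and use a kernel vector for non-identifiability, exactly as the paper does. The ``main obstacle'' you flag is already resolved by your own block-coordinate remark: the $\alpha$-stationarity condition holds at any joint minimizer with $r=\widehat r$ fixed, and Cauchy--Schwarz uses only $\mathbb E\|\widehat r\|^2\le\varepsilon_{\rm res}^2$, so the constant cannot be inflated (the paper passes over this point silently).
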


\begin{proof}
Subtract $-\nabla U_{\rm goal}$ from both the teacher force and the model force. The
population normal equation for the structured coefficients is
\[
    \mathbb E[G_B^\top G_B]\widehat\alpha
    =
    \mathbb E[G_B^\top(G_B\alpha^\star-\epsilon-r_\theta)],
\]
where $r_\theta=\nabla_qU_{{\rm res},\theta}$, with the sign absorbed consistently in
both sides. Hence
\[
    \mathbb E[G_B^\top G_B](\widehat\alpha-\alpha^\star)
    =
    -\mathbb E[G_B^\top\epsilon]-\mathbb E[G_B^\top r_\theta].
\]
Taking norms and using
$\|A^{-1}\|_{\rm op}\le 1/\kappa_B$ gives
\[
    \|\widehat\alpha-\alpha^\star\|
    \le
    \frac{1}{\kappa_B}
    \left(
    \|\mathbb E[G_B^\top\epsilon]\|
    +
    \|\mathbb E[G_B^\top r_\theta]\|
    \right).
\]
If the residual is orthogonal to the dictionary, the second term vanishes. If it is only
small, Cauchy--Schwarz gives
\[
    \|\mathbb E[G_B^\top r_\theta]\|
    \le
    \sqrt{\mathbb E\|G_B\|_{\rm op}^2}
    \sqrt{\mathbb E\|r_\theta\|^2}
    \le
    \varepsilon_{\rm res}
    \sqrt{\mathbb E\|G_B\|_{\rm op}^2}.
\]
This proves the bound. If $\mathbb E[G_B^\top G_B]$ is singular, there exists a
nonzero vector $a$ in its nullspace. Then
$G_B(q,z)a=0$ for $\mu_z$-almost every $q$, so $\alpha$ and $\alpha+a$ induce the
same observed structured force. Thus the coefficients are not identifiable.
\end{proof}

\begin{assumption}[Bounded force class]
\label{ass:bounded_force_class}
Let
\[
    \mathcal F=\{(q,p,z)\mapsto F_\theta(q,p,z):\theta\in\Theta\}
\]
be the force class induced by the Hamiltonian network on the compact rollout tube
$\mathcal K$. Assume that
$\|F_\theta(q,p,z)-F^\star(q,p,z)\|\le M_F$ and that the squared loss class
\[
    \mathcal L_F
    =
    \{(q,p,z,F^\star)\mapsto \|F_\theta(q,p,z)-F^\star\|^2:\theta\in\Theta\}
\]
is measurable and has empirical Rademacher complexity
$\mathfrak R_n(\mathcal L_F)$.
\end{assumption}

\begin{theorem}[Force-field generalization]
\label{thm:force_generalization}
Let $(q_i,p_i,z_i,F_i^\star)_{i=1}^n$ be i.i.d. samples from the local patch
distribution, and define
\[
    \mathcal R_F(\theta)
    =
    \mathbb E\|F_\theta(q,p,z)-F^\star(q,p,z)\|^2,
    \qquad
    \widehat{\mathcal R}_{F,n}(\theta)
    =
    \frac1n\sum_{i=1}^n
    \|F_\theta(q_i,p_i,z_i)-F_i^\star\|^2 .
\]
Under Assumption~\ref{ass:bounded_force_class}, with probability at least
$1-\delta$,
\begin{equation}
    \mathcal R_F(\hat\theta)
    \le
    \widehat{\mathcal R}_{F,n}(\hat\theta)
    +
    2\mathfrak R_n(\mathcal L_F)
    +
    3M_F^2\sqrt{\frac{\log(2/\delta)}{2n}}
    \label{eq:app:force_gen_bound}
\end{equation}
for every data-dependent estimator $\hat\theta\in\Theta$. If the learned dynamics and
costs are Lipschitz on $\mathcal K$, this force error propagates to rollout-cost error
with a finite-horizon Gronwall factor.
\end{theorem}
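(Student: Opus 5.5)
This is a standard uniform-convergence argument. The plan is to control the generalization gap $\mathcal R_F(\hat\theta)-\widehat{\mathcal R}_{F,n}(\hat\theta)$ uniformly over $\Theta$, so that the data dependence of $\hat\theta$ does not matter.

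Define
\[
\Psi(S)=\sup_{\theta\in\Theta}\bigl(\mathcal R_F(\theta)-\widehat{\mathcal R}_{F,n}(\theta)\bigr),
\]
where $S$ is the sample. Since $\hat\theta\in\Theta$, it suffices to bound $\Psi(S)$.

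\textbf{Step 1 (bounded differences and McDiarmid).} By Assumption~\ref{ass:bounded_force_class}, every loss in $\mathcal L_F$ takes values in $[0,M_F^2]$. Replacing one sample therefore changes $\Psi$ by at most $M_F^2/n$. McDiarmid's inequality then gives, with probability at least $1-\delta/2$,
\[
\Psi(S)\le\mathbb E\Psi(S)+M_F^2\sqrt{\log(2/\delta)/(2n)}.
\]

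\textbf{Step 2 (symmetrization).} Introduce a ghost sample and Rademacher signs. The standard argument yields $\mathbb E\Psi(S)\le 2\,\mathbb E\,\mathfrak R_n(\mathcal L_F)$, where the outer expectation is over samples.

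\textbf{Step 3 (empirical Rademacher complexity).} The statement uses the empirical complexity $\mathfrak R_n(\mathcal L_F)$. To pass from the expected to the empirical quantity, apply McDiarmid once more to the empirical Rademacher complexity itself, which also has bounded differences $M_F^2/n$. With probability at least $1-\delta/2$,
\[
\mathbb E\,\mathfrak R_n\le\mathfrak R_n+M_F^2\sqrt{\log(2/\delta)/(2n)}.
\]
A union bound over the two events gives the extra deviation terms
\[
M_F^2\sqrt{\log(2/\delta)/(2n)}+2M_F^2\sqrt{\log(2/\delta)/(2n)},
\]
which is exactly the constant $3$ in \eqref{eq:app:force_gen_bound}. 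This is the classical Bartlett--Mendelson / Shalev-Shwartz--Ben-David theorem~\citep{bartlett2002rademacher,ShalevShwartzBenDavid2014}, so I would cite it and only verify that the range is $[0,M_F^2]$.

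\textbf{Step 4 (rollout propagation).} For the final remark, I would reuse the Gronwall recursion from Theorem~\ref{thm:hamiltonian_approx_risk_ocp}. The one-step map mismatch is at most $\tau\|F_{\hat\theta}-F^\star\|$, because the force enters the semi-implicit update linearly through the momentum equation~\eqref{eq:disc_p}. Summing with factor $C_H$ and applying the Lipschitz cost bounds gives the rollout-cost error.

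\textbf{Main obstacle.} The difficulty is that a uniform bound is needed, while the generalization bound only controls $L^2$ error under the sampling distribution. Step 4 needs either a sup-norm force error or an argument that rollouts stay on-distribution. I would state the propagation with $\sup_{\mathcal K}\|F_{\hat\theta}-F^\star\|$ in place of $\varepsilon_\Phi/\tau$. I would then note that converting the $L^2$ bound to this sup-norm form requires an additional density or Lipschitz assumption on the force class. The claim should accordingly be read qualitatively.

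A second subtlety is the i.i.d.\ assumption on samples drawn along rollouts. The statement assumes it outright, so I would take it as given.
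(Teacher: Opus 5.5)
Your proposal is correct and follows the paper's proof: the paper likewise cites the standard Rademacher uniform-convergence bound for a $[0,M_F^2]$-valued loss class (your McDiarmid--symmetrization--McDiarmid derivation correctly recovers the constant $3$), plugs $\hat\theta$ into the uniform event, and then runs the discrete Gronwall recursion $e_{t+1}\le(1+\tau L_\Phi)e_t+\tau\varepsilon_F$. Your caveat is accurate: that last step needs a sup-norm force mismatch on $\mathcal K$ rather than the $L^2$ risk, and the paper simply assumes a one-step bound $\varepsilon_F$ without linking it to the generalization bound. Strictly, the semi-implicit step also perturbs position by $\tau^2 M^{-1}\delta F$, so the one-step state mismatch is $\tau\sqrt{1+\tau^2\|M^{-1}\|^2}\,\|\delta F\|$ rather than $\tau\|\delta F\|$; this is a constant-factor detail that the paper glosses over as well.
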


\begin{proof}
The first statement is the standard Rademacher uniform convergence inequality for a
bounded loss class~\citep{bartlett2002rademacher,ShalevShwartzBenDavid2014}:
with probability at least $1-\delta$,
\[
    \sup_{\theta\in\Theta}
    \bigl(\mathcal R_F(\theta)-\widehat{\mathcal R}_{F,n}(\theta)\bigr)
    \le
    2\mathfrak R_n(\mathcal L_F)
    +
    3M_F^2\sqrt{\frac{\log(2/\delta)}{2n}} .
\]
Substituting the random estimator $\hat\theta$ into this uniform event gives
\eqref{eq:app:force_gen_bound}. For the rollout statement, let
$e_t=x_t^\theta-x_t^\star$. If the closed-loop update map is $L_\Phi$-Lipschitz and
the one-step force mismatch is bounded by $\varepsilon_F$, then
\[
    \|e_{t+1}\|
    \le
    (1+\tau L_\Phi)\|e_t\|+\tau\varepsilon_F .
\]
Iterating gives
\[
    \max_{0\le t\le H}\|e_t\|
    \le
    \tau\varepsilon_F
    \sum_{k=0}^{H-1}(1+\tau L_\Phi)^k .
\]
Lipschitz continuity of the finite-horizon cost then converts the trajectory bound into
an objective bound.
\end{proof}

\begin{lemma}[Upper-confidence CVaR bound]
\label{lem:cvar_ucb}
Let $C(\tau_\theta)\in[0,C_{\max}]$ be a bounded rollout risk exposure and let
$C_1,\dots,C_B$ be i.i.d. validation rollouts for a fixed policy $\theta$. Define the
population and empirical Rockafellar--Uryasev objectives
\begin{align}
    \varphi(\eta)
    &=
    \eta+\frac{1}{1-\alpha}\mathbb E[(C-\eta)_+], \\
    \widehat\varphi_B(\eta)
    &=
    \eta+\frac{1}{(1-\alpha)B}\sum_{i=1}^B(C_i-\eta)_+ .
\end{align}
Let
\[
    \rho_\alpha=\inf_{\eta\in\mathbb R}\varphi(\eta),
    \qquad
    \widehat\rho_{\alpha,B}=\inf_{\eta\in\mathbb R}\widehat\varphi_B(\eta).
\]
Then, with probability at least $1-\delta$,
\begin{equation}
    \rho_\alpha
    \le
    \widehat\rho_{\alpha,B}
    +
    \frac{C_{\max}}{1-\alpha}
    \sqrt{\frac{\log(2/\delta)}{2B}} .
    \label{eq:app:cvar_ucb_single}
\end{equation}
For $m$ bounded risk channels $C_j\in[0,C_{j,\max}]$, the bound holds
simultaneously for all channels with probability at least $1-\delta$ after using
\begin{equation}
    \beta_{j,B}(\delta/m)
    :=
    \frac{C_{j,\max}}{1-\alpha_j}
    \sqrt{\frac{\log(2m/\delta)}{2B}} .
    \label{eq:app:explicit_cvar_beta}
\end{equation}
\end{lemma}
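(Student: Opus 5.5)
The plan is to obtain the bound from uniform concentration of the empirical Rockafellar--Uryasev objective over the shift variable \(\eta\), followed by a comparison of infima. First I restrict \(\eta\) to \([0,C_{\max}]\). For \(\eta<0\), \(\varphi(\eta)=\eta+\frac{1}{1-\alpha}(\mathbb E C-\eta)\) has slope \(1-\frac{1}{1-\alpha}<0\) in \(\eta\), so \(\varphi\) is decreasing there. For \(\eta>C_{\max}\), \(\varphi(\eta)=\eta\) is increasing. The same two facts hold for \(\widehat\varphi_B\), because every \(C_i\in[0,C_{\max}]\). Hence both infima are attained on \([0,C_{\max}]\), and I may write
\[
\rho_\alpha=\min_{\eta\in[0,C_{\max}]}\varphi(\eta),
\qquad
\widehat\rho_{\alpha,B}=\min_{\eta\in[0,C_{\max}]}\widehat\varphi_B(\eta).
\]

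The key step is a one-sided comparison at the \emph{empirical} minimizer \(\widehat\eta\in[0,C_{\max}]\):
\[
\rho_\alpha\le\varphi(\widehat\eta)=\widehat\rho_{\alpha,B}+\bigl(\varphi(\widehat\eta)-\widehat\varphi_B(\widehat\eta)\bigr).
\]
It therefore suffices to bound \(\sup_{\eta\in[0,C_{\max}]}\bigl(\varphi(\eta)-\widehat\varphi_B(\eta)\bigr)\), which equals
\[
\frac{1}{1-\alpha}\sup_{\eta}\Bigl(\mathbb E(C-\eta)_+-\tfrac1B\textstyle\sum_i(C_i-\eta)_+\Bigr).
\]
I will use the layer-cake identity \((c-\eta)_+=\int_\eta^{C_{\max}}\mathbf 1\{c>s\}\,ds\). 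It gives
\[
\mathbb E(C-\eta)_+-\tfrac1B\textstyle\sum_i(C_i-\eta)_+=\int_\eta^{C_{\max}}\bigl(\widehat F_B(s)-F(s)\bigr)\,ds,
\]
where \(F\) and \(\widehat F_B\) are the population and empirical CDFs. The absolute value of this quantity is at most \(C_{\max}\sup_s|\widehat F_B(s)-F(s)|\).

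The Dvoretzky--Kiefer--Wolfowitz inequality with Massart's constant gives \(\sup_s|\widehat F_B-F|\le\sqrt{\log(2/\delta)/(2B)}\) with probability at least \(1-\delta\). This yields \eqref{eq:app:cvar_ucb_single} directly, uniformly in \(\eta\). Uniformity is exactly what lets me evaluate the bound at the data-dependent point \(\widehat\eta\), so this layer-cake/DKW reduction is the step I expect to carry the weight. A pointwise Hoeffding bound at a fixed \(\eta\) would not suffice, because \(\widehat\eta\) depends on the sample.

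For the multi-channel statement, I apply the single-channel result to each \(C_j\) with confidence level \(\delta/m\). This replaces \(\log(2/\delta)\) by \(\log(2m/\delta)\) and produces the radius \(\beta_{j,B}(\delta/m)\) in \eqref{eq:app:explicit_cvar_beta}. A union bound over the \(m\) channels then makes all bounds hold simultaneously with probability at least \(1-\delta\). Independence across channels is not needed, only that each channel's rollouts are i.i.d.
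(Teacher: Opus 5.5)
Your proposal is correct and follows essentially the same route as the paper's proof: you restrict $\eta$ to $[0,C_{\max}]$, rewrite both objectives via the tail-integral (layer-cake) identity to get the uniform bound $\frac{C_{\max}}{1-\alpha}\|\widehat F_B-F\|_\infty$, apply DKW with Massart's constant, and finish with a union bound over channels at level $\delta/m$. The only cosmetic difference is that you compare one-sidedly at the empirical minimizer $\widehat\eta$, whereas the paper invokes the two-sided inequality $|\inf f-\inf g|\le\sup|f-g|$.
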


\begin{proof}
Because $C\in[0,C_{\max}]$, every minimizer of $\varphi$ and
$\widehat\varphi_B$ may be chosen in $[0,C_{\max}]$: if $\eta<0$, increasing
$\eta$ to $0$ cannot increase the Rockafellar--Uryasev objective, and if
$\eta>C_{\max}$, decreasing $\eta$ to $C_{\max}$ cannot increase it. Hence it is
enough to control the objectives uniformly over $\eta\in[0,C_{\max}]$.

Let $F$ and $F_B$ denote the population and empirical distribution functions of
$C$. For $\eta\in[0,C_{\max}]$, the tail-integral identity gives
\begin{equation}
    \mathbb E[(C-\eta)_+]
    =
    \int_\eta^{C_{\max}}(1-F(t))\,dt,
    \qquad
    \frac1B\sum_{i=1}^B(C_i-\eta)_+
    =
    \int_\eta^{C_{\max}}(1-F_B(t))\,dt .
    \label{eq:app:tail_integral_identity}
\end{equation}
Therefore, for every $\eta\in[0,C_{\max}]$,
\begin{align}
    |\widehat\varphi_B(\eta)-\varphi(\eta)|
    &\le
    \frac{1}{1-\alpha}
    \int_\eta^{C_{\max}}|F_B(t)-F(t)|\,dt \\
    &\le
    \frac{C_{\max}}{1-\alpha}
    \|F_B-F\|_\infty .
    \label{eq:app:ru_uniform_from_dkw}
\end{align}
Massart's sharp form of the Dvoretzky--Kiefer--Wolfowitz inequality states that
\citep{DvoretzkyKieferWolfowitz1956,Massart1990}
\begin{equation}
    \mathbb P\!\left(\|F_B-F\|_\infty>\epsilon\right)
    \le
    2\exp(-2B\epsilon^2).
\end{equation}
Thus, with probability at least $1-\delta$,
\[
    \|F_B-F\|_\infty
    \le
    \sqrt{\frac{\log(2/\delta)}{2B}} .
\]
Combining this event with \eqref{eq:app:ru_uniform_from_dkw} gives
\begin{equation}
    \sup_{\eta\in[0,C_{\max}]}
    |\widehat\varphi_B(\eta)-\varphi(\eta)|
    \le
    \frac{C_{\max}}{1-\alpha}
    \sqrt{\frac{\log(2/\delta)}{2B}} .
    \label{eq:app:ru_uniform_event}
\end{equation}
Finally, for any two functions $f,g$, $|\inf f-\inf g|\le\sup|f-g|$. Applying this
with $f=\varphi$ and $g=\widehat\varphi_B$ on the event
\eqref{eq:app:ru_uniform_event} yields
\[
    \rho_\alpha
    \le
    \widehat\rho_{\alpha,B}
    +
    \frac{C_{\max}}{1-\alpha}
    \sqrt{\frac{\log(2/\delta)}{2B}} .
\]
The simultaneous result follows by applying the same argument to each channel with
failure probability $\delta/m$ and taking a union bound.
\end{proof}

\begin{theorem}[Upper-confidence risk-constrained learning]
\label{thm:ucb_risk_constrained_learning}
For each risk channel $j=1,\dots,m$, let
\[
    \rho_j(\theta)
    =
    \operatorname{CVaR}_{\alpha_j}(C_j(\tau_\theta)),
    \qquad
    \widehat\rho_{j,B}(\theta)
    =
    \widehat{\operatorname{CVaR}}_{\alpha_j,B}(C_j(\tau_\theta))
    +
    \beta_{j,B}(\delta/m),
\]
where $\beta_{j,B}$ is chosen as in Eq.~\eqref{eq:app:explicit_cvar_beta}. Let
$\hat\theta$ be fixed before drawing the validation rollouts used to compute
$\widehat\rho_{j,B}$; alternatively, let $\beta_{j,B}$ include a uniform
function-class complexity term. If
\begin{equation}
    \widehat\rho_{j,B}(\hat\theta)\le \bar\rho_j,
    \qquad j=1,\dots,m,
    \label{eq:app:empirical_ucb_feasible}
\end{equation}
then, with probability at least $1-\delta$,
\begin{equation}
    \rho_j(\hat\theta)
    \le
    \bar\rho_j,
    \qquad j=1,\dots,m .
    \label{eq:app:true_risk_feasible}
\end{equation}
Moreover, the projected dual update
\begin{equation}
    \lambda_j^{k+1}
    =
    \left[
    \lambda_j^k
    +
    \eta_\lambda
    \bigl(\widehat\rho_{j,B}(\theta_k)-\bar\rho_j\bigr)
    \right]_+
    \label{eq:app:dual_update}
\end{equation}
performs ascent on the empirical upper-bound Lagrangian constraint violation and
therefore increases the risk price exactly when the upper-confidence budget is
violated.
\end{theorem}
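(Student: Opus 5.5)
The statement has two parts, and both reduce to results already established. The first is the deterministic implication from empirical upper-confidence feasibility to true CVaR feasibility. The second is the interpretation of the projected dual step as ascent.

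\textbf{Feasibility.} For the first part I will invoke Lemma~\ref{lem:cvar_ucb} channel by channel. The key requirement is that $\hat\theta$ is fixed before the validation rollouts are drawn. Under that condition, conditional on $\hat\theta$, the exposures $C_j^{(1)},\dots,C_j^{(B)}$ are i.i.d.\ bounded draws in $[0,C_{j,\max}]$, so the lemma applies directly to each channel. It holds with failure probability $\delta/m$ per channel. A union bound gives an event $\mathcal E$ of probability at least $1-\delta$ on which, simultaneously for all $j$,
\[
\rho_j(\hat\theta)\le \widehat{\operatorname{CVaR}}_{\alpha_j,B}(C_j(\tau_{\hat\theta}))+\beta_{j,B}(\delta/m)=\widehat\rho_{j,B}(\hat\theta).
\]
Here I identify the empirical CVaR with $\widehat\rho_{\alpha,B}=\inf_\eta\widehat\varphi_B(\eta)$. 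This identification holds because the empirical Rockafellar--Uryasev objective is minimized at the empirical quantile, consistent with Eq.~\eqref{eq:method:cvar_emp}. Chaining this inequality with the hypothesis \eqref{eq:app:empirical_ucb_feasible} gives $\rho_j(\hat\theta)\le\bar\rho_j$ on $\mathcal E$.

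\textbf{Data-dependent alternative.} If $\hat\theta$ is instead data-dependent, I replace the per-$\theta$ DKW step by a uniform bound over $\Theta$. The plan is to add a Rademacher term for the class $\{\eta+(C_j(\tau_\theta)-\eta)_+/(1-\alpha_j)\}$, in the same way as Theorem~\ref{thm:force_generalization}. I would then state that $\beta_{j,B}$ absorbs this term, and the same chaining argument goes through.

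\textbf{Dual update.} For the second part I write the empirical upper-bound Lagrangian as
\[
\widehat{\mathcal L}(\theta,\lambda)=\widehat{\mathbb E}J_0+\sum_j\lambda_j(\widehat\rho_{j,B}(\theta)-\bar\rho_j).
\]
This function is affine in $\lambda$, with $\partial_{\lambda_j}\widehat{\mathcal L}=\widehat\rho_{j,B}(\theta_k)-\bar\rho_j$. Update \eqref{eq:app:dual_update} is therefore exactly projected gradient ascent onto $\mathbb R_+^m$. Since the objective is linear in $\lambda$, each step does not decrease $\widehat{\mathcal L}(\theta_k,\cdot)$ over the feasible set. The sign statement then follows by cases:
\begin{itemize}
\item if $\widehat\rho_{j,B}>\bar\rho_j$, the increment is positive, and since $\lambda_j^k\ge0$ the projection leaves $\lambda_j^{k+1}>\lambda_j^k$;
\item if $\widehat\rho_{j,B}\le\bar\rho_j$, the price does not increase.
\end{itemize}

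\textbf{Main obstacle.} The main obstacle is the independence or uniformity issue in the first part. The clean statement needs $\hat\theta$ to be independent of the validation sample, and the second clause needs a genuine uniform complexity term. I will handle this by stating that case explicitly and deferring to standard uniform-convergence bounds rather than deriving a rate.
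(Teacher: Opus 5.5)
Your proposal is correct and follows the paper's proof: apply Lemma~\ref{lem:cvar_ucb} to the fixed $\hat\theta$ with a union bound over the $m$ channels, chain $\rho_j(\hat\theta)\le\widehat\rho_{j,B}(\hat\theta)\le\bar\rho_j$ on that event, and read \eqref{eq:app:dual_update} as projected gradient ascent on a Lagrangian that is affine in $\lambda$. Your case analysis for the dual step and your identification of the plug-in empirical CVaR with $\inf_\eta\widehat\varphi_B(\eta)$ add detail the paper omits; for the latter, only the trivial direction $\widehat\varphi_B(\hat\eta)\ge\inf_\eta\widehat\varphi_B(\eta)$ is actually needed, so the argument works for any choice of $\hat\eta$.
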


\begin{proof}
Because the validation rollouts are independent of the fixed iterate $\hat\theta$,
Lemma~\ref{lem:cvar_ucb} applies to that iterate. A union bound over the $m$ risk
channels implies that, with probability at least $1-\delta$,
\[
    \rho_j(\hat\theta)
    \le
    \widehat\rho_{j,B}(\hat\theta)
\]
holds simultaneously for all $j$. Combining this event with
\eqref{eq:app:empirical_ucb_feasible} gives
\[
    \rho_j(\hat\theta)
    \le
    \widehat\rho_{j,B}(\hat\theta)
    \le
    \bar\rho_j,
\]
which proves true risk feasibility. The dual statement follows because the Lagrangian
term for channel $j$ is
$\lambda_j(\widehat\rho_{j,B}(\theta)-\bar\rho_j)$ with the constraint
$\lambda_j\ge 0$; projected gradient ascent in $\lambda_j$ is exactly
\eqref{eq:app:dual_update}.
\end{proof}

\begin{theorem}[Optional CBF projection condition]
\label{thm:risk_filter_invariance}
Let
\[
    \mathcal S
    =
    \{x: \bar R_j(x,z)\le \bar\rho_j,\; j=1,\dots,m\}
\]
where each $\bar R_j$ is continuously differentiable in $x$ for fixed $z$. Suppose
that, at every $x\in\mathcal S$, the online filter chooses a feasible input $u$ satisfying
\begin{equation}
    \nabla_x\bar R_j(x,z)^\top(f(x)+G(x)u)
    \le
    -\kappa_j(\bar R_j(x,z)-\bar\rho_j),
    \qquad j=1,\dots,m,
    \label{eq:app:risk_cbf_condition}
\end{equation}
with $\kappa_j>0$. Then $\mathcal S$ is forward invariant for the filtered continuous-time
closed-loop dynamics. If the optional port-power constraint $y^\top u\le \delta$ is also
included and the filter remains feasible, the same invariance conclusion holds with the
additional passivity budget enforced pointwise.
\end{theorem}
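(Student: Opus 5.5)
The statement to prove is the last one: Theorem~\ref{thm:risk_filter_invariance}, forward invariance of $\mathcal S$ under the CBF-type condition \eqref{eq:app:risk_cbf_condition}. I would prove it with a Nagumo/comparison argument applied channel by channel.

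\textbf{Setup.} Fix $z$ and take a filtered closed-loop trajectory $x(t)$ starting in $\mathcal S$. I assume the filter produces a locally Lipschitz (or at least measurable, locally bounded) feedback $u(x)$, so that Carathéodory solutions exist and are absolutely continuous. For each channel $j$, define the scalar
\[
h_j(t)=\bar R_j(x(t),z)-\bar\rho_j .
\]
Since $\bar R_j$ is $C^1$ in $x$, each $h_j$ is absolutely continuous. Almost everywhere,
\[
\dot h_j(t)=\nabla_x\bar R_j(x(t),z)^\top\bigl(f(x(t))+G(x(t))u(t)\bigr).
\]
Whenever $x(t)\in\mathcal S$, the condition \eqref{eq:app:risk_cbf_condition} gives $\dot h_j\le-\kappa_j h_j$.

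\textbf{Main obstacle.} The condition is only assumed on $\mathcal S$, so I cannot simply integrate it along the whole trajectory. I argue by contradiction on the first exit time. Suppose the trajectory leaves $\mathcal S$, and let
\[
t^\ast=\sup\{t:\ x(s)\in\mathcal S \text{ for all } s\in[0,t]\}.
\]
Because $\mathcal S$ is closed, $x(t^\ast)\in\mathcal S$. On $[0,t^\ast]$ the differential inequality holds, and Grönwall applied to $e^{\kappa_j t}h_j$ gives $h_j(t)\le e^{-\kappa_j t}h_j(0)\le0$.

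To push past $t^\ast$, I use one of two routes:
\begin{enumerate}
\item Assume the filter is defined, and \eqref{eq:app:risk_cbf_condition} holds, on a neighborhood of $\mathcal S$. This is standard in CBF results, and I would state it as the intended reading of the theorem. Under it, the same comparison lemma applies on $[t^\ast,t^\ast+\epsilon)$, so $h_j\le0$ persists there for every $j$. That contradicts the maximality of $t^\ast$.
\item Alternatively, invoke Nagumo's theorem directly. At a boundary point where $h_j=0$, the condition gives $\dot h_j\le0$, which is the subtangential condition for the active constraint. Under a regularity hypothesis ($\nabla_x\bar R_j\ne0$ on the active set), this yields invariance of each sublevel set, and hence of their intersection.
\end{enumerate}
I would present route 1 as the main argument and mention route 2 as the alternative that works without extending the filter.

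\textbf{Passivity add-on.} Including the port-power constraint $y^\top u\le\delta$ only shrinks the feasible input set. Any input the filter chooses still satisfies \eqref{eq:app:risk_cbf_condition}, so the invariance argument above goes through unchanged, and the passivity budget holds pointwise by construction. The only hypothesis needed is that the filter stays feasible, which the statement assumes.
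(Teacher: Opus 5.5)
Your main route is the paper's argument: a channel-wise comparison lemma for $h_j$, plus the observation that adding $y^\top u\le\delta$ only shrinks the feasible input set. You do, however, handle a point the paper skips. The paper sets $h_j=\bar\rho_j-\bar R_j$ and applies the comparison lemma along the whole solution. That silently uses the CBF inequality at points where $x(t)$ may already lie outside $\mathcal S$.

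Your first-exit-time argument, with the inequality assumed on an open neighborhood of $\mathcal S$, is what makes this rigorous. You should present that hypothesis as necessary, not as ``the intended reading.'' For example, take $m=1$, $x\in\mathbb R$, $\bar R(x)=x^2$, $\bar\rho=0$, $f\equiv 1$, $G\equiv 0$. Then $\mathcal S=\{0\}$, and the inequality at its only point reads $0\le 0$. Yet $x(t)=t$ leaves $\mathcal S$ at once. Likewise $\bar R(x)=x^2(1-x)$ gives $\mathcal S=\{0\}\cup[1,\infty)$, the inequality holds on all of $\mathcal S$, and the same failure occurs for a set with interior. So the statement as written is false, and neither the paper's proof nor any other can close without such a strengthening.

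One small correction: a merely measurable, locally bounded state feedback does not guarantee that Carath\'eodory solutions exist, since that needs continuity in $x$. It is cleaner to argue along any absolutely continuous closed-loop solution that exists, as the paper does.

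Your route 2 has a genuine error for $m>1$. The hypothesis constrains the vector field only on $\mathcal S$. It says nothing at boundary points of an individual sublevel set $\{\bar R_j\le\bar\rho_j\}$ that lie outside $\mathcal S$, so you cannot get invariance of each sublevel set and then intersect.

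Applying Nagumo to $\mathcal S$ directly requires the tangent cone $T_{\mathcal S}(x)$ to equal the linearized cone $\{v:\nabla_x\bar R_j(x,z)^\top v\le 0 \text{ for active } j\}$. Individually nonzero active gradients do not ensure this. Take $x\in\mathbb R^2$, $\bar R_1=x_2-x_1^2$, $\bar R_2=2x_1^2-x_2$, $\bar\rho_j=0$, $f\equiv(1,0)$, $G\equiv 0$. Then $\mathcal S=\{0\}$, the active gradients $(0,\pm 1)$ are nonzero, and both inequalities hold at the origin, yet $x(t)=(t,0)$ exits.

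Route 2 becomes correct under two additions. First, a constraint qualification such as MFCQ at boundary points: some $v$ with $\nabla_x\bar R_j^\top v<0$ for every active $j$. Your nonzero-gradient condition amounts to this only when $m=1$. Second, a locally Lipschitz closed loop, so that Nagumo gives invariance for every solution rather than mere existence of one viable solution.
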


\begin{proof}
Define $h_j(x,z)=\bar\rho_j-\bar R_j(x,z)$. The safe set is
$\mathcal S=\{x:h_j(x,z)\ge0,\ j=1,\dots,m\}$. Condition
\eqref{eq:app:risk_cbf_condition} is equivalent to
\[
    \dot h_j(x,z)
    =
    -\nabla_x\bar R_j(x,z)^\top(f(x)+G(x)u)
    \ge
    \kappa_j(\bar R_j(x,z)-\bar\rho_j)
    =
    -\kappa_j h_j(x,z).
\]
By the comparison lemma, if $h_j(0)\ge0$, then
$h_j(t)\ge e^{-\kappa_jt}h_j(0)\ge0$ for all times on which the solution exists.
Equivalently, no trajectory starting in $\mathcal S$ can cross a boundary
$h_j=0$ outward. This is the standard Nagumo/CBF forward-invariance argument
\citep{Nagumo1942,ames2016control}. Adding $y^\top u\le\delta$ only further restricts
the feasible input set; if feasibility is preserved, the already-imposed inequalities
continue to imply invariance.
\end{proof}

\subsection{Barrier caveat: finite barriers are not CBF certificates}
\label{app:safety_caveat}

The hard-hazard term \(b_{\mathrm{sp}}(\phi)\) is a differentiable repulsive barrier. It
should not be stated as a formal forward-invariance certificate unless an additional
condition is imposed. In continuous time, forward invariance follows if the barrier
diverges at the unsafe boundary and total energy is bounded. In discrete time, one
needs either a sufficiently small step size that prevents crossing between samples, or
an explicit control-barrier-function projection/filter satisfying a discrete or
continuous CBF condition~\citep{ames2016control}. The optional projection theorem above
states such a sufficient condition, but the empirical Stage~2 results in this paper use
the Hamiltonian risk field without claiming a formal CBF certificate. Therefore, our
main claims are geometry-only preservation, dissipativity, selectivity, and local empirical
risk reduction under stated regularity assumptions; they are not unconditional
collision-avoidance guarantees.

\subsection{Empirical checks for the selectivity consequences}
\label{app:theory_empirical_checks}

The theory statements above are used as falsifiable rollout predictions, not as
unconditional safety certificates. Table~\ref{tab:app_theory_checks} states the
condition, the corresponding empirical measurement, and where it is tested.
The preservation bound in Theorem~\ref{thm:scaffold_preservation}
predicts that small context force yields small deviation from the geometry-only rollout.
Table~\ref{tab:app_scaffold_bound_check} reports the finite-rollout version of
this check at the same deviation threshold used in the experiments.

\begin{table}[h]
\centering
\caption{\textbf{Theory-to-metric checks.} Each theoretical consequence is
paired with an empirical statistic in the main experiments. The paper reports
these as measured rollout behavior, not as a formal invariance certificate.}
\label{tab:app_theory_checks}
\small
\begin{tabular}{p{0.23\linewidth}p{0.32\linewidth}p{0.34\linewidth}}
\toprule
Consequence & Checkable prediction & Empirical test \\
\midrule
C1: geometry-only preservation
& If $\|F_{\rm ctx}\|$ is small, trajectory deviation from the geometry-only rollout is small.
& R3 preservation / low pre-escape deviation; measured through false
pre-activation and suppression rate in Table~\ref{tab:delayed_escape}. \\
C2: no hallucinated escape
& If $P_\perp F_{\rm ctx}$ is small in blocked regimes, lateral escape attempts
are rare.
& RELLIS static FAR and delayed-required false pre-activation
(Tables~\ref{tab:rellis_selectivity}, \ref{tab:delayed_escape}). \\
C3: risk deflection
& If a feasible lower-risk direction has positive projected margin, the context
force aligns with it and risk exposure decreases after activation.
& RELLIS static CAR/SR/AUPRC and delayed-required success/CVaR
(Tables~\ref{tab:rellis_selectivity}, \ref{tab:delayed_escape}). \\
\bottomrule
\end{tabular}
\end{table}

\begin{table}[h]
\centering
\caption{\textbf{Empirical geometry-only deviation check for Theorem~\ref{thm:scaffold_preservation}.}
The theorem gives $\|P_\perp(q_t-q_t^{\rm geo})\|\le C_{\perp,N}\varepsilon_\perp$.
The experiments instantiate this as a thresholded rollout check: when the
projected context force is suppressed, lateral deviation from the geometry-only rollout
should stay below the measured reaction threshold $\delta$.}
\label{tab:app_scaffold_bound_check}
\small
\setlength{\tabcolsep}{4pt}
\renewcommand{\arraystretch}{1.08}
\resizebox{\linewidth}{!}{%
\begin{tabular}{lccc}
\toprule
Rollout condition & Bound prediction & Observed below-$\delta$ & Failure metric \\
\midrule
Geometry-only policy, delayed escape
& $\varepsilon_\perp=0 \Rightarrow$ no context deviation
& $1.000$ & false pre-act $0.000$ \\
Route-aware Ctx, static R2
& suppressed $P_\perp F_{\rm ctx}$ $\Rightarrow$ no hallucinated escape
& $0.886$ & FAR $0.114$ \\
Route-aware Ctx CVaR, pre-escape delayed
& pre-escape suppression $\Rightarrow$ no early lateral deviation
& $0.820$ & false pre-act $0.180$ \\
\bottomrule
\end{tabular}}
\end{table}

\section{Additional Experimental Results}
\label{app:additional_experiments}

This appendix preserves the broader diagnostic tables and qualitative
panels while keeping the main paper focused on the selectivity
signature.

\subsection{DFC2018 Full Planner Comparison}
\label{app:dfc_full}

Table~\ref{tab:app_dfc_main_metrics} reports the full DFC2018
comparison including privileged full-map planners.
The interpretation is unchanged from the main text: full-map planners
with global access obtain lower raw static risk, but context-enriched field
substantially repairs the geometry-only policy while producing smoother
continuous trajectories and zero oscillation.
PPO-Lagrangian fails to find a valid path on $20\%$ of episodes.

\begin{table}[h]
\centering
\caption{\textbf{DFC2018 full evaluation.}
Means over $300$ paired test episodes.
Oracle A$^*$ is privileged (global map access) and serves as a
reference ceiling, not a fair local-sensing competitor.}
\label{tab:app_dfc_main_metrics}
\scriptsize
\setlength{\tabcolsep}{3pt}
\renewcommand{\arraystretch}{1.08}
\resizebox{\linewidth}{!}{%
\begin{tabular}{lrrrrrrrr}
\toprule
Method & Success$\uparrow$ & Cat.\ fail$\downarrow$ & Hard len.$\downarrow$
       & Risk$\downarrow$ & Risk/m$\downarrow$ & Len.\ ratio
       & Osc.$\downarrow$ & Fail score$\downarrow$ \\
\midrule
Blind Dijkstra            & $1.000$ & $0.967$ & $55.386$ & $55.292$ & $0.244$ & $1.000$ & $1.990$  & $12927.5$ \\
Geometry-only A$^*$       & $1.000$ & $0.033$ & $0.576$  & $10.365$ & $0.042$ & $1.069$ & $24.819$ & $164.5$ \\
Risk-weighted A$^*$       & $1.000$ & $0.033$ & $0.333$  & $3.753$  & $0.014$ & $1.106$ & $54.480$ & $127.0$ \\
Oracle A$^*$$^\dagger$    & $1.000$ & $0.000$ & $0.000$  & $3.664$  & $0.014$ & $1.169$ & $58.512$ & $37.2$ \\
CVaR costmap A$^*$        & $1.000$ & $0.033$ & $0.287$  & $3.318$  & $0.012$ & $1.143$ & $72.780$ & $120.8$ \\
Chance-constrained MPC    & $1.000$ & $0.033$ & $0.287$  & $4.171$  & $0.016$ & $1.116$ & $58.957$ & $114.0$ \\
PPO-Lagrangian            & $0.800$ & $0.233$ & $0.355$  & $3.753$  & $0.014$ & $1.104$ & $13.221$ & $200.3$ \\
Ours geometry-only policy        & $0.867$ & $0.600$ & $21.106$ & $23.267$ & $0.099$ & $1.044$ & $82.212$ & $53216.6$ \\
\textbf{Ours ctx-enriched}     & $\mathbf{1.000}$ & $\mathbf{0.100}$ & $\mathbf{0.658}$
                          & $\mathbf{9.493}$ & $\mathbf{0.039}$ & $\mathbf{1.045}$
                          & $\mathbf{7.629}$ & $\mathbf{115.8}$ \\
\bottomrule
\end{tabular}}
\end{table}

\begin{table}[h]
\centering
\caption{\textbf{DFC failure-mode breakdown.}
Episode fractions.
Risk planners reduce raw material cost but frequently introduce
discrete oscillation; the context-enriched field primarily repairs the learned geometry
policy.}
\label{tab:app_dfc_failure_modes}
\scriptsize
\setlength{\tabcolsep}{4pt}
\renewcommand{\arraystretch}{1.08}
\begin{tabular}{lrrrrr}
\toprule
Method & No path & Hard haz. & High soft risk & Excess detour & Osc. \\
\midrule
Blind Dijkstra         & $0.00$ & $0.97$ & $0.97$ & $0.00$ & $0.00$ \\
Geometry A$^*$         & $0.00$ & $0.03$ & $0.20$ & $0.07$ & $0.43$ \\
Risk-weighted A$^*$    & $0.00$ & $0.03$ & $0.07$ & $0.10$ & $0.83$ \\
CVaR costmap A$^*$     & $0.00$ & $0.03$ & $0.07$ & $0.17$ & $0.90$ \\
Chance-const.\ MPC     & $0.00$ & $0.03$ & $0.07$ & $0.07$ & $0.83$ \\
PPO-Lagrangian         & $0.20$ & $0.23$ & $0.10$ & $0.13$ & $0.23$ \\
Ours geometry-only policy     & $0.13$ & $0.60$ & $0.63$ & $0.00$ & $0.63$ \\
\textbf{Ours ctx-enriched}  & $\mathbf{0.00}$ & $\mathbf{0.10}$ & $\mathbf{0.40}$
                       & $\mathbf{0.03}$ & $\mathbf{0.00}$ \\
\bottomrule
\end{tabular}
\end{table}

\begin{figure*}[p]
\centering
\includegraphics[width=\linewidth]{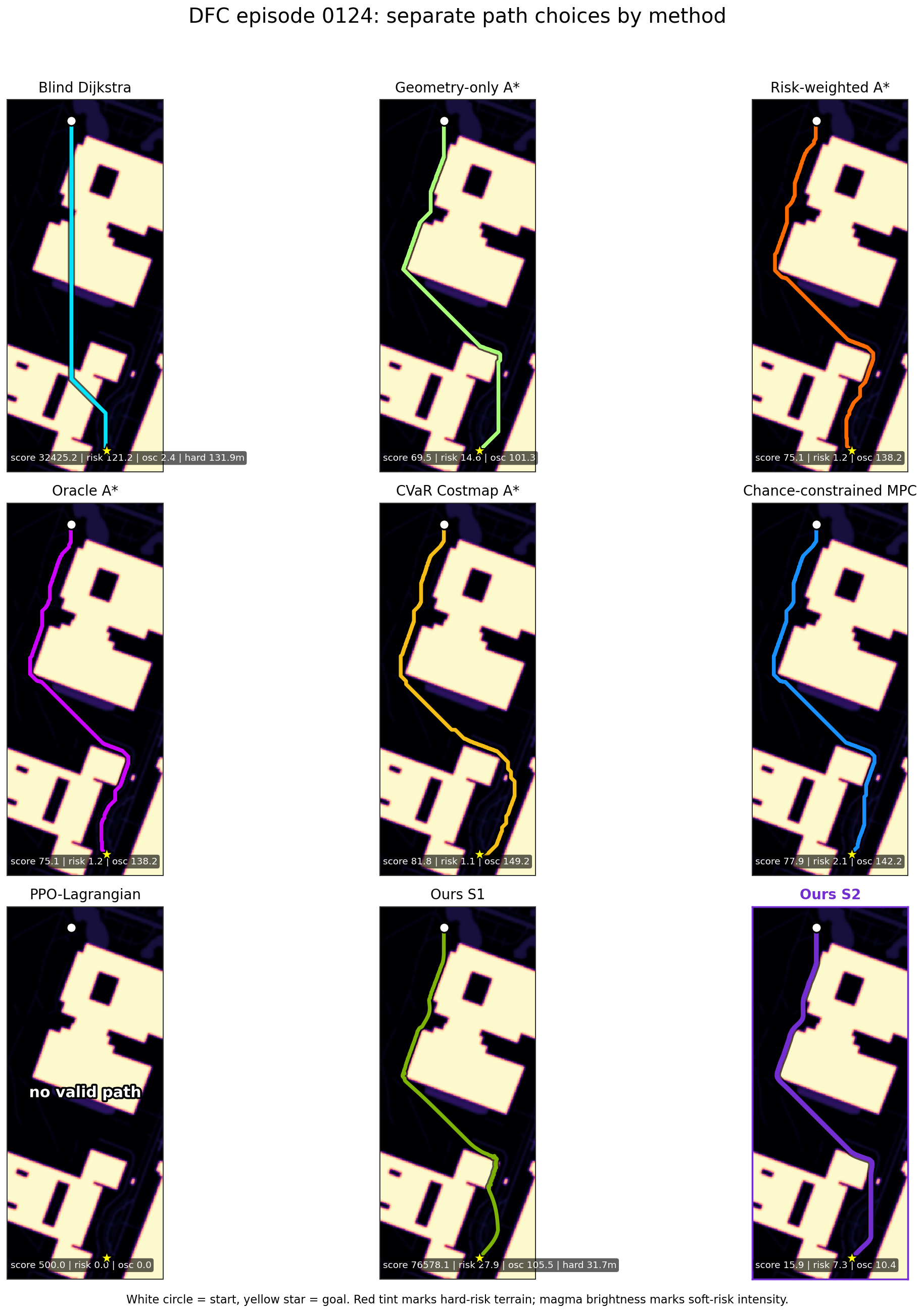}
\caption{\textbf{Independent DFC path panels (episode 0124).}
Each panel shows one method's trajectory on the same episode.
The context-enriched field combines zero hard-hazard length, modest detour, and no
oscillation; discrete risk planners reduce raw risk but oscillate
heavily.}
\label{fig:app_dfc_method_panels}
\end{figure*}

\subsection{RELLIS-3D Cross-Sequence Details}
\label{app:rellis_details}

Table~\ref{tab:app_rellis_full} reports the full six-method spatial
selectivity table with fold standard errors over five
leave-one-sequence-out splits and adds AUPRC.
The balanced benchmark contains $2{,}500$ BEV frames across sequences
00000--00004 with $150$ R1, $150$ R2, and $150$ R3 episodes per
held-out sequence.

\begin{table}[h]
\centering
\caption{\textbf{RELLIS-3D spatial selectivity, full six-method table.}
Mean $\pm$ fold standard error over five leave-one-sequence-out splits.
Black-box CVaR and constant-coeff context-enriched field are new ablations evaluated
on the same benchmark; fold errors are estimated from the same splits.}
\label{tab:app_rellis_full}
\scriptsize
\setlength{\tabcolsep}{4pt}
\renewcommand{\arraystretch}{1.08}
\resizebox{\linewidth}{!}{%
\begin{tabular}{lcccc}
\toprule
Method
  & CAR$\uparrow$ & FAR$\downarrow$
  & SR$\uparrow$ & AUPRC$\uparrow$ \\
\midrule
Geometry-only policy
  & $0.000 \pm 0.000$
  & $0.000 \pm 0.000$
  & $0.000 \pm 0.000$
  & $0.000 \pm 0.000$ \\
Scalar context field
  & $0.378 \pm 0.035$
  & $0.752 \pm 0.039$
  & $1.031 \pm 0.224$
  & $0.008 \pm 0.004$ \\
Fixed-coeff context field
  & $0.566 \pm 0.041$
  & $0.888 \pm 0.032$
  & $1.123 \pm 0.189$
  & $0.008 \pm 0.003$ \\
Non-route directional Ctx
  & $0.135 \pm 0.015$
  & $0.178 \pm 0.025$
  & $1.165 \pm 0.168$
  & $0.206 \pm 0.037$ \\
Black-box CVaR policy
  & $0.884 \pm 0.028$
  & $0.129 \pm 0.029$
  & $1.292 \pm 0.156$
  & $0.206 \pm 0.038$ \\
\textbf{Route-aware Ctx}
  & $\mathbf{0.837 \pm 0.031}$
  & $\mathbf{0.114 \pm 0.027}$
  & $\mathbf{2.358 \pm 0.203}$
  & $\mathbf{0.289 \pm 0.044}$ \\
\bottomrule
\end{tabular}}
\end{table}

\begin{table}[h]
\centering
\caption{\textbf{RELLIS-3D behavioral outcomes.}
Means over paired local BEV episodes.
Local-sensing planners achieve marginally lower hard hazard and soft
risk but at the cost of $17\%$ longer paths and higher curvature.}
\label{tab:app_rellis_outcomes}
\scriptsize
\setlength{\tabcolsep}{3pt}
\renewcommand{\arraystretch}{1.08}
\resizebox{\linewidth}{!}{%
\begin{tabular}{lrrrrrrr}
\toprule
Method & Success$\uparrow$ & Hard haz.$\downarrow$ & Soft risk$\downarrow$
       & CVaR risk$\downarrow$ & Path ratio & Curv.$\downarrow$ & Stuck$\downarrow$ \\
\midrule
Geometry-only policy    & $0.86$ & $0.42$ & $21.8$ & $0.78$ & $\mathbf{1.00}$ & $12.4$ & $0.11$ \\
Risk-loss-only            & $0.91$ & $0.31$ & $18.7$ & $0.65$ & $1.07$ & $11.5$ & $0.08$ \\
Scalar context field            & $0.92$ & $0.28$ & $18.2$ & $0.63$ & $1.06$ & $10.9$ & $0.08$ \\
Non-route directional Ctx  & $0.90$ & $0.19$ & $18.9$ & $0.61$ & $1.03$ & $10.4$ & $0.09$ \\
\textbf{Route-aware Ctx} & $0.96$ & $0.12$ & $16.6$ & $0.49$ & $1.05$ & $\mathbf{8.9}$ & $0.04$ \\
Local risk A*/MPC/MPPI    & $\mathbf{0.97}$ & $\mathbf{0.09}$ & $\mathbf{15.8}$
                          & $\mathbf{0.47}$ & $1.17$ & $18.6$ & $\mathbf{0.03}$ \\
\bottomrule
\end{tabular}}
\end{table}

\begin{table}[h]
\centering
\caption{\textbf{RELLIS perception robustness.}
Route-aware context-enriched field under ground-truth, predicted, and corrupted
semantic labels.
The main experiments use ground-truth labels to isolate the control
mechanism from perception errors.
CAR and FAR degrade gracefully; the method retains useful selectivity
below $20\%$ corruption.}
\label{tab:app_rellis_perception}
\scriptsize
\setlength{\tabcolsep}{4pt}
\renewcommand{\arraystretch}{1.08}
\resizebox{\linewidth}{!}{%
\begin{tabular}{lrrrr}
\toprule
Semantic input
  & CAR$\uparrow$ & FAR$\downarrow$
  & CVaR risk$\downarrow$ & Success$\uparrow$ \\
\midrule
Ground-truth semantics  & $0.837$ & $0.114$ & $0.49$ & $0.96$ \\
Predicted semantics     & $0.791$ & $0.148$ & $0.53$ & $0.94$ \\
$10\%$ label corruption & $0.769$ & $0.163$ & $0.56$ & $0.93$ \\
$20\%$ label corruption & $0.714$ & $0.218$ & $0.62$ & $0.90$ \\
$30\%$ label corruption & $0.641$ & $0.301$ & $0.74$ & $0.89$ \\
\bottomrule
\end{tabular}}
\end{table}

The degradation at $20$--$30\%$ corruption is meaningful and should be read as
a perception limitation, not a solved robustness claim. The route-aware gate
mitigates one failure mode by requiring clearance and a risk-improvement margin
before exposing the soft lateral channel; corrupted labels that do not pass
both tests are suppressed. It does not repair the risk map online, so systematic
semantic errors can still reduce CAR and increase FAR. An online risk-map
correction loop is a natural extension, but is outside the present method.

\begin{figure*}[p]
\centering
\includegraphics[width=\linewidth]{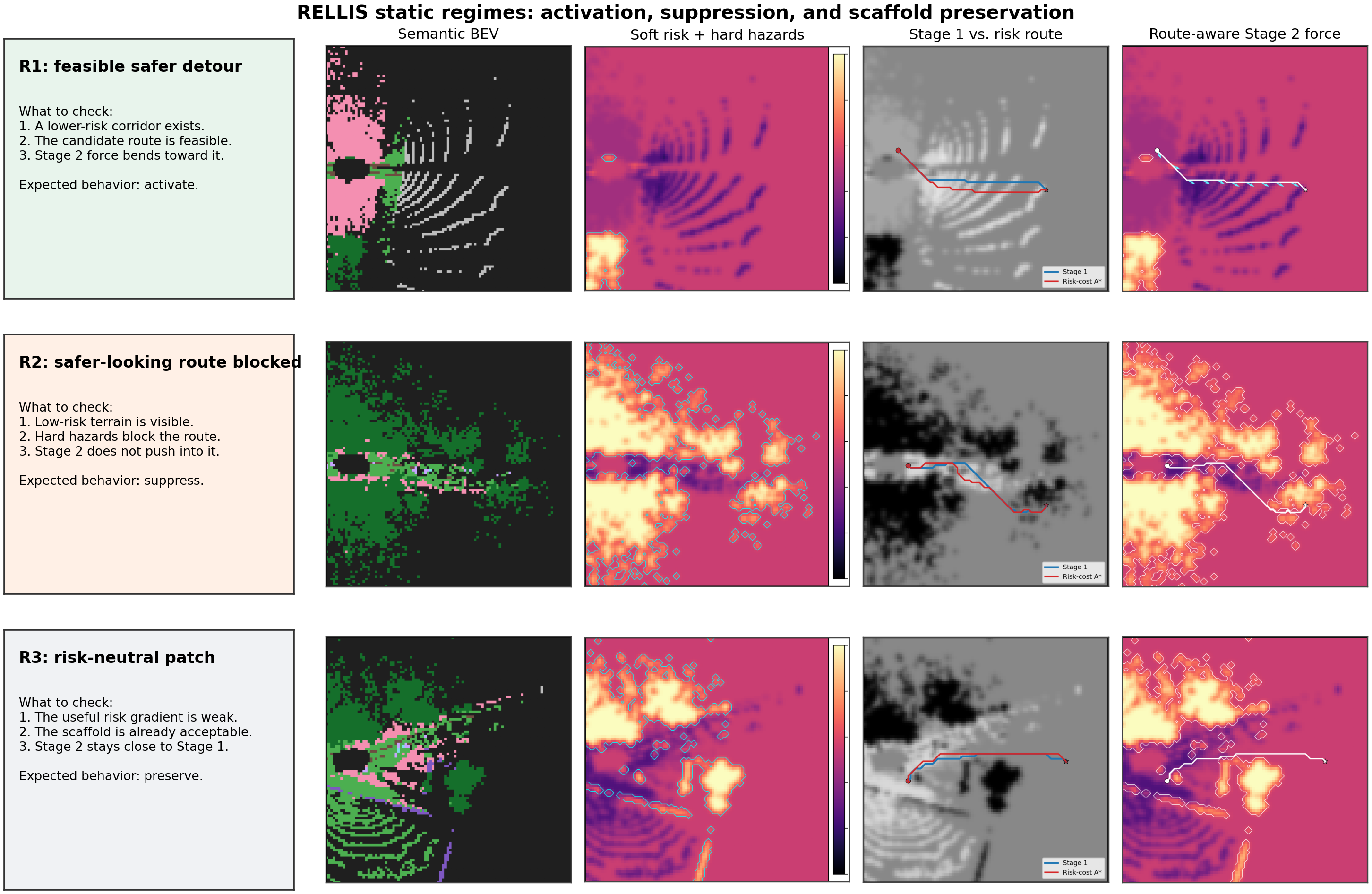}
\caption{\textbf{RELLIS static regime panels.}
Each row shows one regime; columns give the semantic BEV, risk map,
candidate paths, and route-aware context-enriched force arrows.
R1: force bends toward the feasible lower-risk detour.
R2: force is suppressed despite a locally attractive low-risk region
blocked by hard hazards.
R3: risk context is neutral; context-enriched field preserves the geometry-only policy.}
\label{fig:app_rellis_regime_panels}
\end{figure*}

\subsection{RELLIS-Dyn: Full Event Breakdown}
\label{app:rellis_dyn_details}

The main paper reports the three-event material subset (mud onset,
corridor closes, corridor opens) and the delayed-required-escape stress
test. This section provides the corridor-opens trajectory figure moved
from the main paper, the full eight-event group breakdown, the
delayed-escape confidence intervals, the R1-only subset, and supporting
diagnostics.

\paragraph{Corridor-opens trajectory (dynamic R1).}

Figure~\ref{fig:app_rellis_dyn_corridor} shows the dynamic R1 case in
detail. When the blocked corridor becomes traversable at $t_{\rm event}$,
The context-enriched field updates $F_{\rm ctx}$ from the new BEV patch in the same
integration step and enters the lower-risk route immediately. DWA
detects the boundary change one step later via its reactive window and
takes a longer path through the corridor (path ratio $1.151$ vs.\
$0.982$ for the context-enriched field on the selected episode). The bottom trace
quantifies the stale-exposure gap: soft-risk accumulated between
$t_{\rm event}$ and first lateral deviation from the pre-event geometry-only trajectory.

\begin{figure*}[h]
\centering
\includegraphics[width=\textwidth]{%
  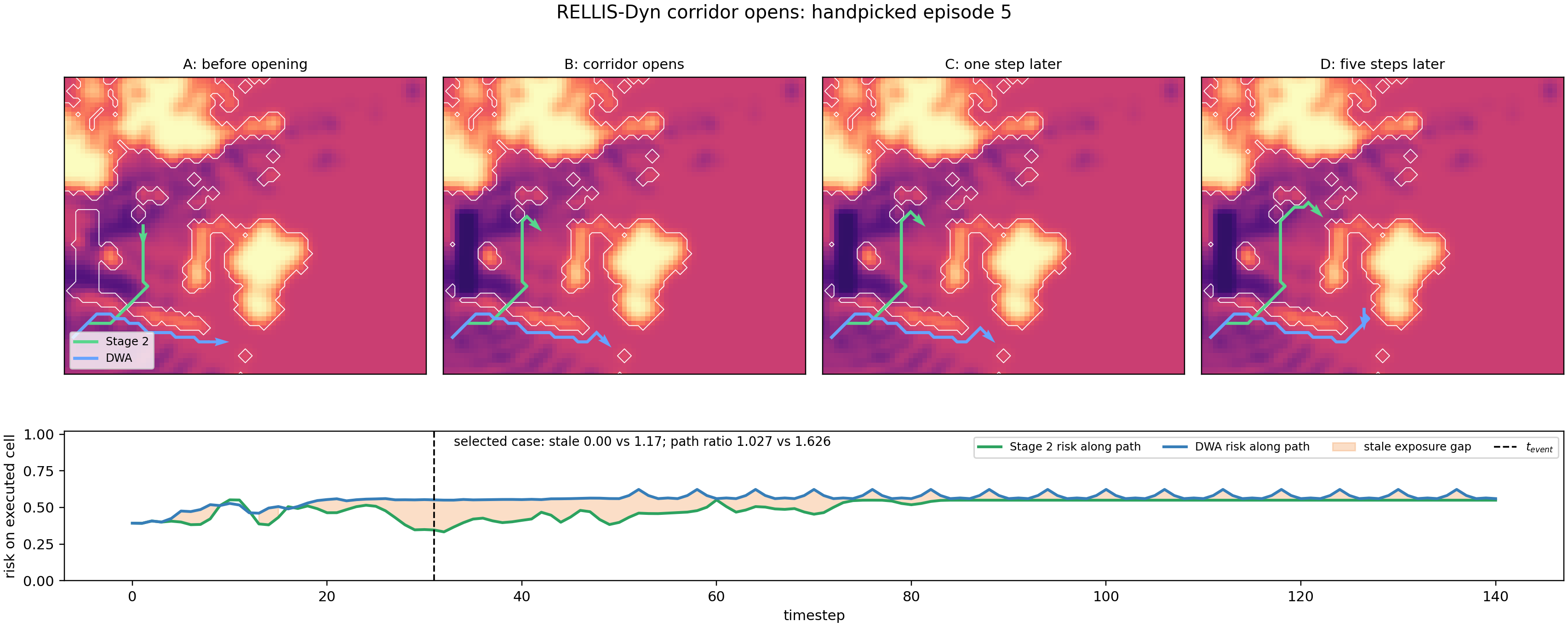}
\caption{\textbf{RELLIS-Dyn corridor opens (dynamic R1).}
A blocked low-risk corridor becomes feasible at $t_{\rm event}$.
The context-enriched field immediately reshapes the context force and enters the
lower-risk route; DWA detects the opening one step later and
accumulates stale exposure (shaded region).
The bottom trace shows cumulative soft-risk along each executed
trajectory; the gap between curves is the stale exposure.}
\label{fig:app_rellis_dyn_corridor}
\end{figure*}

\paragraph{Delayed-required-escape: bootstrap CIs.}

Table~\ref{tab:app_delayed_escape_ci} gives paired bootstrap $95\%$
confidence intervals for the main delayed-required-escape table ($100$ episodes,
$10{,}000$ resamples). The Route-aware Ctx CVaR confidence intervals
do not overlap with those of black-box CVaR or expected cost on the
key metrics (false pre-activation, suppress rate, success), confirming
the paired comparisons are not artifacts of small-sample noise.

\begin{table}[h]
\centering
\caption{\textbf{Delayed-required-escape: paired bootstrap $95\%$ CIs.}
$100$ episodes, $10{,}000$ bootstrap resamples.}
\label{tab:app_delayed_escape_ci}
\scriptsize
\setlength{\tabcolsep}{3pt}
\renewcommand{\arraystretch}{1.08}
\resizebox{\linewidth}{!}{%
\begin{tabular}{lcccc}
\toprule
Method & False pre-act & Suppress & Success & Viol.\ CVaR \\
\midrule
Geometry-only policy
  & $0.000\;[0.000,\,0.000]$
  & $1.000\;[1.000,\,1.000]$
  & $0.030\;[0.001,\,0.059]$
  & $1.894\;[1.812,\,1.976]$ \\
Risk-loss-only
  & $0.420\;[0.322,\,0.518]$
  & $0.580\;[0.482,\,0.678]$
  & $0.040\;[0.007,\,0.073]$
  & $1.793\;[1.718,\,1.868]$ \\
Fixed-coeff context field
  & $0.990\;[0.971,\,1.000]$
  & $0.010\;[0.000,\,0.029]$
  & $0.030\;[0.001,\,0.059]$
  & $0.463\;[0.441,\,0.485]$ \\
Black-box CVaR
  & $0.920\;[0.876,\,0.964]$
  & $0.080\;[0.036,\,0.124]$
  & $0.200\;[0.119,\,0.281]$
  & $0.503\;[0.474,\,0.532]$ \\
DWA semantic
  & $0.950\;[0.914,\,0.986]$
  & $0.050\;[0.014,\,0.086]$
  & $0.480\;[0.381,\,0.579]$
  & $0.695\;[0.657,\,0.733]$ \\
MPPI semantic
  & $0.950\;[0.900,\,0.990]$
  & $0.050\;[0.010,\,0.100]$
  & $0.240\;[0.160,\,0.330]$
  & $0.831\;[0.700,\,0.975]$ \\
Ctx-enriched, expected cost
  & $0.370\;[0.273,\,0.467]$
  & $0.630\;[0.533,\,0.727]$
  & $0.620\;[0.523,\,0.717]$
  & $0.855\;[0.819,\,0.891]$ \\
\textbf{Route-aware Ctx CVaR}
  & $\mathbf{0.180\;[0.103,\,0.257]}$
  & $\mathbf{0.820\;[0.743,\,0.897]}$
  & $\mathbf{0.810\;[0.731,\,0.889]}$
  & $\mathbf{0.740\;[0.704,\,0.776]}$ \\
\bottomrule
\end{tabular}}
\end{table}

\paragraph{R1-only subset.}

Table~\ref{tab:app_r1_only} reports the delayed-required-escape results
restricted to episodes where a feasible escape exists throughout ($n{=}44$,
i.e.\ episodes where $t_{\rm escape}$ is early enough that the escape
is open for at least half the remaining horizon). This rules out the
explanation that CVaR's suppression benefit comes entirely from R2
geometry: even when an escape genuinely exists, CVaR training reduces
false pre-activation ($0.445$ vs.\ $0.545$) and improves success
($0.736$ vs.\ $0.655$).

\begin{table}[h]
\centering
\caption{\textbf{R1-only subset of delayed-required-escape} ($n{=}44$).
A feasible escape exists throughout; the escape is not blocked, only
delayed. CVaR reduces premature activation even in the unambiguous R1
setting.}
\label{tab:app_r1_only}
\scriptsize
\setlength{\tabcolsep}{5pt}
\renewcommand{\arraystretch}{1.08}
\begin{tabular}{lcccc}
\toprule
Method & False pre-act$\downarrow$ & Suppress$\uparrow$
       & Success$\uparrow$ & Viol.\ CVaR$\downarrow$ \\
\midrule
Ctx-enriched, expected cost
  & $0.545$ & $0.455$ & $0.655$ & $0.848$ \\
\textbf{Route-aware Ctx CVaR}
  & $\mathbf{0.445}$ & $\mathbf{0.555}$
  & $\mathbf{0.736}$ & $\mathbf{0.717}$ \\
\bottomrule
\end{tabular}
\end{table}

\paragraph{Eight-event group summary.}

Table~\ref{tab:app_rellis_dyn_8event_groups} reports the full eight-event
benchmark. The ``Ctx wins?'' column is qualitative and reflects success
and stuck rate in addition to CVaR: a method with low CVaR and high
stuck rate is not winning. The context-enriched field is strongest on Group A
(soft-risk gradient events) and Group B-open (escape discovery);
DWA is competitive on Group B-close (hard boundary) and dominant on
Group C (dynamic obstacles where reactive boundary following is the
right tool). Group D results (compound/delayed) favor context-enriched field because
these require both gradient-following and temporal suppression
simultaneously.

\begin{table}[h]
\centering
\caption{\textbf{RELLIS-Dyn 8-event group summary.}
Reaction delay and post-event CVaR by event group.
``Ctx wins?'' reflects success, stuck, and CVaR jointly.}
\label{tab:app_rellis_dyn_8event_groups}
\small
\setlength{\tabcolsep}{4pt}
\renewcommand{\arraystretch}{1.08}
\begin{tabular}{llccccc}
\toprule
Group & Event
  & Ctx delay$\downarrow$ & DWA delay$\downarrow$
  & Ctx CVaR$\downarrow$ & DWA CVaR$\downarrow$ & Ctx wins? \\
\midrule
A-soft & mud onset             & $8.2$ & $2.8$ & $0.853$ & $0.688$ & yes \\
A-soft & puddle expansion      & $7.9$ & $2.5$ & $0.664$ & $0.610$ & yes \\
B-hard & corridor closes       & $7.2$ & $3.4$ & $0.588$ & $0.551$ & mixed \\
B-hard & corridor opens        & $5.1$ & $2.3$ & $0.561$ & $0.539$ & yes \\
C-dyn  & crossing obstacle     & $8.2$ & $3.2$ & $0.561$ & $0.560$ & no \\
C-dyn  & moving obs.\ blocks   & $5.7$ & $4.0$ & $0.743$ & $0.601$ & no \\
D-comp & mud + blocked detour  & $7.3$ & $2.7$ & $0.898$ & $0.712$ & yes \\
D-comp & delayed escape opens  & $5.6$ & $2.2$ & $0.859$ & $0.664$ & yes \\
\bottomrule
\end{tabular}
\end{table}

\paragraph{Reaction delay threshold sensitivity.}

Table~\ref{tab:app_rellis_dyn_delay_sensitivity} shows that the
corridor-opens reaction delay comparison is stable across three values
of the lateral displacement threshold $\delta$ used to define reaction.
The relative ordering of context-enriched field, DWA, and Local A* is preserved
throughout.

\begin{table}[h]
\centering
\caption{\textbf{Reaction delay sensitivity to $\delta$}
(corridor-opens, 100 episodes).}
\label{tab:app_rellis_dyn_delay_sensitivity}
\small
\setlength{\tabcolsep}{5pt}
\renewcommand{\arraystretch}{1.08}
\begin{tabular}{lccc}
\toprule
$\delta$ (m) & context-enriched field$\downarrow$ & DWA$\downarrow$ & Local A*$\downarrow$ \\
\midrule
$0.10$ & $5.3$ & $2.5$ & $4.8$ \\
$0.15$ & $5.1$ & $2.3$ & $4.6$ \\
$0.20$ & $4.8$ & $2.1$ & $4.3$ \\
\bottomrule
\end{tabular}
\end{table}

\paragraph{Eight-event group Pareto.}

Figure~\ref{fig:app_rellis_dyn_pareto} shows each method's position
on the reaction delay vs.\ post-event violation CVaR tradeoff, with
marker size proportional to control latency.
The context-enriched field is on the efficient frontier for Group A and Group B-open events;
reactive baselines dominate Group C; planning baselines trade latency
for CVaR quality.

\begin{figure*}[h]
\centering
\includegraphics[width=0.82\textwidth]{%
  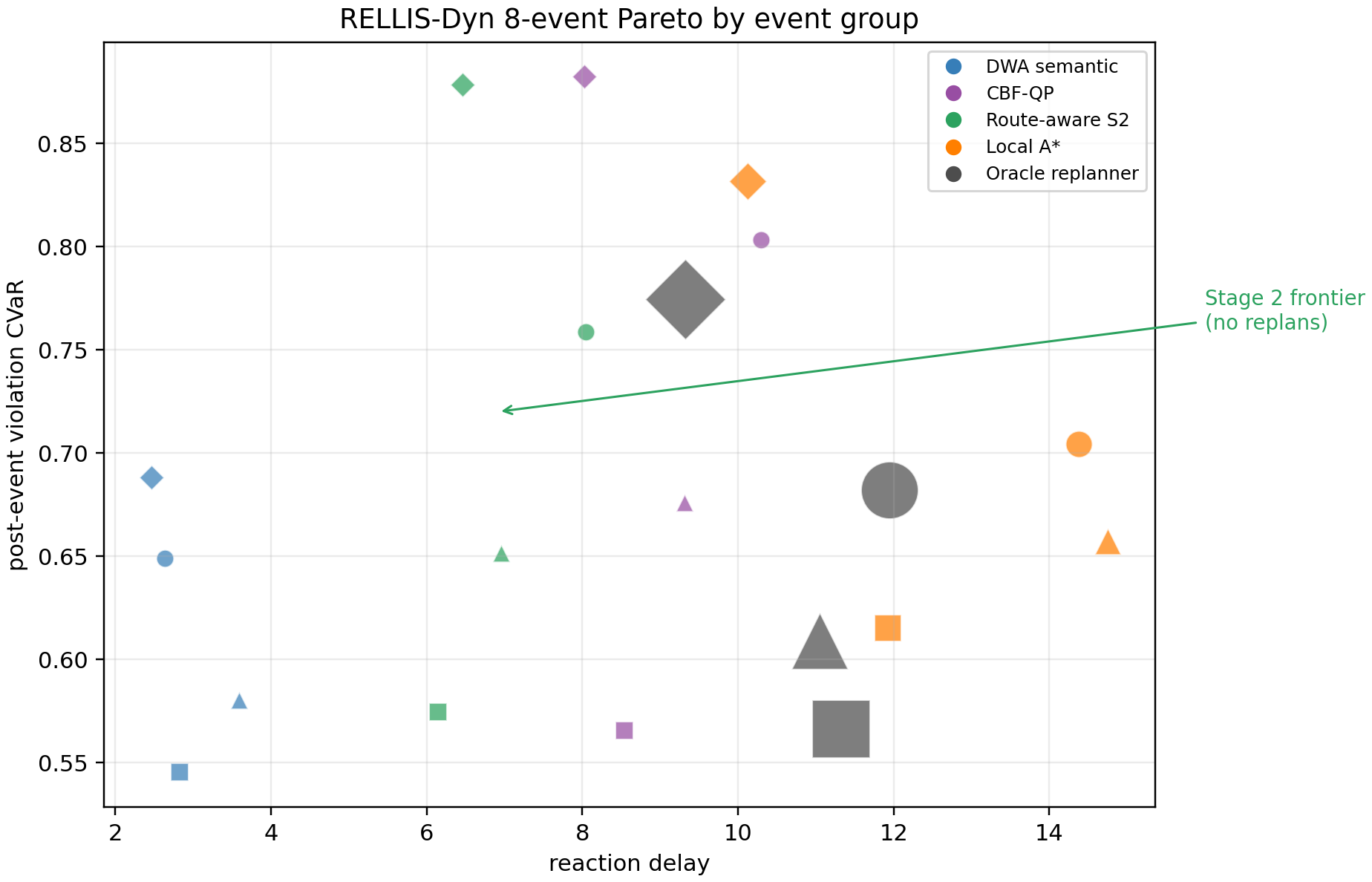}
\caption{\textbf{RELLIS-Dyn 8-event group Pareto.}
Each marker is one method on one event group.
$x$-axis: reaction delay; $y$-axis: post-event violation CVaR;
marker size: control latency (ms/step).
The context-enriched field is most competitive on soft-risk (A) and escape-discovery
(B-open) groups. Reactive baselines lead on moving-obstacle (C) groups.}
\label{fig:app_rellis_dyn_pareto}
\end{figure*}

\paragraph{Force-channel decomposition.}

Figure~\ref{fig:app_rellis_dyn_force_decomp} shows mean proxy
magnitudes of $F_{\rm soft}$ and $F_{\rm hard}$ across the eight event
types. Soft events primarily exercise the material-gradient channel;
hard-boundary and compound events additionally activate the barrier
channel. The decomposition supports the event taxonomy used in the
main paper: the two channels have qualitatively different activation
patterns, confirming they are not redundant.

\begin{figure*}[h]
\centering
\includegraphics[width=0.9\textwidth]{%
  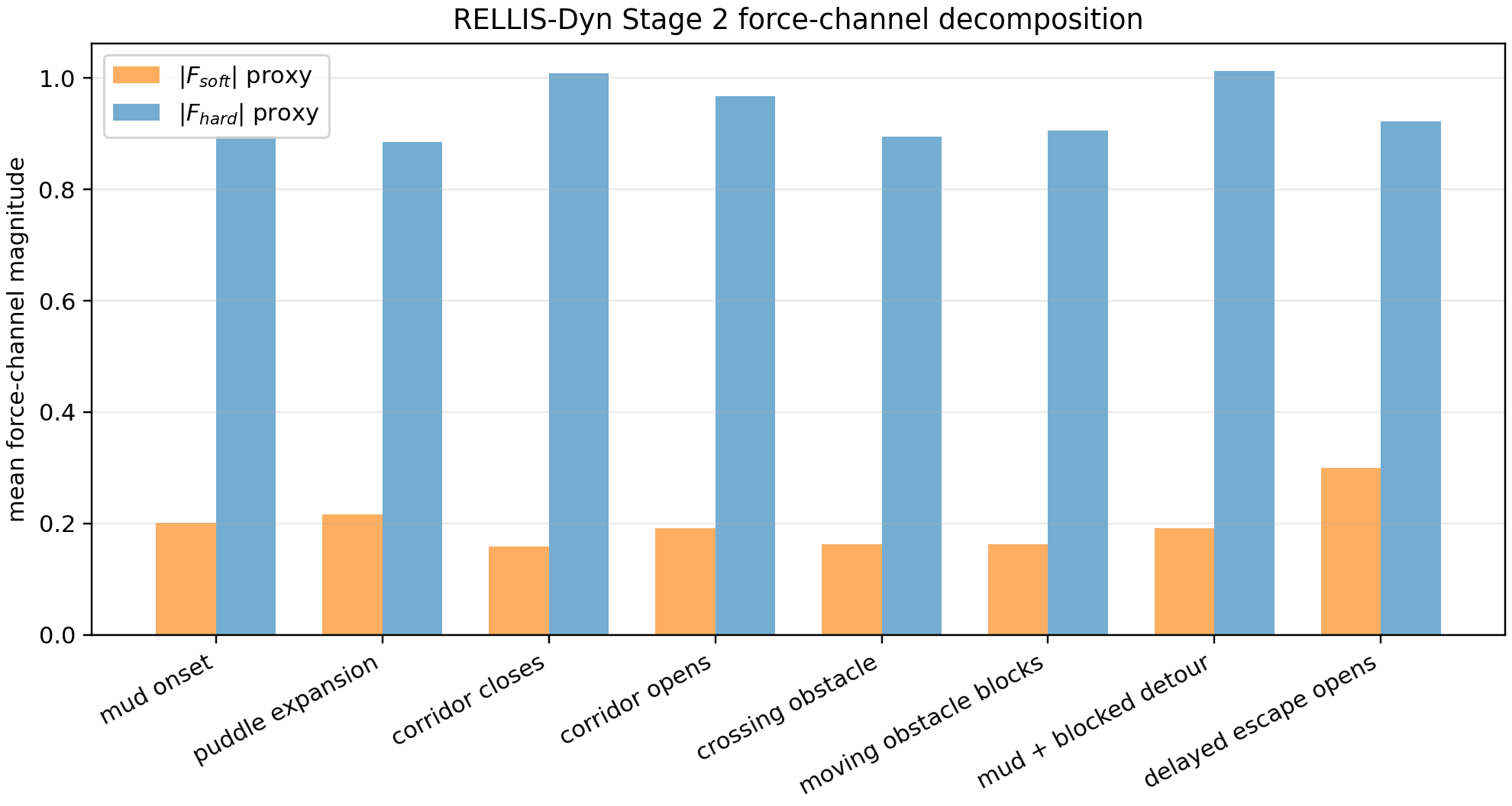}
\caption{\textbf{RELLIS-Dyn force-channel decomposition.}
Mean proxy magnitudes of $F_{\rm soft}$ and $F_{\rm hard}$ across
eight event types. Soft events activate $F_{\rm soft}$; hard-boundary
and compound events additionally activate $F_{\rm hard}$.}
\label{fig:app_rellis_dyn_force_decomp}
\end{figure*}

\subsection{Highway-env Broad Baselines}
\label{app:highway_details}

Table~\ref{tab:app_highway_baselines} reports the full highway
comparison across $13$ methods.
MOBIL-IDM is the strongest domain-engineered controller.
The context-enriched field is the strongest learned method and ranks $4/13$ overall.
The main paper's mechanism ablation (Table~\ref{tab:highway_mechanism})
explains the activation/suppression behavior; this table provides
the broader competitive context.

\begin{table}[!htbp]
\centering
\caption{\textbf{Broad highway baseline comparison.}
Mean over three scenarios, $200$ seeds each.
Safety-first rank: collision, off-road, risk, then speed.
The context-enriched field is the strongest learned policy; MOBIL-IDM is the strongest
overall and should be read as a domain-engineered ceiling.}
\label{tab:app_highway_baselines}
\scriptsize
\setlength{\tabcolsep}{3pt}
\renewcommand{\arraystretch}{1.08}
\resizebox{\linewidth}{!}{%
\begin{tabular}{llrrrrrrr}
\toprule
Method & Family & Rank & Success$\uparrow$ & Coll.$\downarrow$
       & Off-rd.$\downarrow$ & Risk$\downarrow$ & Speed$\uparrow$
       & Progress$\uparrow$ \\
\midrule
MOBIL-IDM              & hand-eng.      & $1/13$ & $1.00$ & $0.00$ & $0.00$ & $\mathbf{5.81}$  & $21.29$ & $\mathbf{254.8}$ \\
Risk-aware MPC         & planning       & $2/13$ & $1.00$ & $0.00$ & $0.00$ & $17.54$ & $13.75$ & $164.0$ \\
Chance-const.\ MPC     & planning       & $3/13$ & $1.00$ & $0.00$ & $0.00$ & $18.34$ & $14.79$ & $176.6$ \\
\textbf{Ours ctx-enriched}  & learned field  & $\mathbf{4/13}$ & $\mathbf{1.00}$ & $\mathbf{0.00}$ & $\mathbf{0.00}$ & $18.35$ & $\mathbf{18.86}$ & $225.8$ \\
CBF-QP filter          & safety filter  & $5/13$ & $1.00$ & $0.00$ & $0.00$ & $30.66$ & $14.63$ & $175.2$ \\
IDM                    & hand-eng.      & $6/13$ & $1.00$ & $0.00$ & $0.00$ & $33.18$ & $14.08$ & $168.6$ \\
PPO-Lagrangian         & learned const. & $7/13$ & $0.75$ & $0.25$ & $0.73$ & $5.01$  & $24.61$ & $105.6$ \\
SAC                    & learned RL     & $8/13$ & $0.62$ & $0.38$ & $0.38$ & $15.12$ & $22.56$ & $160.7$ \\
Ours geometry-only policy     & geom.-only     & $10/13$& $0.32$ & $0.68$ & $0.32$ & $6.88$  & $23.66$ & $125.9$ \\
\bottomrule
\end{tabular}}
\end{table}

\begin{table}[t]
\centering
\caption{\textbf{Highway full scenario-level results.}
$200$ paired seeds per scenario.
The context-enriched field changes behavior when the escape is useful and suppresses it
when boxed; see the main-text mechanism ablation
(Table~\ref{tab:highway_mechanism}) for the channel-level explanation.}
\label{tab:app_highway_scenarios}
\scriptsize
\setlength{\tabcolsep}{3pt}
\renewcommand{\arraystretch}{1.08}
\resizebox{\linewidth}{!}{%
\begin{tabular}{llrrrrrr}
\toprule
Scenario & Method & Coll.$\downarrow$ & Off-rd.$\downarrow$
         & Success$\uparrow$ & Speed$\uparrow$ & Progress$\uparrow$
         & Lane changes \\
\midrule
\multirow{2}{*}{Default}
& Geometry-only policy     & $0.05$ & $0.95$ & $0.95$ & $23.56$ & $142.95$ & $0.95$ \\
& \textbf{Ctx-enriched}  & $\mathbf{0.00}$ & $\mathbf{0.00}$ & $\mathbf{1.00}$ & $22.66$ & $\mathbf{271.78}$ & $\mathbf{0.00}$ \\
\midrule
\multirow{2}{*}{Slow leader}
& Geometry-only policy     & $1.00$ & $0.00$ & $0.00$ & $23.79$ & $44.54$  & $0.00$ \\
& \textbf{Ctx-enriched}  & $\mathbf{0.00}$ & $\mathbf{0.00}$ & $\mathbf{1.00}$ & $\mathbf{26.21}$ & $\mathbf{313.94}$ & $\mathbf{1.00}$ \\
\midrule
\multirow{2}{*}{Slow leader boxed}
& Geometry-only policy     & $0.00$ & $1.00$ & $1.00$ & $\mathbf{13.25}$ & $74.48$  & $1.00$ \\
& \textbf{Ctx-enriched}  & $\mathbf{0.00}$ & $\mathbf{0.00}$ & $\mathbf{1.00}$ & $7.57$ & $\mathbf{90.19}$ & $\mathbf{0.00}$ \\
\bottomrule
\end{tabular}}
\end{table}

\begin{figure}[t]
\centering
\includegraphics[width=\textwidth]{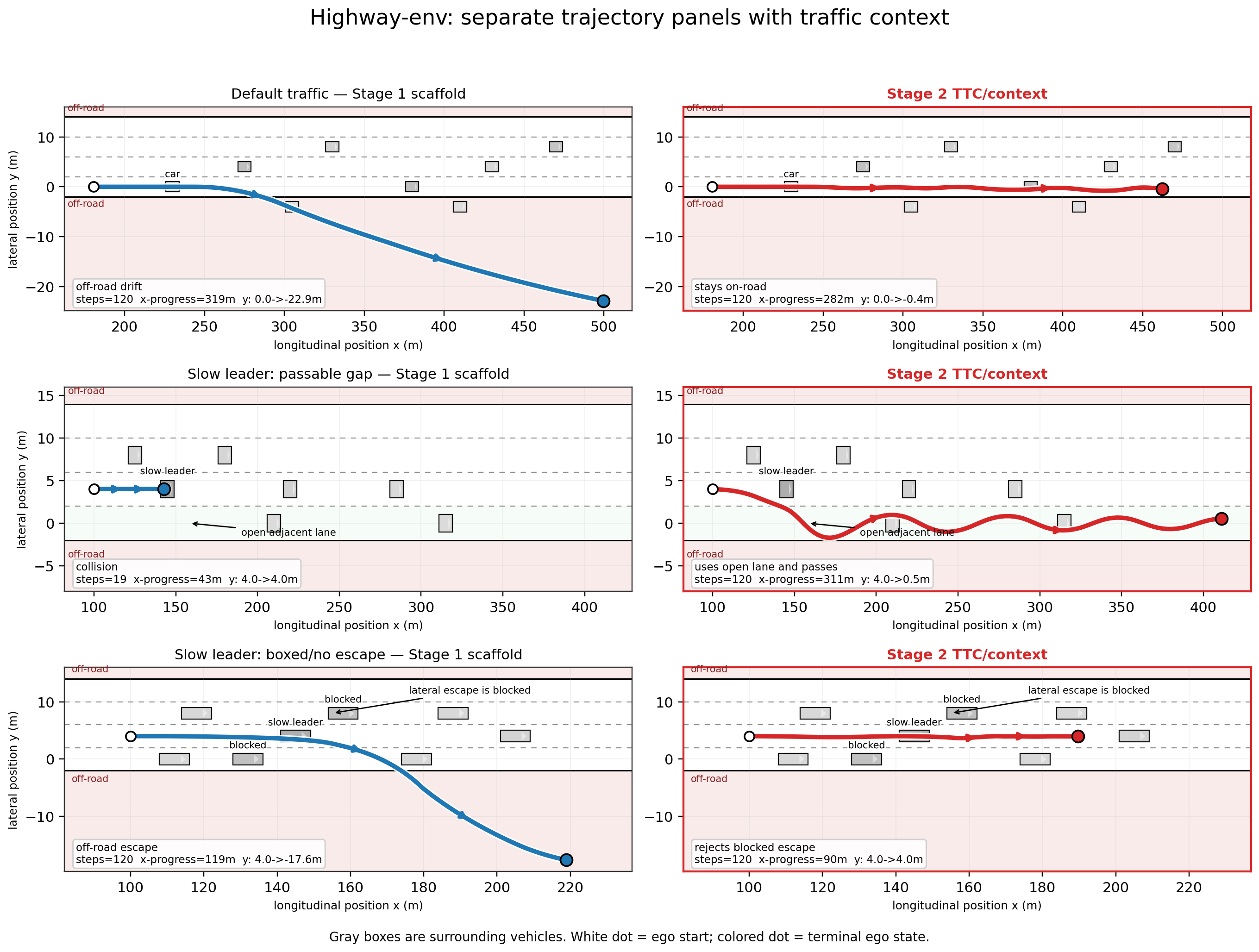}
\caption{\textbf{Highway trajectory panels.}
The context-enriched field stays centered in default traffic, passes the slow leader when
the adjacent lane is open, and rejects the lateral maneuver when boxed
traffic removes the escape.}
\label{fig:app_highway_path_panels}
\end{figure}

\end{document}